\let\tilde\widetilde
\let\bar\overline
\let\hat\widehat
\let\check\widecheck
\newcommand{\Omegabt}{\Omega^{*}}
\newcommand{\ia}{{\bar a}}
\newcommand{\aia}{{a,\ia}}
\newcommand{\iaa}{{\ia,a}}
\newcommand{\iaia}{{\ia,\ia}}
\newcommand{\nab}{\neg{ab}}
\title{\huge Graph Estimation From Multi-attribute Data}
\date{}
\author{
Mladen Kolar\thanks{Machine Learning Department, Carnegie Mellon University, Pittsburgh, PA 15217, USA; e-mail: {\tt
mladenk@cs.cmu.edu}.},~~Han Liu\thanks{Department of Operations Research and Financial Engineering, Princeton University, Princeton, NJ 08544, USA; e-mail: {\tt
hanliu@princeton.edu} }~~and~~Eric P. Xing\thanks{Machine Learning Department, Carnegie Mellon University, Pittsburgh, PA 15217, USA; e-mail: {\tt
epxing@cs.cmu.edu}.}
}
\begin{document}

\maketitle

\begin{abstract}
  Many real world network problems often concern multivariate nodal
  attributes such as image, textual, and multi-view feature vectors on
  nodes, rather than simple univariate nodal attributes. The existing
  graph estimation methods built on Gaussian graphical models and
  covariance selection algorithms can not handle such data, neither
  can the theories developed around such methods be directly
  applied. In this paper, we propose a new principled framework for
  estimating graphs from multi-attribute data. Instead of estimating
  the partial correlation as in current literature, our method
  estimates the partial canonical correlations that naturally
  accommodate complex nodal features.  Computationally, we provide an
  efficient algorithm which utilizes the multi-attribute
  structure. Theoretically, we provide sufficient conditions which
  guarantee consistent graph recovery. Extensive simulation studies
  demonstrate performance of our method under various
  conditions. Furthermore, we provide illustrative applications to
  uncovering gene regulatory networks from gene and protein profiles,
  and uncovering brain connectivity graph from functional magnetic
  resonance imaging data.
\end{abstract}

\textbf{Keywords}: Graphical model selection; Multi-attribute data; Network analysis;
  Partial canonical correlation.

\section{Introduction}

In many modern problems, we are interested in studying a network of
entities with multiple attributes rather than a simple univariate
attribute. For example, when an entity represents a person in a social
network, it is widely accepted that the nodal attribute is most
naturally a vector with many personal information including
demographics, interests, and other features, rather than merely a
single attribute, such as a binary vote as assumed in the current
literature of social graph estimation based on Markov random fields
\citep{Banerjee08,kolar08estimating}. In another example, when an
entity represents a gene in a gene regulation network, modern data
acquisition technologies allow researchers to measure the activities
of a single gene in a high-dimensional space, such as an image of the
spatial distribution of the gene expression, or a multi-view snapshot
of the gene activity such as mRNA and protein abundances, rather than
merely a single attribute such as an expression level, which is
assumed in the current literature on gene graph estimation based on
Gaussian graphical models~\citep{peng09partial}. Indeed, it is
somewhat surprising that existing research on graph estimation remains
largely blinded to the analysis of multi-attribute data that are
prevalent and widely studied in the network community. Existing
algorithms and theoretical analysis relies heavily on covariance
selection using graphical lasso, or penalized pseudo-likelihood. They
can not be easily extended to graphs with multi-variate nodal
attributes.

In this paper, we present a study on graph estimation from
multi-attribute data, in an attempt to fill the gap between the
practical needs and existing methodologies from the literature. Under
a Gaussian graphical model, one assumes that a $p$-dimensional random
vector $X \in \RR^p$ follows a multivariate Gaussian distribution with
the mean $\mu$ and covariance matrix $\Sigma$, with each component of
the vector corresponding to a node in the graph.  Based on $n$
independent and identically distributed observations, one can estimate
an undirected graph $G=(V,E)$, where the node set $V$ corresponds to
the $p$ variables, and the edge set $E$ describes the conditional
independence relationships among the variables, that is, variables
$X_a$ and $X_b$ are conditionally independent given all the remaining
variables if $(a,b)\notin E$. Given multi-attribute data, this
approach is clearly invalid, because it naively translates to
estimating one graph per attribute. A subsequent integration of all
such graphs to a summary graph on the entire dataset may lead to
unclear statistical interpretation.

We consider the following new setting for estimating a multi-attribute
graph.  In this setting, we consider a ``stacked'' long random vector
$X=(X^T_1,...,X^T_p)^T$ where $X_1 \in \RR^{k_1}, \ldots, X_p
\in \RR^{k_p}$ are themselves random vectors that jointly follow a
multivariate Gaussian distribution with mean $\mu = (\mu_1, \ldots,
\mu_p)^T$ and covariance matrix $\Sigma^*$, which is partitioned as 
\begin{equation}
  \label{eq:vector_model}
\Sigma^* =   \left(
    \begin{array}{cccc}
     \Sigma_{11}^*  & \cdots & \Sigma_{1p}^* \\
     \vdots &   \ddots & \vdots \\
     \Sigma_{p1}^* & \cdots   & \Sigma_{pp}^* \\
    \end{array}
  \right),
\end{equation}
with $\Sigma^*_{ij} = \Cov(X_i, X_j)$.  Without loss of generality, we
assume $\mu=0$.  Let $G = (V,E)$ be a graph with the vertex set $V =
\{1,\ldots, p\}$ and the set of edges $E \subseteq V \times V$ that
encodes the conditional independence relationships among $(X_a)_{a\in
  V}$. That is, each node $a \in V$ of the graph $G$ corresponds to
the random vector $X_a$ and there is no edge between nodes $a$ and $b$
in the graph if and only if $X_a$ is conditionally independent of
$X_b$ given all the vectors corresponding to the remaining nodes,
$X_{\nab} = \{X_c\ :\ c \in V\bks\{a,b\}\}$. Such a graph is also
known as a Markov network (of Markov graph), which we shall emphasize
in this paper to compare with an alternative graph over $V$ known as
the association network, which is based on pairwise marginal
independence. Conditional independence can be read from the inverse of
the covariance matrix, as the block corresponding to $X_a$ and $X_b$
will be equal to zero. Let $\Dcal_n = \left\{x_i\right\}_{i=1}^n$ be a
sample of $n$ independent and identically distributed vectors drawn
from $\mathrm{N}(0, \Sigma)$. For a vector $x_i$, we denote $x_{i,a}
\in \RR^{k_a}$ the component corresponding to the node $a \in V$. Our
goal is to estimate the structure of the graph $G$ from the sample
$\Dcal_n$. Note that we allow for different nodes to have different
number of attributes, which is useful in many applications, e.g., when
a node represents a gene pathway in a regulatory network.

Using the standard Gaussian graphical model for univariate nodal
observations, one can estimate the Markov graph for each attribute
individually by estimating the sparsity pattern of the precision
matrix $\Omega = \Sigma^{-1}$.  This is also known as the covariance
selection problem \citep{Dempster72}. For high dimensional problems,
\citet{Meinshausen06} propose a parallel Lasso approach for estimating
Gaussian graphical models by solving a collection of sparse regression
problems. This procedure can be viewed as a pseudo-likelihood based
method. In contrast, \citet{Banerjee08}, \citet{Yuan07}, and
\citet{Friedman08} take a penalized likelihood approach to estimate
the sparse precision matrix $\Omega$. To reduce estimation bias,
\citet{Lam09}, \citet{Jalali12}, and \citet{Shen12} developed the
non-concave penalties to penalize the likelihood function. More
recently, \citet{Yuan10} and \citet{Cai11a} proposed the graphical
Dantzig selector and CLIME, which can be solved by linear programming
and are more amenable to theoretical analysis than the penalized
likelihood approach. Under certain regularity conditions, these
methods have proven to be graph estimation consistent
\citep{Ravikumar11,Yuan10,Cai11a} and scalable software packages, such
as $\mathrm{glasso}$ and $\mathrm{huge}$, were developed to implement
these algorithms \citep{Zhao12b}. However, in the case of
multi-attribute data, it is not clear how to combine estimated graphs
to obtain a single Markov graph reflecting the structure of the
underlying complex system. This is especially the case when nodes in
the graph contain different number of attributes.

In a previous work, \citet{katenka2011multi} proposed a method for
estimating association networks from multi-attribute data using
canonical correlation as a dependence measure between two groups of
attributes. However, association networks are known to confound the
direct interactions with indirect ones as they only represent marginal
associations.  In contrast, we develop a method based on partial
canonical correlation, which give rise to a Markov graph that is
better suited for separating direct interactions from indirect
confounders.  Our work is related to the literature on simultaneous
estimation of multiple Gaussian graphical models under a multi-task
setting
\citep{Guo:09,varoquaux2010brain,honorio2010multi,Chiquet2011inferring,danaher2011joint}.
However, the model given in \eqref{eq:vector_model} is different from
models considered in various multi-task settings and the optimization
algorithms developed in the multi-task literature do not extend to
handle the optimization problem given in our setting.

Unlike the standard procedures for estimating the structure of
Gaussian graphical models (e.g., neighborhood selection
\citep{Meinshausen06} or glasso \citep{Friedman08}), which infer the
partial correlations between pairs of multi-attribute nodes, our
proposed method estimates the {partial canonical correlations} between
pairs of nodes, which leads to a graph estimator over multi-attribute
nodes that bears the same probabilistic independence interpretations
as that of the graph from Gaussian graphical model over univariate
nodes. Under this new framework, the contributions of this paper
include: (i) Computationally, an efficient algorithm is provided to
estimate the multi-attribute Markov graphs; (ii) Theoretically, we
provide sufficient conditions which guarantee consistent graph
recovery; and (iii) Empirically, we apply our procedure to uncover
gene regulatory networks from gene and protein profiles, and to
uncover brain connectivity graph from functional magnetic resonance
imaging data. 

\section{Methodology}

In this section, we propose to estimate the graph by estimating
non-zero partial canonical correlation between the nodes. This leads
to a penalized maximum likelihood objective, for which we develop an
efficient optimization procedure. 

\subsection{Preliminaries}
\label{sec:prelim}

Let $X_a$ and $X_b$ be two multivariate random vectors.  Canonical
correlation is defined between $X_a$ and $X_b$ as
\begin{equation*}
\rho_c(X_a, X_b) = \max_{\ub \in \RR^{k_a}, \vb\in\RR^{k_b}} \Cor(\ub^TX_a, \vb^TX_b).
\end{equation*}
That is, computing canonical correlation between $X_a$ and $X_b$ is
equivalent to maximizing the correlation between two linear
combinations $\ub^TX_a$ and $\vb^TX_b$ with respect to vectors $\ub$
and $\vb$. Canonical correlation can be used to measure association
strength between two nodes with multi-attribute observations. For
example, in \citet{katenka2011multi}, a graph is estimated from
multi-attribute nodal observations by elementwise thresholding the
canonical correlation matrix between nodes, but such a graph estimator
may confound the direct interactions with indirect ones.

In this paper, we exploit the partial canonical correlation to
estimate a graph from multi-attribute nodal observations. A graph
is going to be formed by connecting nodes with non-zero partial
canonical correlation. Let $\hat\Ab = \argmin\ \EE\rbr{\norm{X_a -
  \Ab X_{\nab}}_2^2}$ and $\hat\Bb = \argmin\ \EE\rbr{\norm{X_b -
  \Bb X_{\nab}}_2^2}$, then the partial canonical correlation
between $X_a$ and $X_b$ is defined as 
\begin{equation}
\label{eq:partial_correlation}
  \begin{aligned}
\rho_c(X_a, X_b; X_{\nab}) 
= \max_{\ub\in\RR^{k_a}, \vb\in\RR^{k_b}} \Cor\{\ub^T(X_a
- \hat\Ab X_{\nab}), \vb^T(X_b - \hat\Bb X_{\nab})\},    
  \end{aligned}
\end{equation}
 that is, the partial canonical correlation between $X_a$ and $X_b$
is equal to the canonical correlation between the residual vectors of
$X_a$ and $X_b$ after the effect of $X_{\nab}$ is
removed \citep{rao_partial_1969}.

Let $\Omega^*$ denote the precision matrix under the model in
\eqref{eq:vector_model}. Using standard results for the multivariate
Gaussian distribution (see also Equation (7) in
\citet{rao_partial_1969}), a straightforward calculation shows
that\footnote{Calculation given in Appendix~\ref{sec:appendix_eq_3}}
\begin{equation}
  \label{eq:comp_cov}
  \rho_c(X_a, X_b; X_{\nab}) \neq 0
  \quad 
  \text{if and only if}
  \quad
   \max_{\ub\in\RR^{k_a}, \vb\in\RR^{k_b}}\ \ub^T\Omega_{ab}^*\vb \neq 0.
\end{equation}
This implies that estimating whether the partial canonical correlation
is zero or not can be done by estimating whether a block of the
precision matrix is zero or not. Furthermore, under the model in
\eqref{eq:vector_model}, vectors $X_a$ and $X_b$ are conditionally
independent given $X_{\nab}$ if and only if partial canonical
correlation is zero. A graph built on this type of inter-nodal
relationship is known as a {Markov graph}, as it captures both
local and global Markov properties over all arbitrary subsets of nodes
in the graph, even though the graph is built based on pairwise
conditional independence properties. In
$\S$\ref{sec:estimation}, we use the above observations to design
an algorithm that estimates the non-zero partial canonical correlation
between nodes from data $\Dcal_n$ using the penalized maximum
likelihood estimation of the precision matrix.

Based on the relationship given in   \eqref{eq:comp_cov}, we can
motivate an alternative method for estimating the non-zero partial
canonical correlation. Let $\bar{a} = \{b\ :\ b\in V\bks\{a\}\}$
denote the set of all nodes minus the node $a$. Then
\begin{equation*}
  \label{eq:population_block_regression}
  \EE\rbr{X_a\mid X_\ia = x_\ia} =
  \Sigma_\aia^*\Sigma_\iaia^{*,-1}
  x_\ia.
\end{equation*}
Since $\Omegabt_\aia =
-(\Sigma_{aa}^*-\Sigma_\aia^*\Sigma_\iaia^{*,-1}\Sigma_\iaa^*)^{-1}
\Sigma_\aia^*\Sigma_\iaia^{*,-1}$, we observe that a zero block
$\Omega_{ab}$ can be identified from the regression coefficients when
each component of $X_a$ is regressed on $X_\ia$. We do not build
an estimation procedure around this observation, however, we note that
this relationship shows how one would develop a regression
based analogue of the work presented in \citet{katenka2011multi}. 

\subsection{Penalized Log-Likelihood Optimization}
\label{sec:estimation}

Based on the data $\Dcal_n$, we propose to minimize the penalized
negative Gaussian log-likelihood under the model in
\eqref{eq:vector_model},
\begin{equation}
  \label{eq:max_ll_opt}
  \min_{\Omegab \succ \zero}\ \Bigl\{\tr \Sbb \Omegab - \log|\Omegab|
    + \lambda \sum_{a, b} \norm{\Omegab_{ab}}_F\Bigr\}
\end{equation}
where $\Sbb = n^{-1}\sum_{i=1}^n\xb_i\xb_i^T$ is the sample covariance
matrix, $\norm{\Omegab_{ab}}_F)$ denotes the Frobenius norm of
$\Omegab_{ab}$ and $\lambda$ is a user defined parameter that controls
the sparsity of the solution $\hat\Omega$. The Frobenius norm penalty
encourages blocks of the precision matrix to be equal to zero, similar
to the way that the $\ell_2$ penalty is used in the group Lasso
\citep{Yuan06model}.  Here we assume that the same number of samples
is available per attribute. However, the same method can be used in
cases when some samples are obtained on a subset of
attributes. Indeed, we can simply estimate each element of the matrix
$\Sbb$ from available samples, treating non-measured attributes as
missing completely at random \citep[for more details 
see][]{kolar12missing}.

The dual problem to \eqref{eq:max_ll_opt} is
\begin{equation}
  \label{eq:max_ll_opt_dual}
    \max_{\Sigmab} \sum_{j \in V} k_j + \log|\Sigmab| \qquad
      \text{subject to } \qquad
      \max_{a,b}\ \norm{\Sbb_{ab} - \Sigmab_{ab}}_F \leq \lambda,
\end{equation}
where $\Sigmab$ is the dual variable to $\Omegab$ and $|\Sigma|$
denotes the determinant of $\Sigma$. Note that the primal problem
gives us an estimate of the precision matrix, while the dual problem
estimates the covariance matrix. The proposed optimization procedure,
described below, will simultaneously estimate the precision matrix and
covariance matrix, without explicitly performing an expensive matrix
inversion.

We propose to optimize the objective function in \eqref{eq:max_ll_opt}
using an inexact block coordinate descent procedure, inspired by
\citet{mazumder11flexible}. The block coordinate descent is an
iterative procedure that operates on a block of rows and columns while
keeping the other rows and columns fixed. We write
\begin{equation*}
  \Omegab = \left(
  \begin{array}{cc}
    \Omegab_{aa} & \Omegab_\aia \\
    \Omegab_\iaa & \Omegab_\iaia\\
  \end{array}
  \right),
\quad
  \Sigmab = \left(
  \begin{array}{cc}
    \Sigmab_{aa} & \Sigmab_\aia \\
    \Sigmab_\iaa & \Sigmab_\iaia\\
  \end{array}
  \right),
\quad
  \Sbb = \left(
  \begin{array}{cc}
    \Sbb_{aa} & \Sbb_\aia \\
    \Sbb_\iaa & \Sbb_\iaia\\
  \end{array}
  \right)
\end{equation*}
and suppose that $(\tilde\Omegab, \tilde\Sigmab)$ are the current
estimates of the precision matrix and covariance matrix. With the
above block partition, we have $\log|\Omegab| =
\log(\Omegab_\iaia) + \log(\Omegab_{aa} -
\Omegab_\aia(\Omegab_\iaia)^{-1}\Omegab_\iaa)$.  In the next
iteration, $\hat \Omegab$ is of the form
\begin{equation*}
  \hat\Omegab = \tilde\Omegab + \left(
  \begin{array}{cc}
    \Deltab_{aa} & \Deltab_\aia \\
    \Deltab_\iaa & \zero
  \end{array}
\right) =
\left(
  \begin{array}{cc}
    \hat\Omegab_{aa} & \hat\Omegab_\aia \\
    \hat\Omegab_\iaa & \tilde\Omegab_\iaia \\
  \end{array}
\right)
\end{equation*}
and is obtained by minimizing
\begin{equation}
  \label{eq:partial_min}
  \tr\Sbb_{aa}\Omegab_{aa}
\! + \! 2\tr\Sbb_\aia\Omegab_\iaa
  - \log|\Omegab_{aa} - \Omegab_\aia(\tilde\Omegab_\iaia)^{-1}\Omegab_\iaa|
  + \lambda\norm{\Omegab_{aa}}_F + 2\lambda\sum_{b \neq
    a}\norm{\Omegab_{ab}}_F.
\end{equation}
Exact minimization over the variables $\Omegab_{aa}$ and
$\Omegab_\aia$ at each iteration of the block coordinate descent
procedure can be computationally expensive. Therefore, we propose to
update $\Omegab_{aa}$ and $\Omegab_\aia$ using one generalized
gradient step update (see \citet{beck09fast}) in each iteration.  Note
that the objective function in \eqref{eq:partial_min} is a sum of a
smooth convex function and a non-smooth convex penalty so that the
gradient descent method cannot be directly applied. Given a step size
$t$, generalized gradient descent optimizes a quadratic approximation
of the objective at the current iterate $\tilde\Omegab$, which results
in the following two updates
\begin{align}
  \label{eq:update_aa}
  \hat\Omegab_{aa} &= \argmin_{\Omegab_{aa}} \Big\{
  \tr(\Sbb_{aa} - \tilde\Sigmab_{aa})\Omegab_{aa} +
  \frac{1}{2t}\norm{\Omegab_{aa} - \tilde\Omegab_{aa}}_F^2 +
  \lambda \norm{\Omegab_{aa}}_F\Bigr\},
  \quad\text{ and} \\
  \label{eq:update_ab}
  \hat\Omegab_{ab} &= \argmin_{\Omegab_{ab}}
  \Bigl\{ \tr(\Sbb_{ab} - \tilde\Sigmab_{ab})\Omegab_{ba} +
  \frac{1}{2t}\norm{\Omegab_{ab} - \tilde\Omegab_{ab}}_F^2 +
  \lambda \norm{\Omegab_{ab}}_F\Bigr\},
  \quad \forall b \in \ia.
\end{align}
If the resulting estimator $\hat\Omegab$ is not positive definite or
the update does not decrease the objective, we halve the step size $t$
and find a new update.  Once the update of the precision matrix
$\hat\Omegab$ is obtained, we update the covariance matrix
$\hat\Sigmab$.  Updates to the precision and covariance matrices can
be found efficiently, without performing expensive matrix inversion,
as we show in Appendix~\ref{appendix:a}
(see~\eqref{eq:update_aa_final}--\eqref{eq:update_sigma}).  Combining
all three steps we get the following algorithm:
\begin{enumerate}
\item
Set the initial estimator $\tilde\Omegab = \diag(\Sbb)$ and
$\tilde\Sigmab = \tilde\Omegab^{-1}$. Set the step size $t=1$.
\item
For each $a \in V$ perform the following:
\begin{tabbing}
\qquad Update $\hat\Omegab$ using 
\eqref{eq:update_aa_final} and \eqref{eq:update_ab_final}.\\
\qquad If $\hat\Omegab$ is not positive definite,
  set $t \leftarrow t/2$ and repeat the update.\\
\qquad Update $\hat\Sigmab$ using \eqref{eq:update_sigma}.
\end{tabbing}
\item
  Repeat Step 2 until the duality gap 
  \[
 \Bigl| \tr (\Sbb \hat \Omegab) - \log|\hat\Omegab|
  + \lambda \sum_{a, b} \norm{\hat\Omegab_{ab}}_F -
  \sum_{j \in V} k_j - \log|\Sigmab|\Bigr| \leq \epsilon,  
  \]
  where $\epsilon$ is a prefixed  precision parameter (for example, $\epsilon=10^{-3}$).
\end{enumerate}
Finally, we form a graph $\hat G = (V, \hat E)$ by connecting nodes
with $\norm{\hat\Omegab_{ab}}_F \neq 0$.

Computational complexity of the procedure is given in
Appendix~\ref{appendix:b}.  Convergence of the above described
procedure to the unique minimum of the objective function in
\eqref{eq:max_ll_opt} does not follow from the standard results on the
block coordinate descent algorithm \citep{tseng01convergence} for two
reasons. First, the minimization problem in \eqref{eq:partial_min} is
not solved exactly at each iteration, since we only update
$\Omegab_{aa}$ and $\Omegab_\aia$ using one generalized gradient step
update in each iteration. Second, the blocks of variables, over which
the optimization is done at each iteration, are not completely
separable between iterations due to the symmetry of the problem. The
proof of the following convergence result is given in
Appendix~\ref{appendix:c}.

\begin{lemma}
  \label{lem:convergence}
  For every value of $\lambda > 0$, the above described algorithm
  produces a sequence of estimates $\cbr{\tilde\Omegab^{(t)}}_{t \geq
    1}$ of the precision matrix that monotonically decrease the
  objective values given in \eqref{eq:max_ll_opt}. Every element of
  this sequence is positive definite and the sequence converges to the
  unique minimizer $\hat \Omegab$ of \eqref{eq:max_ll_opt}.
\end{lemma}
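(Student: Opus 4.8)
\noindent\textit{Proof plan.}
The plan is to combine the strict convexity of the objective with a descent-plus-compactness argument adapted to the inexact, overlapping block updates. First I would record the global structure of $f(\Omegab) = \tr\Sbb\Omegab - \log|\Omegab| + \lambda\sum_{a,b}\norm{\Omegab_{ab}}_F$ on the open cone $\{\Omegab\succ\zero\}$. It is strictly convex there, since $-\log|\cdot|$ is strictly convex on the positive definite cone and the other two terms are convex; and it is coercive, since $\tr\Sbb\Omegab\ge 0$ (because $\Sbb\succeq\zero$), the penalty dominates as $\lambda_{\max}(\Omegab)\to\infty$ (linear beats logarithmic, using $\sum_{a,b}\norm{\Omegab_{ab}}_F\ge\norm{\Omegab}_F\ge\lambda_{\max}(\Omegab)$), and $-\log|\Omegab|\to+\infty$ as $\lambda_{\min}(\Omegab)\to 0$. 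Hence $f$ has a unique minimizer $\hat\Omegab\succ\zero$, and every sublevel set $\{\Omegab : f(\Omegab)\le c\}$ is a compact subset of the open positive definite cone. Since the initialization $\tilde\Omegab^{(1)}=\diag(\Sbb)\succ\zero$, it then suffices to show that every iterate stays in the sublevel set $\cbr{f\le f(\tilde\Omegab^{(1)})}$ and that every accumulation point solves \eqref{eq:max_ll_opt}.

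Second, I would analyze a single block update. Fixing $a$ and writing $g$ for the smooth part of \eqref{eq:partial_min}, i.e. $g(\Omegab_{aa},\Omegab_\aia)=\tr\Sbb_{aa}\Omegab_{aa}+2\tr\Sbb_\aia\Omegab_\iaa-\log|\Omegab_{aa}-\Omegab_\aia(\tilde\Omegab_\iaia)^{-1}\Omegab_\iaa|$, the updates \eqref{eq:update_aa}--\eqref{eq:update_ab} are precisely one proximal (generalized) gradient step, with step $t$, for $g$ plus the Frobenius penalty. On the compact sublevel set the relevant Schur complement is uniformly bounded away from singularity, so $\nabla g$ is Lipschitz there with a constant $L$ independent of the iteration. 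The standard descent lemma for proximal gradient then gives: once $t\le 1/L$, the step (i) remains positive definite — because $\lambda_{\min}(\tilde\Omegab)$ is uniformly bounded below on the sublevel set and the step map is continuous in $t$ with limit $\tilde\Omegab$ as $t\to 0$ — and (ii) decreases $f$ by at least $c\norm{\Deltab}_F^2$, where $\Deltab$ is the actual (symmetric) block increment and $c\ge c_{\min}>0$ since the halving in Step~2 terminates with $t$ bounded below. Consequently the algorithm is well defined, each $\tilde\Omegab^{(t)}$ is positive definite, and $\cbr{f(\tilde\Omegab^{(t)})}$ is nonincreasing.

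Third, since $f$ is bounded below, $f(\tilde\Omegab^{(t)})$ converges, so the per-step decreases tend to $0$; by $c_{\min}\norm{\Deltab^{(t)}}_F^2\le f(\tilde\Omegab^{(t)})-f(\tilde\Omegab^{(t+1)})$ we conclude $\norm{\tilde\Omegab^{(t+1)}-\tilde\Omegab^{(t)}}_F\to 0$. As the iterates lie in a compact sublevel set, there is a subsequence converging to some $\bar\Omegab\succ\zero$, and by the previous display all intermediate iterates of the surrounding full sweep converge to $\bar\Omegab$ as well. Passing to the limit in the fixed-point characterization of each block update — the block-$a$ proximal gradient step leaves the current matrix unchanged exactly when the $(a,a)$ and $(a,\ia)$ blocks of the subgradient condition $\Sbb-\Omegab^{-1}+\lambda Z=\zero$, with $Z_{ab}\in\partial\norm{\Omegab_{ab}}_F$, are satisfied — shows $\bar\Omegab$ satisfies these conditions for every $a$, hence the full subgradient optimality condition for \eqref{eq:max_ll_opt}. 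By convexity $\bar\Omegab=\hat\Omegab$, and since every convergent subsequence of a sequence confined to a compact set then has the same limit $\hat\Omegab$, the entire sequence converges to $\hat\Omegab$.

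The main obstacle is the non-separability flagged in the text: updating block $a$ changes both $\Omegab_\aia$ and, by symmetry, $\Omegab_\iaa$, so consecutive block subproblems are not coordinate-separable and the classical block coordinate descent theory of \citet{tseng01convergence} does not apply directly. I would address this by proving the sufficient-decrease inequality for the genuine symmetric increment $\Deltab^{(t)}$ rather than a one-sided coordinate move, and by running the limit-point argument sweep by sweep, using $\norm{\tilde\Omegab^{(t+1)}-\tilde\Omegab^{(t)}}_F\to 0$ to justify replacing the ``frozen'' blocks $\tilde\Omegab_\iaia$ appearing inside $g$ by their common limit. The secondary technical point to pin down is the uniform lower bound on the line-search step size, which reduces to the uniform non-degeneracy of the Schur complement over the sublevel set, and the accompanying verification that the positive-definiteness check in Step~2 fails only finitely often per iteration.
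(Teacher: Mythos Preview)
Your proposal is correct and follows essentially the same route as the paper: both arguments establish uniqueness of the minimizer via strict convexity, invoke the proximal-gradient sufficient-decrease inequality (the paper cites Lemma~2.3 of \citet{beck09fast} for exactly this), use boundedness of the iterates together with monotone descent to force the successive increments to zero, and then argue that any limit point satisfies the full KKT system for \eqref{eq:max_ll_opt}. The only cosmetic difference is that you obtain boundedness of the iterates and the uniform Lipschitz constant from compactness of the sublevel set, whereas the paper first derives explicit eigenvalue bounds on the minimizer (Lemma~\ref{lem:bound_eigen_value}) and a Lipschitz bound for $\nabla f$ on $\{\Lambda_{\min}\ge\gamma\}$ (Lemma~\ref{lem:lip_grad}), and then asserts that the iterates inherit these bounds; your sublevel-set formulation is arguably the cleaner way to justify that step, and your explicit treatment of the non-separability and the uniform lower bound on $t$ fills in details the paper leaves implicit.
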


\subsection{Efficient Identification of Connected Components}

When the target graph $\hat G$ is composed of smaller, disconnected
components, the solution to the problem in \eqref{eq:max_ll_opt} is
block diagonal (possibly after permuting the node indices) and can be
obtained by solving smaller optimization problems. That is, the
minimizer $\hat\Omegab$ can be obtained by solving
\eqref{eq:max_ll_opt} for each connected component independently,
resulting in massive computational gains. We give necessary and
sufficient condition for the solution $\hat\Omegab$ of
\eqref{eq:max_ll_opt} to be block-diagonal, which can be easily
checked by inspecting the empirical covariance matrix $\Sbb$.  

Our first result follows immediately from the Karush-Kuhn-Tucker
conditions for the optimization problem \eqref{eq:max_ll_opt} and
states that if $\hat\Omegab$ is block-diagonal, then it can be
obtained by solving a sequence of smaller optimization problems.
\begin{lemma}
  If the solution to \eqref{eq:max_ll_opt} takes the form 
  $\hat\Omega={\rm diag}(\hat\Omega_1,
  \hat\Omega_2,\ldots,\hat\Omega_l)$, that is, $\hat\Omega$ is a block
  diagonal matrix with the diagonal blocks $\hat\Omega_1, \ldots,
  \hat\Omega_l$, then it can be obtained by solving
  \begin{equation*}
  \min_{\Omegab_{l'} \succ \zero}\ \Bigl\{ \tr \Sbb_{l'} \Omegab_{l'} - \log|\Omegab_{l'}|
    + \lambda \sum_{a, b} \norm{\Omegab_{ab}}_F\Bigr\}
  \end{equation*}
  separately for each $l'=1,\ldots,l$, where $\Sbb_{l'}$ are
  submatrices of $\Sbb$ corresponding to $\Omegab_{l'}$.
\end{lemma}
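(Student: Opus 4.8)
The plan is to exploit the fact that, once $\hat\Omega$ is known to be block diagonal with blocks conforming to the connected components $C_1,\ldots,C_l$ of $\hat G$, the objective in \eqref{eq:max_ll_opt} evaluated \emph{along} block-diagonal matrices splits into a sum of per-component objectives over disjoint sets of variables. Concretely, for a positive definite $\Omega = \mathrm{diag}(\Omega_1,\ldots,\Omega_l)$ whose zero pattern respects this partition, I would check term by term that $\tr\Sbb\Omega = \sum_{l'} \tr\Sbb_{l'}\Omega_{l'}$ (the cross-component summands $\tr(\Sbb_{ab}\Omega_{ba})$ vanish because $\Omega_{ba}=\zero$), that $\log|\Omega| = \sum_{l'}\log|\Omega_{l'}|$ (the determinant of a block diagonal matrix factorizes), and that $\sum_{a,b}\norm{\Omega_{ab}}_F = \sum_{l'}\sum_{a,b\in C_{l'}}\norm{\Omega_{ab}}_F$ (off-component blocks are zero). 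Hence the full objective at such an $\Omega$ equals $\sum_{l'} g_{l'}(\Omega_{l'})$, where $g_{l'}$ is exactly the per-component objective appearing in the statement.

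Given this decomposition, I would argue by contradiction. A block diagonal matrix is positive definite if and only if each of its diagonal blocks is, so replacing a single block $\hat\Omega_{l'}$ of $\hat\Omega$ by some $\Omega_{l'}' \succ \zero$ with strictly smaller value $g_{l'}(\Omega_{l'}') < g_{l'}(\hat\Omega_{l'})$ would yield another admissible block diagonal matrix whose objective value in \eqref{eq:max_ll_opt} is strictly below that of $\hat\Omega$, contradicting its global optimality. Therefore each $\hat\Omega_{l'}$ minimizes $g_{l'}$ over positive definite matrices of the appropriate size, which is precisely the per-component problem in the statement. Equivalently, one may route the argument through the Karush--Kuhn--Tucker stationarity condition $\Sbb - \Omega^{-1} + \lambda Z = \zero$ with $Z_{ab}\in\partial\norm{\Omega_{ab}}_F$: since $\hat\Omega$ block diagonal forces $\hat\Omega^{-1}$ to be block diagonal with blocks $\hat\Omega_{l'}^{-1}$, the restriction of this condition to the within-component index pairs $(a,b)$, $a,b\in C_{l'}$, is exactly the optimality condition for $g_{l'}$.

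Finally, to justify the wording ``can be obtained by solving $\ldots$ separately,'' I would invoke strict convexity of $\Omega\mapsto \tr\Sbb\Omega - \log|\Omega|$ on the positive definite cone (the same property used for uniqueness in Lemma~\ref{lem:convergence}), so that each $g_{l'}$ has a unique minimizer; assembling these unique per-component minimizers into a block diagonal matrix reproduces $\hat\Omega$ itself.

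I do not expect a genuine obstacle here: the content is the bookkeeping that establishes the additive split of all three terms of the objective together with the observation that positive definiteness is inherited blockwise. The only point that warrants a line of care is that the off-component blocks of $\Sbb$ contribute nothing, which is immediate because in a block-diagonal $\Omega$ they are paired against vanishing off-component blocks.
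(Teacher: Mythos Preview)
Your proposal is correct and, if anything, more detailed than the paper's own treatment: the paper does not give a written-out proof for this lemma but simply remarks that it ``follows immediately from the Karush--Kuhn--Tucker conditions for the optimization problem \eqref{eq:max_ll_opt}.'' Your KKT alternative is exactly that argument, and your primary route via the additive decomposition of the objective plus a contradiction is an equally valid (and arguably more transparent) justification of the same fact.
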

Next, we describe how to identify diagonal blocks of
$\hat\Omegab$. Let $\Pcal = \{P_1, P_2, \ldots, P_l\}$ be a partition
of the set $V$ and assume that the nodes of the graph are ordered in
a way that if $a \in P_j$, $b \in P_{j'}$, $j < j'$, then $a < b$.
The following lemma states that the blocks of $\hat\Omega$ can be
obtained from the blocks of a thresholded sample covariance matrix.
\begin{lemma}
  \label{lem:nec_suff_block_diag}
  A necessary and sufficient conditions for $\hat\Omegab$ to be block
  diagonal with blocks $P_1, P_2, \ldots, P_l$ is that
  $\norm{\Sbb_{ab}}_F \leq \lambda$ for all $a \in P_j$, $b \in
  P_{j'}$, $j \neq j'$.
\end{lemma}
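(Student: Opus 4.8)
The plan is to characterize the minimizer of \eqref{eq:max_ll_opt} through its stationarity (Karush--Kuhn--Tucker) conditions and then verify those conditions for a block-diagonal candidate. The objective in \eqref{eq:max_ll_opt} is strictly convex and coercive on the cone of positive definite matrices, so it has a unique minimizer $\hat\Omegab$ (this is also the content of Lemma~\ref{lem:convergence}), and since the minimization is over the open set $\{\Omegab \succ \zero\}$, a positive definite $\Omegab$ equals $\hat\Omegab$ if and only if $\zero$ lies in the subdifferential of the objective at $\Omegab$. Differentiating the smooth part $\tr\Sbb\Omegab - \log|\Omegab|$ and writing the subdifferential of the penalty blockwise, this reads: for every pair $(a,b)$ there is a matrix $Z_{ab}$ with
\[
  \Sbb_{ab} - (\Omegab^{-1})_{ab} + \lambda Z_{ab} = \zero,
  \qquad
  Z_{ab} = \Omegab_{ab}/\norm{\Omegab_{ab}}_F \text{ if } \Omegab_{ab} \neq \zero,
  \quad \norm{Z_{ab}}_F \leq 1 \text{ otherwise.}
\]
I will use this fixed characterization for both directions.

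For sufficiency, assume $\norm{\Sbb_{ab}}_F \leq \lambda$ for all $a \in P_j$, $b \in P_{j'}$ with $j \neq j'$. Let $\hat\Omegab_{l'}$ be the (unique, positive definite) minimizer of the block-restricted problem appearing in the previous lemma, for $l' = 1,\ldots,l$, and set $\hat\Omegab = \diag(\hat\Omegab_1,\ldots,\hat\Omegab_l)$. This matrix is positive definite, and its inverse is $\diag(\hat\Omegab_1^{-1},\ldots,\hat\Omegab_l^{-1})$, hence also block diagonal with respect to $\Pcal$. For index pairs $(a,b)$ lying within one common block $P_{l'}$, the stationarity conditions above coincide with the optimality conditions of the $l'$-th subproblem and hold by construction. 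For a cross-block pair $(a,b)$ we have $\hat\Omegab_{ab} = \zero$ and $(\hat\Omegab^{-1})_{ab} = \zero$, so the condition requires a matrix $Z_{ab}$ with $\norm{Z_{ab}}_F \leq 1$ and $\Sbb_{ab} + \lambda Z_{ab} = \zero$; taking $Z_{ab} = -\lambda^{-1}\Sbb_{ab}$ works precisely because $\norm{\Sbb_{ab}}_F \leq \lambda$. Thus $\hat\Omegab$ satisfies the KKT conditions and, by uniqueness of the minimizer, solves \eqref{eq:max_ll_opt}; it is block diagonal by construction. For necessity, suppose the minimizer $\hat\Omegab$ is block diagonal with blocks $P_1,\ldots,P_l$. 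Then $\hat\Omegab^{-1}$ inherits the same block-diagonal structure, so $\hat\Omegab_{ab}$ and $(\hat\Omegab^{-1})_{ab}$ both vanish at every cross-block pair $(a,b)$; the stationarity condition there forces $\Sbb_{ab} = -\lambda Z_{ab}$ with $\norm{Z_{ab}}_F \leq 1$, whence $\norm{\Sbb_{ab}}_F \leq \lambda$.

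The only genuinely delicate points I anticipate are two. First, one must argue that a positive definite block-diagonal matrix has a block-diagonal inverse with the same partition; this is immediate either from the blockwise inversion formula applied to $\diag(\hat\Omegab_1^{-1},\ldots,\hat\Omegab_l^{-1})$ or from uniqueness of the matrix inverse. Second, and more importantly, the bookkeeping of the penalty $\lambda\sum_{a,b}\norm{\Omegab_{ab}}_F$ over the symmetric pairs $(a,b)$ and $(b,a)$, together with the symmetry of $\Sbb - \Omegab^{-1}$, must be done carefully so that the factors cancel and the threshold on $\norm{\Sbb_{ab}}_F$ comes out to be $\lambda$ rather than $2\lambda$; this, and keeping the subdifferential computation over the open cone $\{\Omegab \succ \zero\}$ rigorous, is where the main care is needed, though neither step is technically hard.
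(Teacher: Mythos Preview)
Your proposal is correct and follows essentially the same approach as the paper: both directions proceed by writing out the blockwise KKT conditions for \eqref{eq:max_ll_opt}, using that a block-diagonal positive definite matrix has a block-diagonal inverse, and, for sufficiency, assembling the subproblem minimizers into a block-diagonal candidate and checking the cross-block stationarity condition via $\norm{\Sbb_{ab}}_F \leq \lambda$. Your write-up is in fact more explicit than the paper's (which is quite terse), and your cautionary remarks about the symmetry bookkeeping and the $\lambda$ versus $2\lambda$ threshold are well placed even though the paper does not spell them out.
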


Blocks $P_1, P_2, \ldots, P_l$ can be identified by forming a $p\times
p$ matrix $\Qb$ with elements $q_{ab} = \ind\{\norm{\Sbb_{ab}}_F >
\lambda\}$ and computing connected components of the graph with
adjacency matrix $\Qb$. The lemma states also that given two penalty
parameters $\lambda_1 < \lambda_2$, the set of unconnected nodes with
penalty parameter $\lambda_1$ is a subset of unconnected nodes with
penalty parameter $\lambda_2$. The simple check above allows us to
estimate graphs on datasets with large number of nodes, if we are
interested in graphs with small number of edges. However, this is
often the case when the graphs are used for exploration and
interpretation of complex systems.
Lemma~\ref{lem:nec_suff_block_diag} is related to existing results
established for speeding-up computation when learning single and
multiple Gaussian graphical models \citep{witten2011new,
  mazumder2011exact, danaher2011joint}. Each condition is different,
since the methods optimize different objective functions.

\section{Consistent Graph Identification}
\label{sec:theory}

In this section, we provide theoretical analysis of the estimator
described in $\S$\ref{sec:estimation}. In particular, we provide
sufficient conditions for consistent graph recovery.  For simplicity
of presentation, we assume that $k_a = k$, for all $a \in V$, that is,
we assume that the same number of attributes is observed for each
node. For each $a=1,\ldots,kp$, we assume that
$(\sigma_{aa}^*)^{-1/2}X_a$ is sub-Gaussian with parameter $\gamma$,
where $\sigma_{aa}^*$ is the $a$th diagonal element of
$\Sigmab^*$. Recall that $Z$ is a sub-Gaussian random variable if
there exists a constant $\sigma \in (0, \infty)$ such that
\begin{equation*}
  \EE\rbr{\exp(tZ)} \leq \exp(\sigma^2t^2),\ \text{for all
  } t \in \RR.
\end{equation*}

Our assumptions involve the Hessian of the function $f(\Ab) =
\tr\Sbb\Ab - \log|\Ab|$ evaluated at the true $\Omegab^*$, $\Hcal =
\Hcal(\Omegabt) = (\Omegabt)^{-1}\otimes(\Omegabt)^{-1} \in
\RR^{(pk)^2\times(pk)^2}$, with $\otimes$ denoting the Kronecker
product, and the true covariance matrix $\Sigmab^*$. The Hessian and
the covariance matrix can be thought of as block matrices with blocks
of size $k^2 \times k^2$ and $k \times k$, respectively. We will make
use of the operator $\Ccal(\cdot)$ that operates on these block
matrices and outputs a smaller matrix with elements that equal to the
Frobenius norm of the original blocks. For example, $\Ccal(\Sigmab^*)
\in \RR^{p\times p}$ with elements $\Ccal(\Sigmab^*)_{ab} =
\norm{\Sigmab^*_{ab}}_F$. Let $\Tcal = \{(a,b): \norm{\Omegab_{ab}}_F
\neq 0\}$ and $\Ncal = \{(a,b): \norm{\Omegab_{ab}}_F = 0\}$. With
this notation introduced, we assume that the following
{irrepresentable} condition holds. There exists a constant $\alpha \in
[0, 1)$ such that
\begin{equation}
\label{eq:assum:irrepresentable}
\opnorm{
\Ccal\left(
  \Hcal_{\Ncal\Tcal}(\Hcal_{\Tcal\Tcal})^{-1}
\right)}{\infty} \leq 1 - \alpha,
\end{equation}
where $\opnorm{A}{\infty} = \max_i \sum_j|A_{ij}|$.
We will also need the following quantities to specify the results $
\kappa_{\Sigmab^*} = \opnorm{\Ccal(\Sigmab^*)}{\infty}$ and
$\kappa_{\Hcal} =
\opnorm{\Ccal(\Hcal_{\Tcal\Tcal}^{-1})}{\infty}$. These conditions
extend the conditions specified in \citet{Ravikumar11} needed for
estimating graphs from single attribute observations. 

We have the following result that provides sufficient conditions for
the exact recovery of the graph.
\begin{proposition}
\label{prop:sparsistency}
Let $\tau > 2$. We set
the penalty parameter $\lambda$ in \eqref{eq:max_ll_opt} as 
\[
\lambda = 8k\alpha^{-1}
\rbr{128(1+4\gamma^2)^2(\max_a(\sigma_{aa}^*)^2)n^{-1}(2\log(2k) +
  \tau\log(p))}^{1/2}.
\]
If $
  n > C_1s^2k^2(1+8\alpha^{-1})^2(\tau\log p + \log 4 + 2\log k),$
where $s$ is the maximal degree of nodes in $G$, 
$
C_1 =
(48\sqrt{2}(1+4\gamma^2)(\max_a \sigma_{aa}^*)
\max(\kappa_{\Sigmab^*}\kappa_{\Hcal},
\kappa_{\Sigmab^*}^3\kappa_{\Hcal}^2))^2
$
 and
\[
\min_{(a,b)\in\Tcal, a\neq b} \norm{\Omegab_{ab}}_F > 
16\sqrt{2}(1+4\gamma^2)(\max_a \sigma_{aa}^*)
          (1+8\alpha^{-1})\kappa_\Hcal k
          \rbr{\frac{\tau\log p + \log 4 + 2\log k}{n}}^{1/2},
\]
then $ \PP\rbr{\hat G = G} \geq 1 - p^{2-\tau}$.
\end{proposition}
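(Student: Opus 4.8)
The plan is to follow the now-standard primal-dual witness construction used by \citet{Ravikumar11} for single-attribute Gaussian graphical model selection, adapting every scalar quantity to its block-Frobenius-norm analogue via the operator $\Ccal(\cdot)$. First I would set up the witness: restrict the optimization in \eqref{eq:max_ll_opt} to matrices whose support lies in $\Tcal$, let $\hat\Omegab_{\Tcal}$ be the solution of that restricted problem, and fill in zeros on $\Ncal$. This candidate automatically satisfies the stationarity condition on $\Tcal$; the crux is to verify strict dual feasibility on $\Ncal$, i.e. that the subgradient of $\lambda\sum_{a,b}\norm{\Omegab_{ab}}_F$ can be chosen with $\norm{\cdot}_F < \lambda$ strictly on every block in $\Ncal$. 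If that holds, uniqueness of the solution to the strictly convex program \eqref{eq:max_ll_opt} forces $\hat\Omegab$ to coincide with the witness, hence $\hat E \subseteq E$; the lower bound on $\min_{(a,b)\in\Tcal}\norm{\Omegab_{ab}}_F$ then guarantees $\hat E \supseteq E$, giving $\hat G = G$.

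The technical engine is a deterministic bound: writing $\Deltab = \hat\Omegab - \Omegabt$ and $\Rb(\Deltab)$ for the Taylor remainder of $-\log|\cdot|$ around $\Omegabt$, a Taylor expansion gives $\Sbb - \Sigmab^* + \Rb(\Deltab) + \Hcal\,\mathrm{vec}(\Deltab) + \lambda\Zb = 0$ on the support, where $\Zb$ is the (matrix of) subgradients. Solving for $\Zb_{\Ncal}$ and using the irrepresentable condition \eqref{eq:assum:irrepresentable} together with $\kappa_{\Hcal}$ controls $\opnorm{\Ccal(\Zb_{\Ncal})}{\infty}$ by $(1-\alpha)(\text{something}) + (2/\lambda)(\norm{\Ccal(\Wb)}_\infty + \norm{\Ccal(\Rb)}_\infty)$ with $\Wb = \Sbb - \Sigmab^*$; one needs this to be $\le 1-\alpha/2 < 1$. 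This requires (a) a concentration bound $\norm{\Ccal(\Wb)}_\infty = \max_{a,b}\norm{\Sbb_{ab}-\Sigmab^*_{ab}}_F$ small with high probability, which follows from the sub-Gaussian assumption on $(\sigma^*_{aa})^{-1/2}X_a$ via a union bound over the $O(p^2)$ blocks and $O(k^2)$ entries per block — this is where the $8\log$-type terms and the choice of $\lambda$ come from; and (b) a bound on the remainder $\norm{\Ccal(\Rb)}_\infty$ in terms of $\norm{\Ccal(\Deltab)}_\infty$, which in turn is controlled by bounding $\norm{\Ccal(\Deltab_{\Tcal})}_\infty$ through the restricted stationarity equation, a Brouwer fixed-point / self-bounding argument exactly as in \citet{Ravikumar11}, yielding $\norm{\Ccal(\Deltab)}_\infty \le 2\kappa_{\Hcal}(\norm{\Ccal(\Wb)}_\infty + \lambda)$ provided the radius condition holds, which is exactly the role of the sample-size lower bound $n > C_1 s^2 k^2(\cdots)$.

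I would organize the argument into lemmas mirroring that structure: (1) a concentration lemma controlling $\max_{a,b}\norm{\Sbb_{ab}-\Sigmab^*_{ab}}_F$; (2) the fixed-point lemma bounding $\norm{\Ccal(\Deltab)}_\infty$ on $\Tcal$ given control of $\norm{\Ccal(\Wb)}_\infty$ and $\lambda$, with the sample-size condition ensuring the nonlinear remainder term is dominated; (3) the dual-feasibility lemma using \eqref{eq:assum:irrepresentable}; and (4) assembling these to conclude $\hat E = E$ on the event that the concentration bound holds, whose probability is at least $1 - p^{2-\tau}$ after plugging the $\tau\log p$ tail into the union bound. The main obstacle, and the part requiring genuine care rather than transcription, is getting the block/Kronecker bookkeeping right: one must check that $\opnorm{\Ccal(\Hcal_{\Ncal\Tcal}(\Hcal_{\Tcal\Tcal})^{-1})}{\infty}$ behaves under the $\Ccal$ operator the way the $\ell_\infty$-operator norm does in the scalar case — in particular that submultiplicativity-type inequalities $\norm{\Ccal(AB)}\lesssim\norm{\Ccal(A)}\,\norm{\Ccal(B)}$ hold with the right constants (this is where factors like $\sqrt{2}$ and powers of $k$ enter), and that the remainder matrix $\Rb(\Deltab)$, which involves $\sum_{j\ge 2}(-1)^j (\Sigmab^*\Deltab)^{j}\Sigmab^*$, can be bounded in $\Ccal$-$\ell_\infty$ norm by a geometric series in $\kappa_{\Sigmab^*}\,s\,\norm{\Ccal(\Deltab)}_\infty$. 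Once those block-matrix inequalities are in place, the probabilistic and fixed-point parts are routine adaptations of the single-attribute proof.
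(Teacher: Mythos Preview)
Your proposal is correct and follows essentially the same route as the paper: the paper also runs the primal--dual witness construction of \citet{Ravikumar11}, proving block-Frobenius analogues of their Lemmas~4--6 (your items (3), the remainder bound, and (2), respectively), establishing the needed submultiplicativity inequalities for $\Ccal(\cdot)$ in a separate subsection, and then specializing the resulting deterministic statement via the sub-Gaussian tail bound on $\max_{a,b}\norm{\Sbb_{ab}-\Sigmab^*_{ab}}_F$ to obtain the stated sample-size and signal-strength conditions. Your identification of the $\Ccal$-operator bookkeeping as the only nontrivial adaptation matches the paper's emphasis exactly.
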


The proof of Proposition~\ref{prop:sparsistency} is given in
Appendix~\ref{appendix:c}. We extend the proof of \citet{Ravikumar11}
to accommodate the Frobenius norm penalty on blocks of the precision
matrix. This proposition specifies the sufficient sample size and a
lower bound on the Frobenius norm of the off-diagonal blocks needed
for recovery of the unknown graph.  Under these conditions and
correctly specified tuning parameter $\lambda$, the solution to the
optimization problem in \eqref{eq:max_ll_opt} correctly recovers the
graph with high probability. In practice, one needs to choose the
tuning parameter in a data dependent way. For example, using the
Bayesian information criterion.  Even though our theoretical analysis
obtains the same rate of convergence as that of \citet{Ravikumar11},
our method has a significantly improved finite-sample performance
(More details will be provided in $\S$\ref{sec:simulation}.). It
remains an open question whether the sample size requirement can be
improved as in the case of group Lasso \citep[see, for
example,][]{lounici10oracle}.  The analysis of \cite{lounici10oracle}
relies heavily on the special structure of the least squares
regression. Hence, their method does not carry over to the more
complicated objective function as in \eqref{eq:max_ll_opt}.

\section{Interpreting Edges}
\label{sec:interpretation}

We propose a post-processing step that will allow us to quantify the
strength of links identified by the method proposed in
$\S$\ref{sec:estimation}, as well as identify important attributes that
contribute to the existence of links.

For any two nodes $a$ and $b$ for which $\Omegab_{ab}\neq0$, we define
$\Ncal(a,b) = \{ c \in V \bks \{a,b\}\ :\ \Omegab_{ac} \neq 0 \text{
  or } \Omegab_{bc} \neq 0 \}$, which is the Markov blanket for the set
of nodes $\{ \Xb_a, \Xb_b \}$. Note that the conditional distribution
of $(\Xb_a^T, \Xb_b^T)^T$ given $\Xb_{\nab}$ is equal to the
conditional distribution of $(\Xb_a^T, \Xb_b^T)^T$ given
$\Xb_{\Ncal(a,b)}$.  Now,
\begin{equation*}
  \begin{aligned}
\rho_c(\Xb_a, \Xb_b; \Xb_{\nab}) 
&= \rho_c(\Xb_a, \Xb_b; \Xb_{\Ncal(a,b)})\\
&= \max_{\wb_a\in\RR^{k_a}, \wb_b\in\RR^{k_b}} \Cor(\ub^T(\Xb_a
- \tilde\Ab\Xb_{\Ncal(a,b)}), \vb^T(\Xb_b - \tilde\Bb\Xb_{\Ncal(a,b)})),     
  \end{aligned}
\end{equation*}
where $\tilde\Ab = \argmin\ \EE\rbr{\norm{\Xb_a -
  \Ab\Xb_{\Ncal(a,b)}}_2^2}$ and $\tilde\Bb = \argmin\ \EE\rbr{\norm{\Xb_b
  - \Bb\Xb_{\Ncal(a,b)}}_2^2}$. Let $\bar\Sigmab(a,b) = \Var(\Xb_a,
\Xb_b\mid\Xb_{\Ncal(a,b)})$. Now we can express the partial canonical
correlation as
\[
\rho_c(\Xb_a, \Xb_b; \Xb_{\Ncal(a,b)}) = 
\max_{\wb_a\in\RR^{k_a}, \wb_b\in\RR^{k_a} }\ 
\frac{\wb_a^T\bar\Sigma_{ab}\wb_b}
{\rbr{\wb_a^T\bar\Sigma_{aa}\wb_a}^{1/2}\rbr{\wb_b^T\bar\Sigma_{bb}\wb_b}^{1/2}}
\]
where 
\[
\bar\Sigmab(a,b) = \left(
\begin{array}{cc}
\bar\Sigmab_{aa} & \bar\Sigmab_{ab} \\
\bar\Sigmab_{ba} & \bar\Sigmab_{bb}
\end{array}
\right).
\]
The weight vectors $\wb_a$ and $\wb_b$ can be easily found by solving 
the system of eigenvalue equations
\begin{equation}
  \label{eq:eigenvalue_system}
\left\{
\begin{array}{c}
  \bar\Sigmab_{aa}^{-1}\bar\Sigmab_{ab}\bar\Sigmab_{bb}^{-1}\bar\Sigmab_{ba}\wb_a
  = \phi^2\wb_a \\
  \bar\Sigmab_{bb}^{-1}\bar\Sigmab_{ba}\bar\Sigmab_{aa}^{-1}\bar\Sigmab_{ab}\wb_b
  = \phi^2\wb_b
\end{array}
\right.
\end{equation}
with $\wb_a$ and $\wb_b$ being the vectors that correspond to the
maximum eigenvalue $\phi^2$. Furthermore, we have $\rho_c(\Xb_a,
\Xb_b; \Xb_{\Ncal(a,b)}) = \phi$. Following \citet{katenka2011multi},
the weights $\wb_a$, $\wb_b$ can be used to access the relative
contribution of each attribute to the edge between the nodes $a$ and
$b$. In particular, the weight $(w_{a,i})^2$ characterizes the
relative contribution of the $i$th attribute of node $a$ to
$\rho_c(\Xb_a, \Xb_b; \Xb_{\Ncal(a,b)})$.

Given an estimate $\hat\Ncal(a, b) = \{ c \in V \bks \{a,b\}\ :\
\hat\Omegab_{ac} \neq 0 \text{ or } \hat\Omegab_{bc} \neq 0 \}$ of the
Markov blanket $\Ncal(a, b)$, we form the residual vectors
\[
\rb_{i, a} = \xb_{i,a} - \check{\Ab}\xb_{i, \hat \Ncal(a,b)},
\qquad
\rb_{i, b} = \xb_{i,b} - \check{\Bb}\xb_{i, \hat \Ncal(a,b)},
\]
where $\check{\Ab}$ and $\check{\Bb}$ are the least square estimators
of ${\tilde{\Ab}}$ and ${\tilde{\Bb}}$. Given the residuals, we form
$\check\Sigmab(a,b)$, the empirical version of the matrix $\bar
\Sigmab(a,b)$, by setting
\[
{\check \Sigma}_{aa} = {\Cor}\rbr{\{\rb_{i,a}\}_{i \in [n]}},\quad
{\check \Sigma}_{bb} = {\Cor}\rbr{\{\rb_{i,b}\}_{i \in [n]}},\quad
{\check \Sigma}_{ab} = {\Cor}\rbr{\{\rb_{i,a} \}_{i \in [n]}, \{\rb_{i,a} \}_{i \in [n]}}.
\]
Now, solving the eigenvalue system in \eqref{eq:eigenvalue_system}
will give us estimates of the vectors $\wb_a$, $\wb_b$ and the partial
canonical correlation.

Note that we have described a way to interpret the elements of the
off-diagonal blocks in the estimated precision matrix. The elements of
the diagonal blocks, which correspond to coefficients between
attributes of the same node, can still be interpreted by their
relationship to the partial correlation coefficients.

\section{Simulation Studies}
\label{sec:simulation}

In this section, we perform a set of simulation studies to illustrate
finite sample performance of our method. We demonstrate that the
scalings of $(n,p,s)$ predicted by the theory are sharp. Furthermore,
we compare against three other methods: 1) a method that uses
the ${\rm glasso}$ first to estimate one graph over each of the $k$
individual attributes and then creates an edge in the resulting graph
if an edge appears in at least one of the single attribute graphs, 2)
the method of \cite{Guo:09} and 3) the method of
\cite{danaher2011joint}.  We have also tried applying the ${\rm
  glasso}$ to estimate the precision matrix for the model in
\eqref{eq:vector_model} and then post-processing it, so that an edge
appears in the resulting graph if the corresponding block of the
estimated precision matrix is non-zero. The result of this method is
worse compared to the first baseline, so we do not report it here.

All the methods above require setting one or two tuning parameters
that control the sparsity of the estimated graph. We select these
tuning parameters by minimizing the Bayesian information criterion,
which balances the goodness of fit of the model and its complexity,
over a grid of parameter values. For our multi-attribute method, the
Bayesian information criterion takes the following form
\[
{\rm BIC}(\lambda) = \tr(\Sbb\hat\Omegab) - \log|\hat\Omegab| 
+ \sum_{a < b} \ind\{\hat\Omegab_{ab} \neq \zero\}k_ak_b\log(n).
\]
Other methods for selecting tuning parameters are possible, like
minimization of cross-validation or Akaike information criterion.
However, these methods tend to select models that are too dense.

Theoretical results given in $\S$\ref{sec:theory} characterize the
sample size needed for consistent recovery of the underlying graph.
In particular, Proposition~\ref{prop:sparsistency} suggests that we
need $n = \theta s^2k^2\log(pk)$ samples to estimate the graph
structure consistently, for some $\theta > 0$. Therefore, if we plot
the hamming distance between the true and recovered graph
against $\theta$, we expect the curves to reach zero distance for
different problem sizes at a same point. We verify this on randomly
generated chain and nearest-neighbors graphs.

\begin{figure}[p]
  \centering

  \begin{subfigure}[b]{0.95\textwidth}
    \centering
    \includegraphics[width=\textwidth]{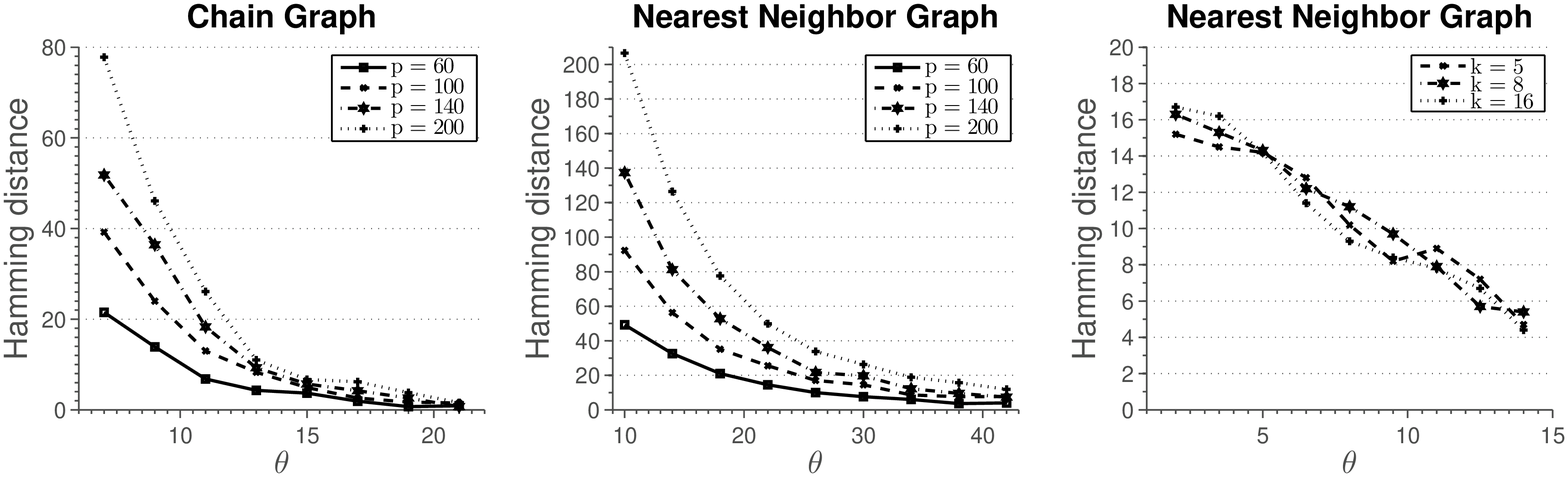}
    \caption{glasso procedure}
    \label{fig:normal:glasso}
  \end{subfigure}%

 \begin{subfigure}[b]{0.95\textwidth}
    \centering
    \includegraphics[width=\textwidth]{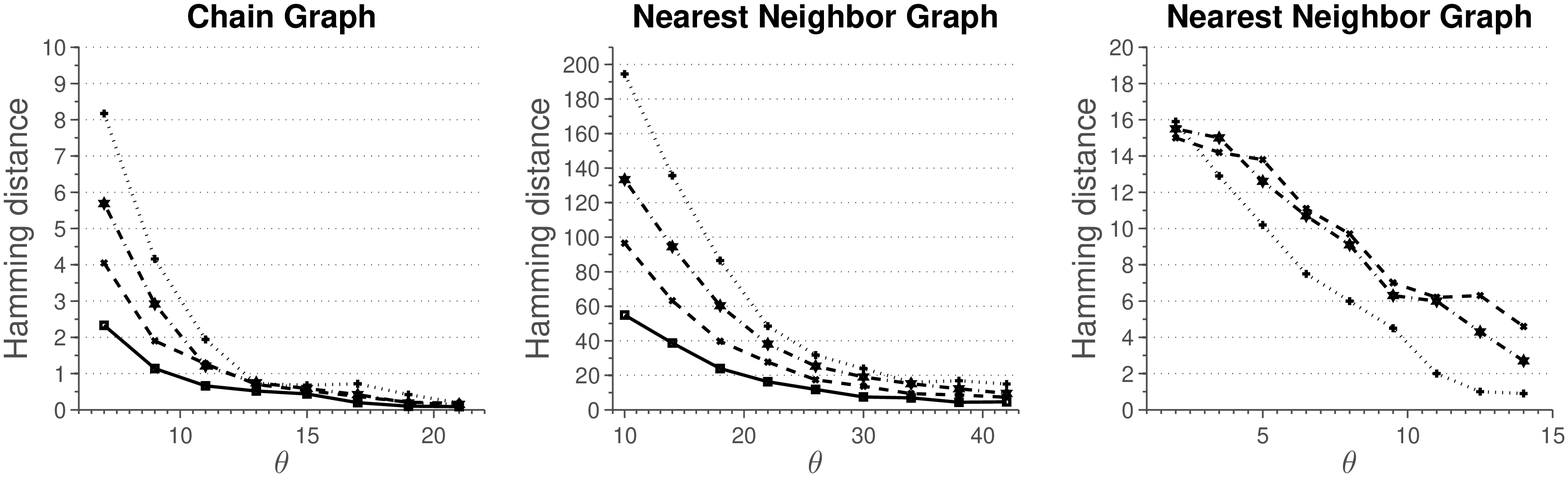}
    \caption{Procedure of \cite{danaher2011joint}}
    \label{fig:normal:mt}
  \end{subfigure}%

 \begin{subfigure}[b]{0.95\textwidth}
    \centering
    \includegraphics[width=\textwidth]{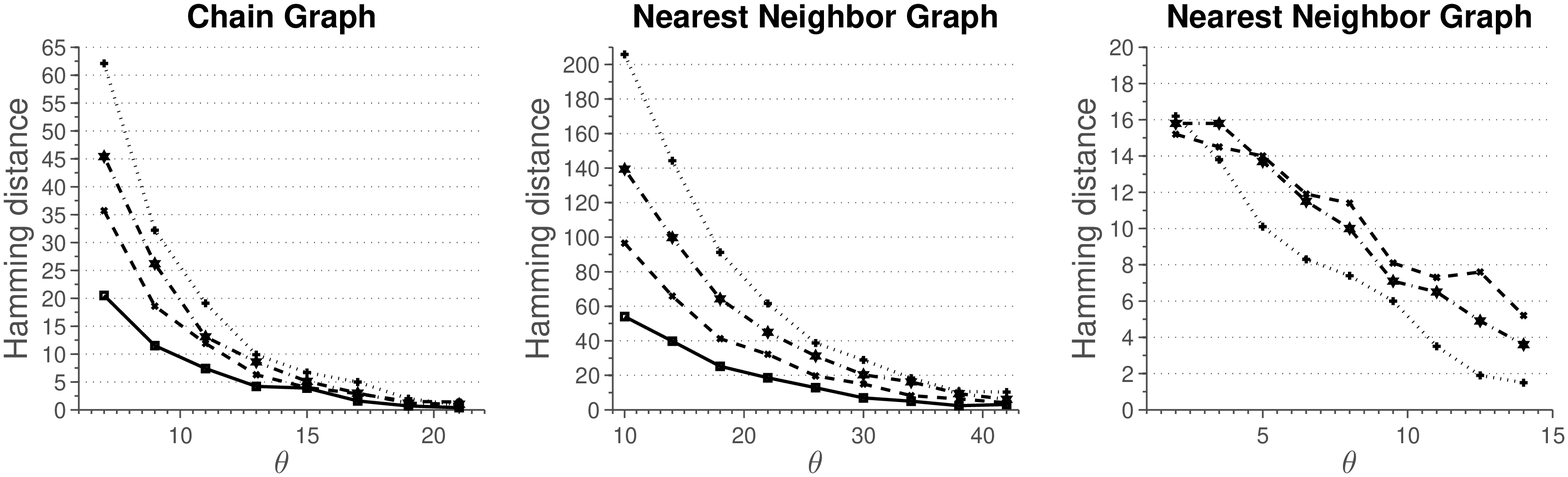}
    \caption{Procedure of \cite{Guo:09}}
    \label{fig:normal:guo}
  \end{subfigure}%

 \begin{subfigure}[b]{0.95\textwidth}
    \centering
    \includegraphics[width=\textwidth]{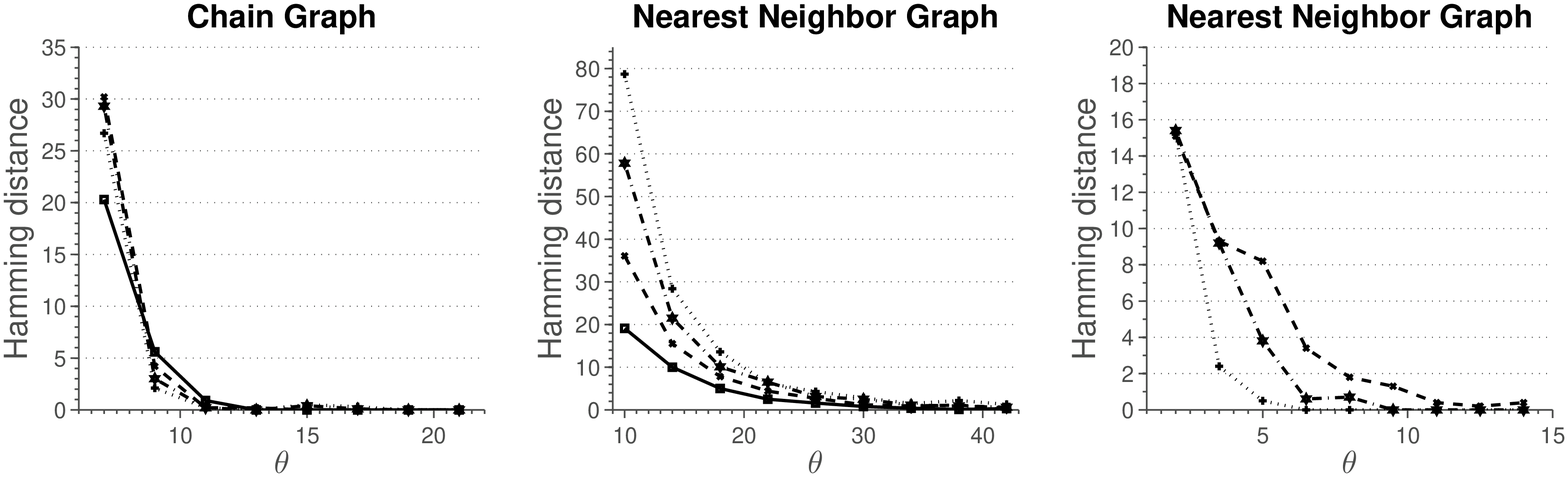}
    \caption{Multi-attribute procedure}
    \label{fig:normal:ma}
  \end{subfigure}%
  \caption{Average hamming distance plotted against the rescaled
    sample size. Results are averaged over 100 independent
    runs. Off-diagonal blocks are full matrices.}
  \label{fig:simulation:normal}
\end{figure}

We generate data as follows. A random graph with $p$ nodes is created
by first partitioning nodes into $p/20$ connected components, each
with $20$ nodes, and then forming a random graph over these $20$
nodes. A chain graph is formed by permuting the nodes and connecting
them in succession, while a nearest-neighbor graph is constructed
following the procedure outlined in \citet{li06gradient}. That is, for
each node, we draw a point uniformly at random on a unit square and
compute the pairwise distances between nodes. Each node is then
connected to $s=4$ closest neighbors.  Since some of nodes will have
more than $4$ adjacent edges, we  randomly remove edges from nodes that
have degree larger than $4$ until the maximum degree of a node in a
network is $4$. Once the graph is created, we construct a
precision matrix, with non-zero blocks corresponding to edges in the
graph. Elements of diagonal blocks are set as $0.5^{|a-b|}$, $0\leq
a,b\leq k$, while off-diagonal blocks have elements with the same
value, $0.2$ for chain graphs and $0.3/k$ for nearest-neighbor
networks. Finally, we add $\rho I$ to the precision matrix, so that
its minimum eigenvalue is equal to $0.5$.  Note that $s=2$ for the
chain graph and $s=4$ for the nearest-neighbor graph. Simulation
results are averaged over 100 replicates.

Figure~\ref{fig:simulation:normal} shows simulation results.  Each row
in the figure reports results for one method, while each column in the
figure represents a different simulation setting. For the first two
columns, we set $k=3$ and vary the total number of nodes in the
graph. The third simulation setting sets the total number of nodes
$p=20$ and changes the number of attributes $k$. In the case of the
chain graph, we observe that for small sample sizes the method of
\cite{danaher2011joint} outperforms all the other methods. We note
that the multi-attribute method is estimating many more parameters,
which require large sample size in order to achieve high
accuracy. However, as the sample size increases, we observe that
multi-attribute method starts to outperform the other methods. In
particular, for the sample size indexed by $\theta = 13$ all the graph
are correctly recovered, while other methods fail to recover the
graph consistently at the same sample size. In the case of
nearest-neighbor graph, none of the methods recover the graph well for
small sample sizes. However, for moderate sample sizes,
multi-attribute method outperforms the other methods.  Furthermore,
as the sample size increases none of the other methods recover the
graph exactly. This suggests that the conditions for consistent
graph recovery may be weaker in the multi-attribute setting.

\begin{figure}[p]
  \centering

  \begin{subfigure}[b]{0.9\textwidth}
    \centering
    \begin{subfigure}[b]{0.4\textwidth}
      \centering
      \includegraphics[width=\textwidth]{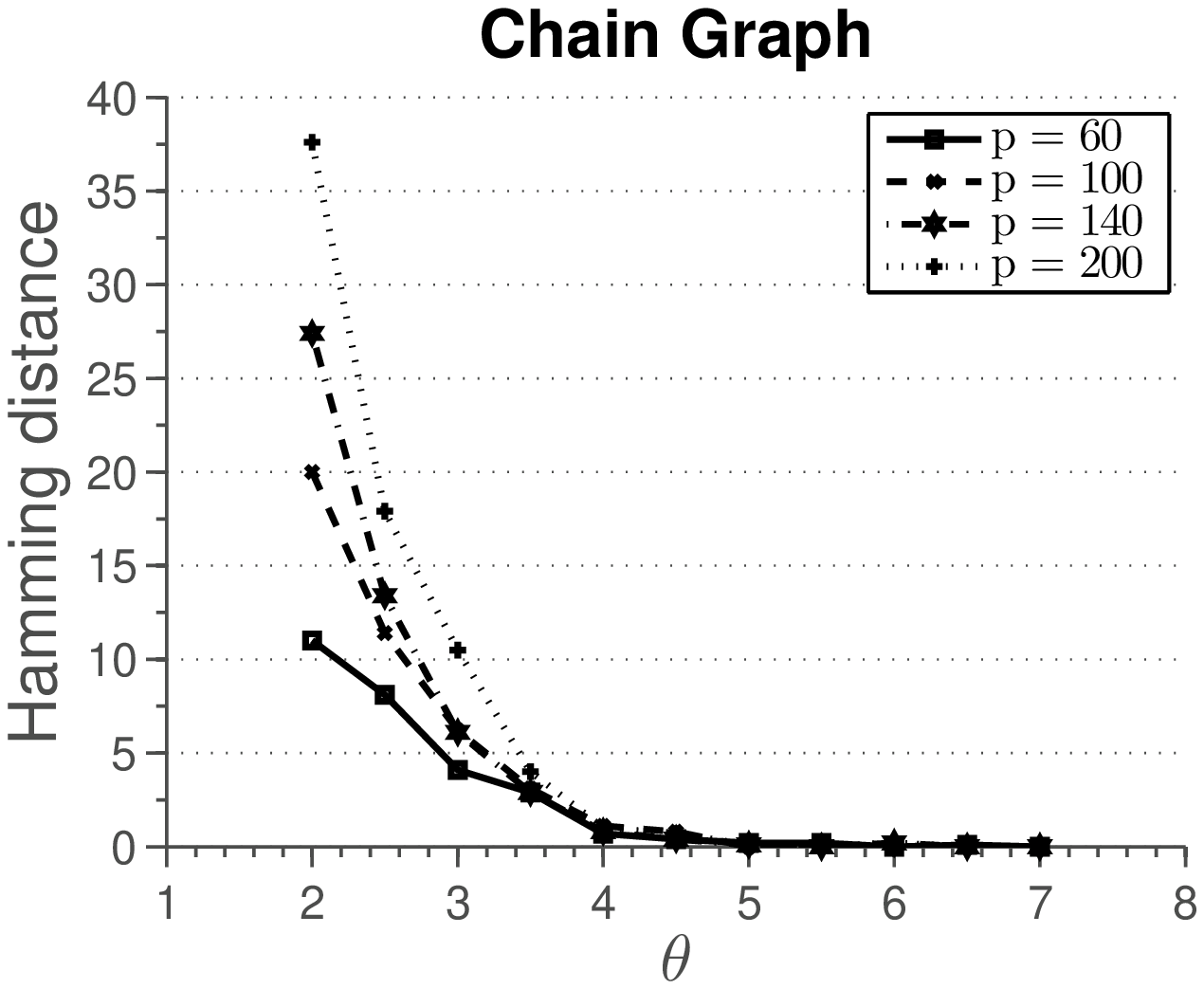}
    \end{subfigure}%
    \hfill
    \begin{subfigure}[b]{0.4\textwidth}
      \centering
      \includegraphics[width=\textwidth]{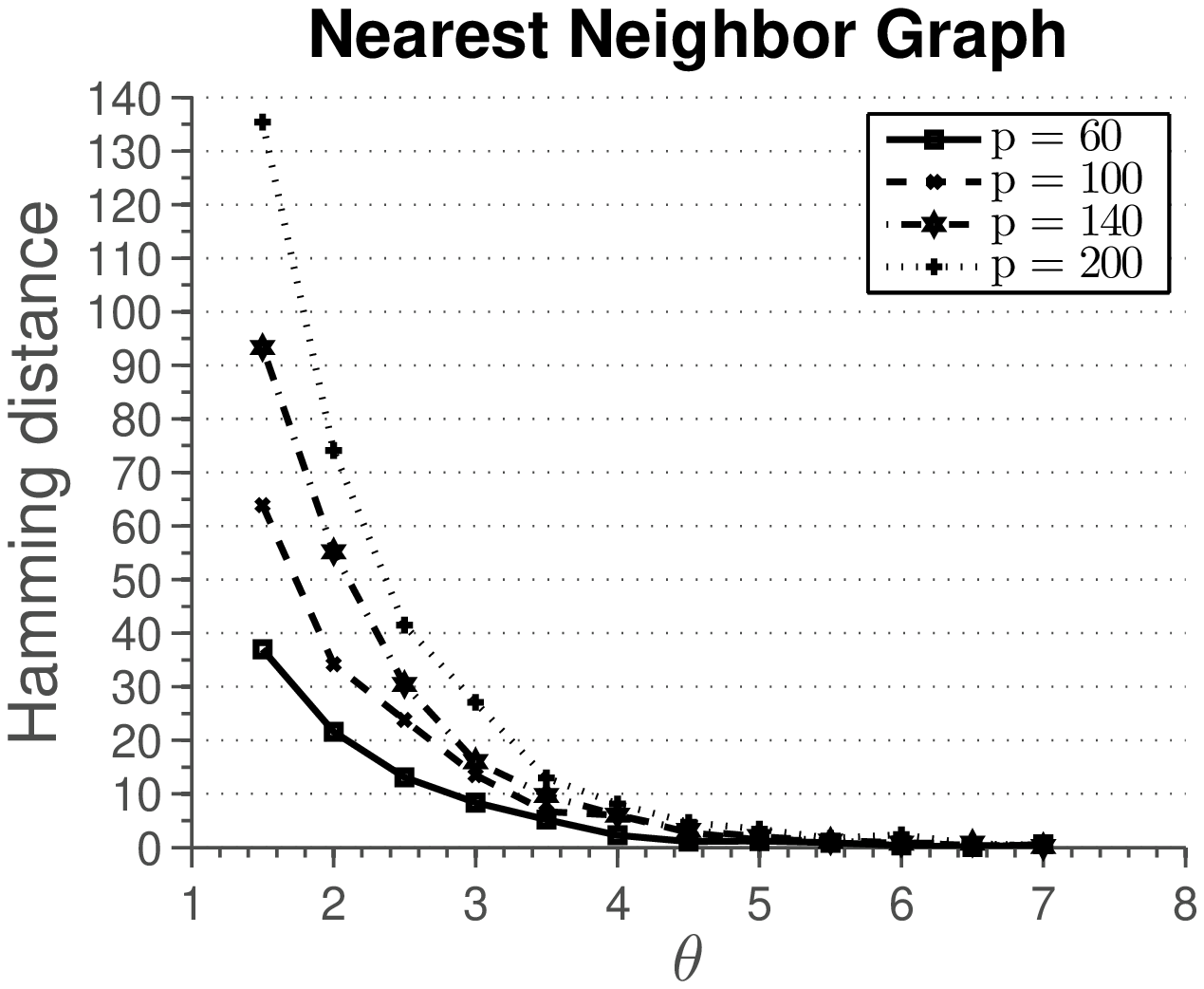}
    \end{subfigure}%
    \caption{glasso procedure}
    \label{fig:easy:glasso}
  \end{subfigure}%

 \begin{subfigure}[b]{0.9\textwidth}
    \centering
    \begin{subfigure}[b]{0.4\textwidth}
      \centering
      \includegraphics[width=\textwidth]{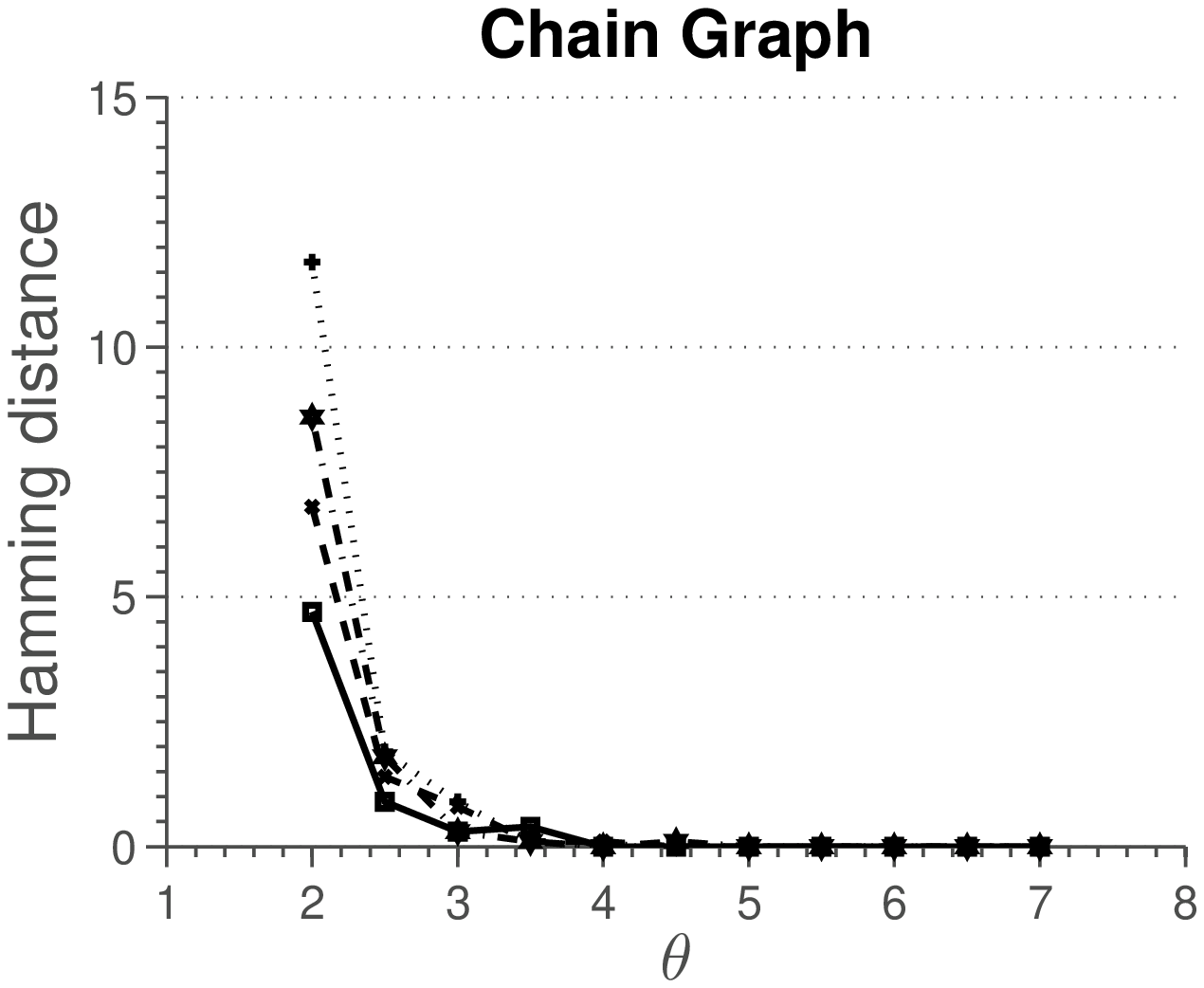}
    \end{subfigure}%
    \hfill
    \begin{subfigure}[b]{0.4\textwidth}
      \centering
      \includegraphics[width=\textwidth]{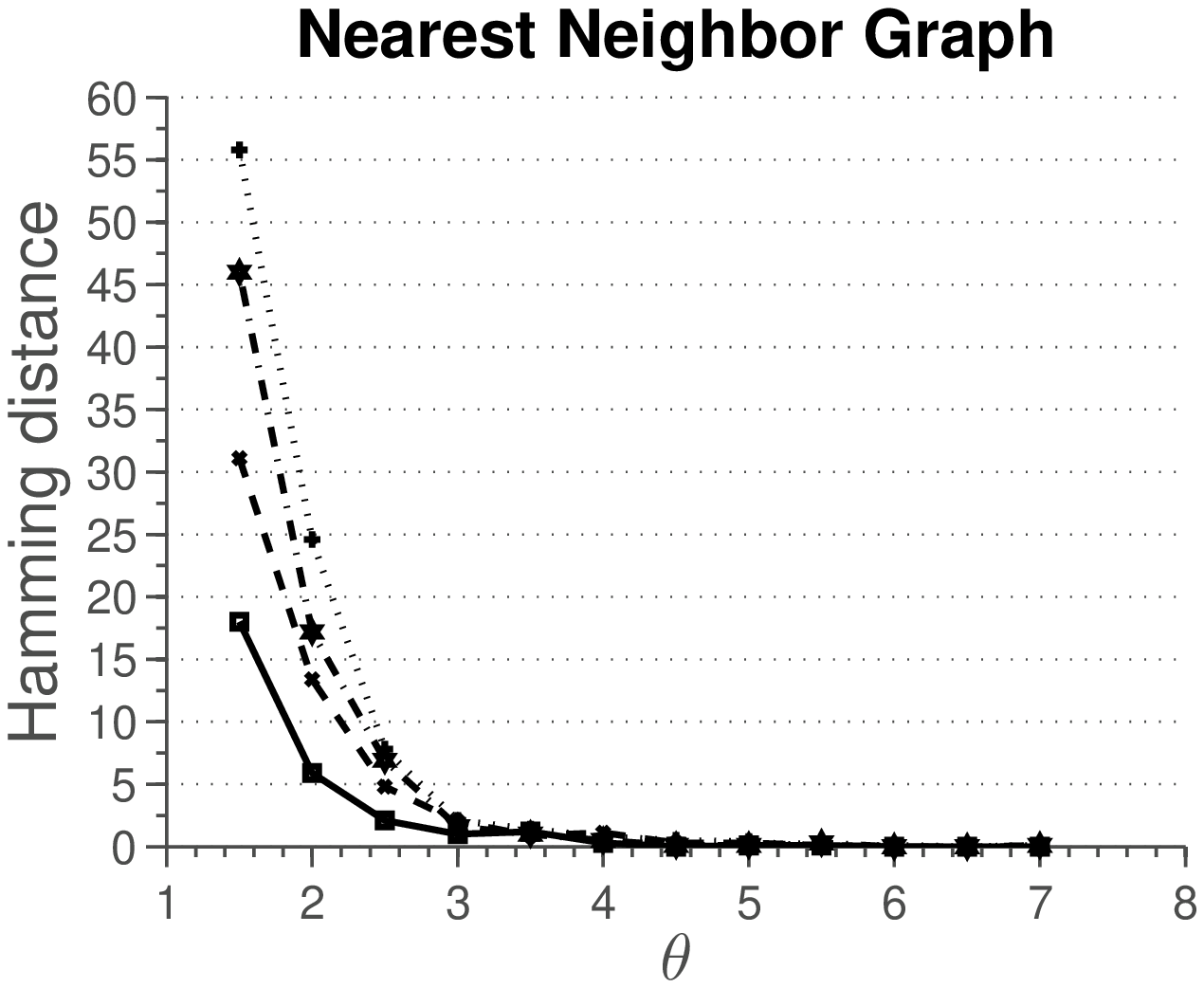}
    \end{subfigure}%
    \caption{Procedure of \cite{danaher2011joint}}
    \label{fig:easy:mt}
  \end{subfigure}%

 \begin{subfigure}[b]{0.9\textwidth}
    \centering
    \begin{subfigure}[b]{0.4\textwidth}
      \centering
      \includegraphics[width=\textwidth]{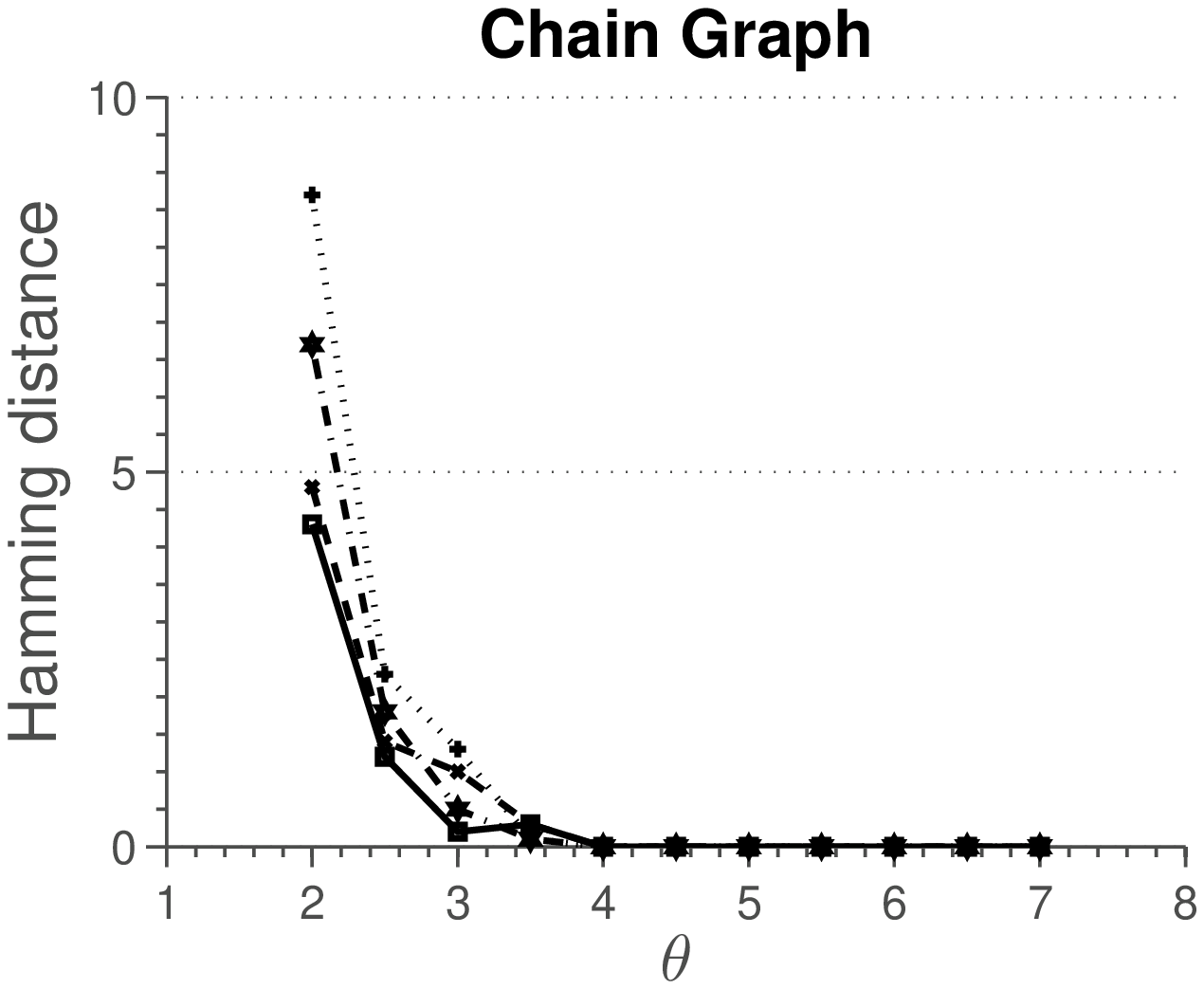}
    \end{subfigure}%
    \hfill
    \begin{subfigure}[b]{0.4\textwidth}
      \centering
      \includegraphics[width=\textwidth]{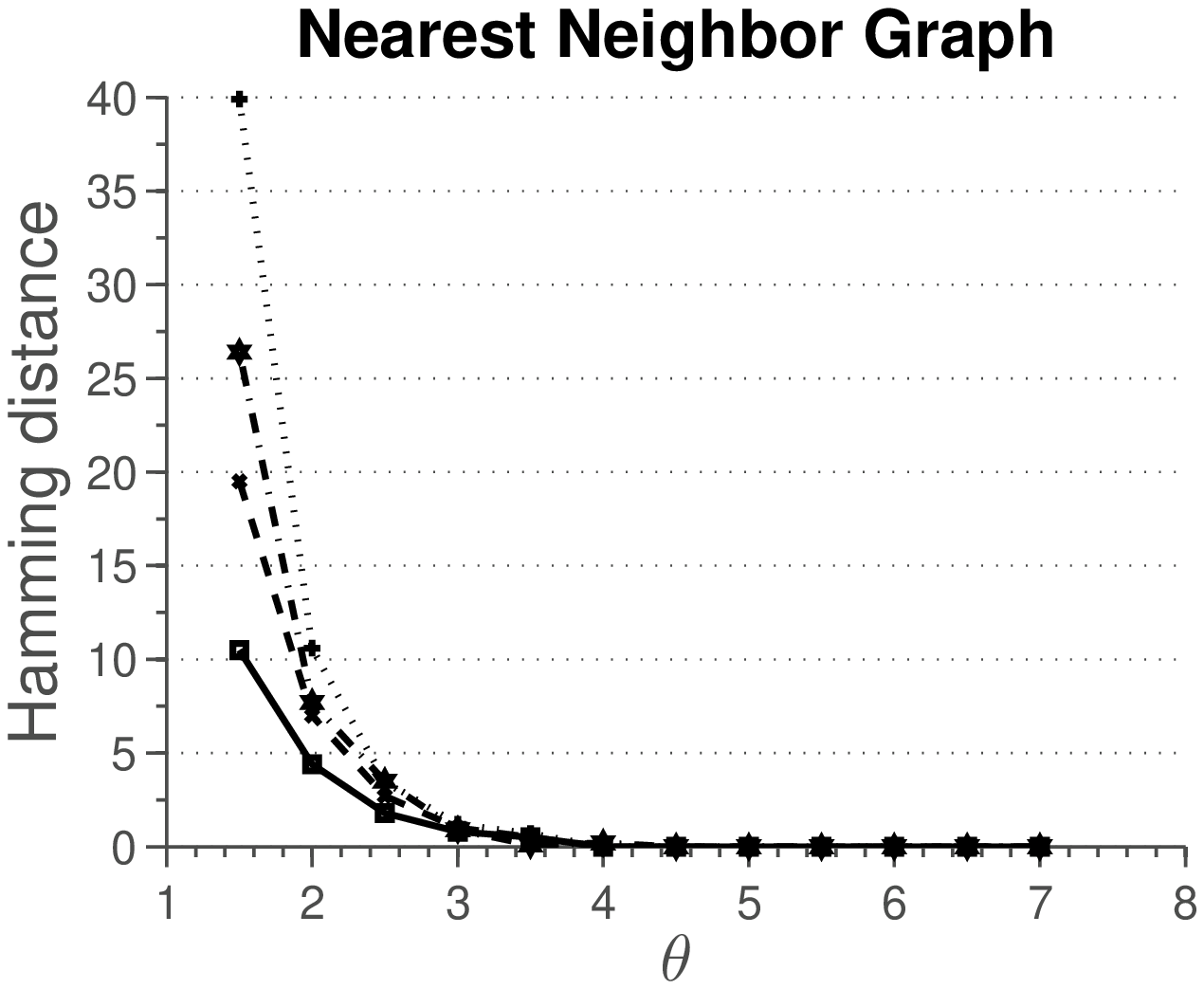}
    \end{subfigure}%
    \caption{Procedure of \cite{Guo:09}}
    \label{fig:easy:guo}
  \end{subfigure}%

 \begin{subfigure}[b]{0.9\textwidth}
    \centering
    \begin{subfigure}[b]{0.4\textwidth}
      \centering
      \includegraphics[width=\textwidth]{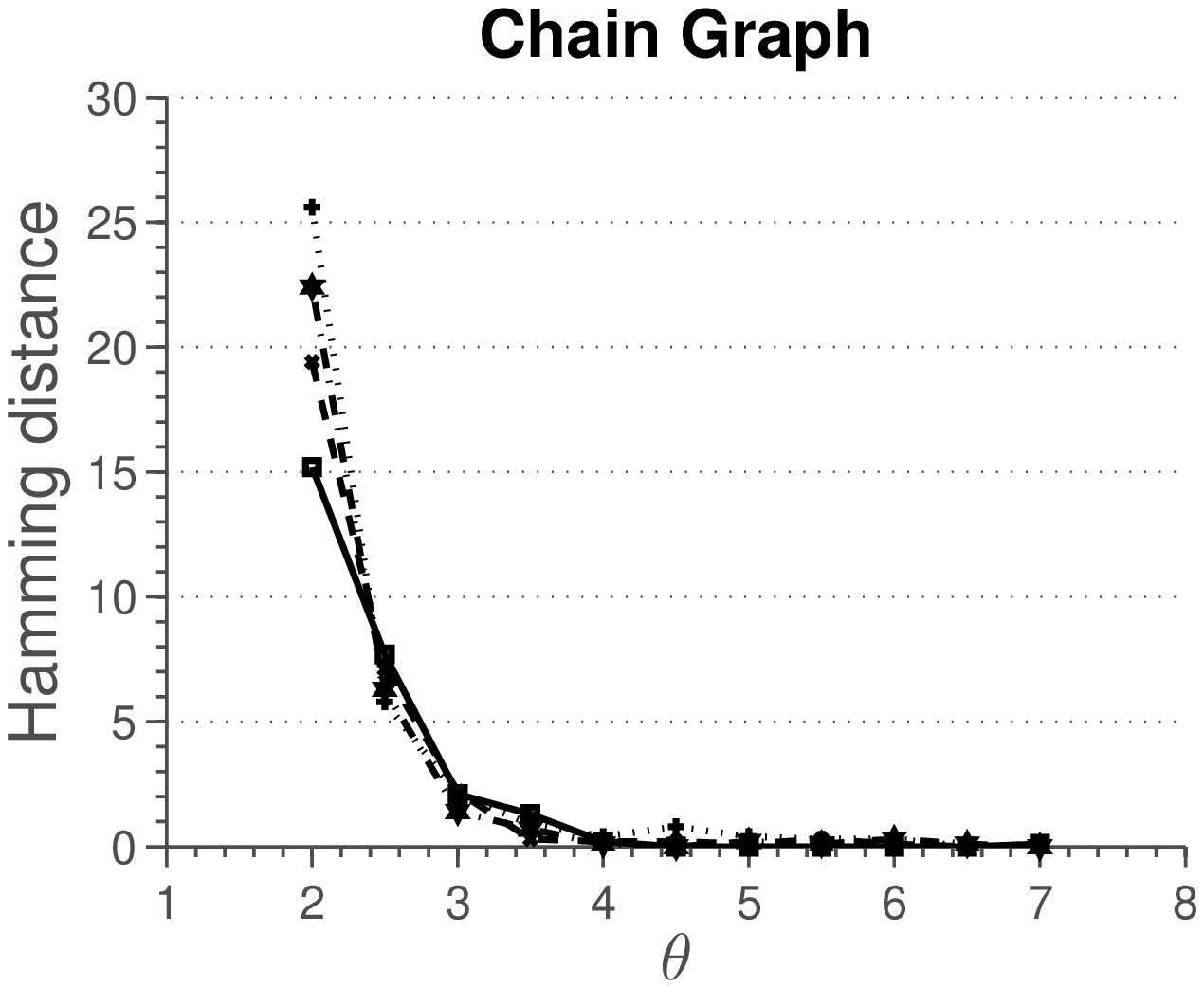}
    \end{subfigure}%
    \hfill
    \begin{subfigure}[b]{0.4\textwidth}
      \centering
      \includegraphics[width=\textwidth]{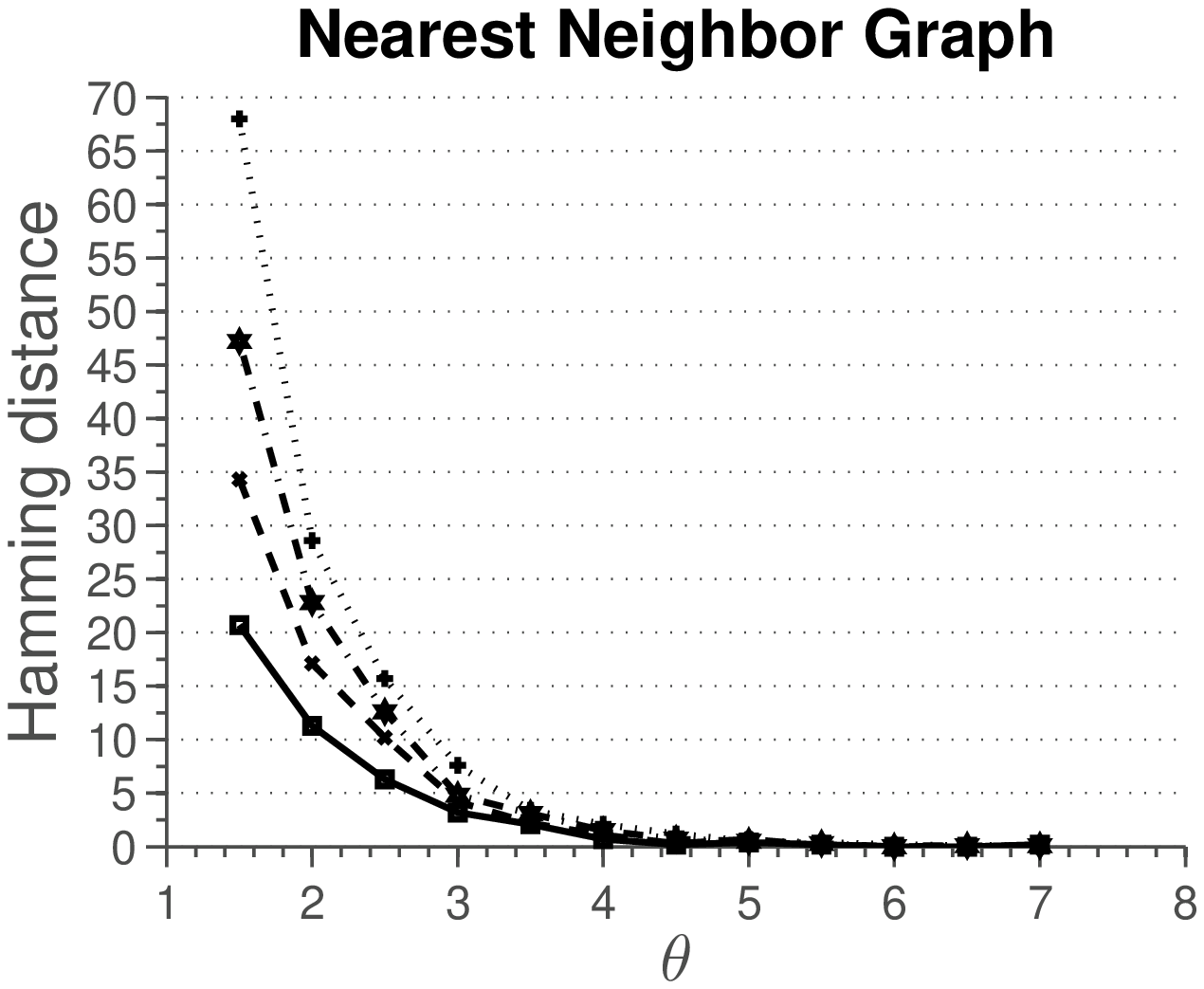}
    \end{subfigure}%
    \caption{Multi-attribute procedure}
    \label{fig:easy:ma}
  \end{subfigure}%
  \caption{Average hamming distance plotted against the rescaled
    sample size. Results are averaged over 100 independent
    runs. Blocks $\Omegab_{ab}$ of the precision matrix $\Omegab$ are
    diagonal matrices. }
  \label{fig:simulation:easy}
\end{figure}

\begin{figure}[p]
  \centering

  \begin{subfigure}[b]{0.9\textwidth}
    \centering
    \begin{subfigure}[b]{0.4\textwidth}
      \centering
      \includegraphics[width=\textwidth]{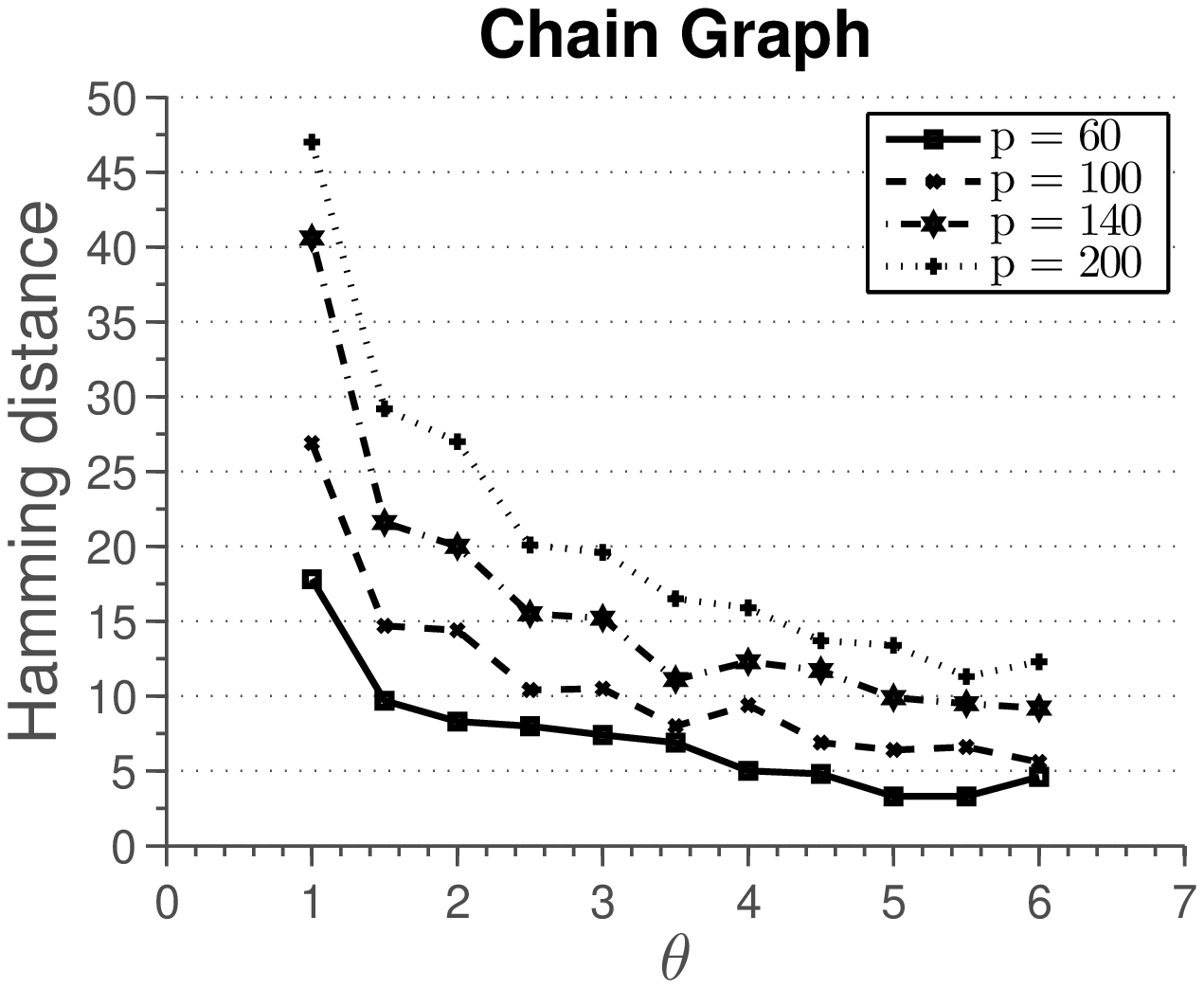}
    \end{subfigure}%
    \hfill
    \begin{subfigure}[b]{0.4\textwidth}
      \centering
      \includegraphics[width=\textwidth]{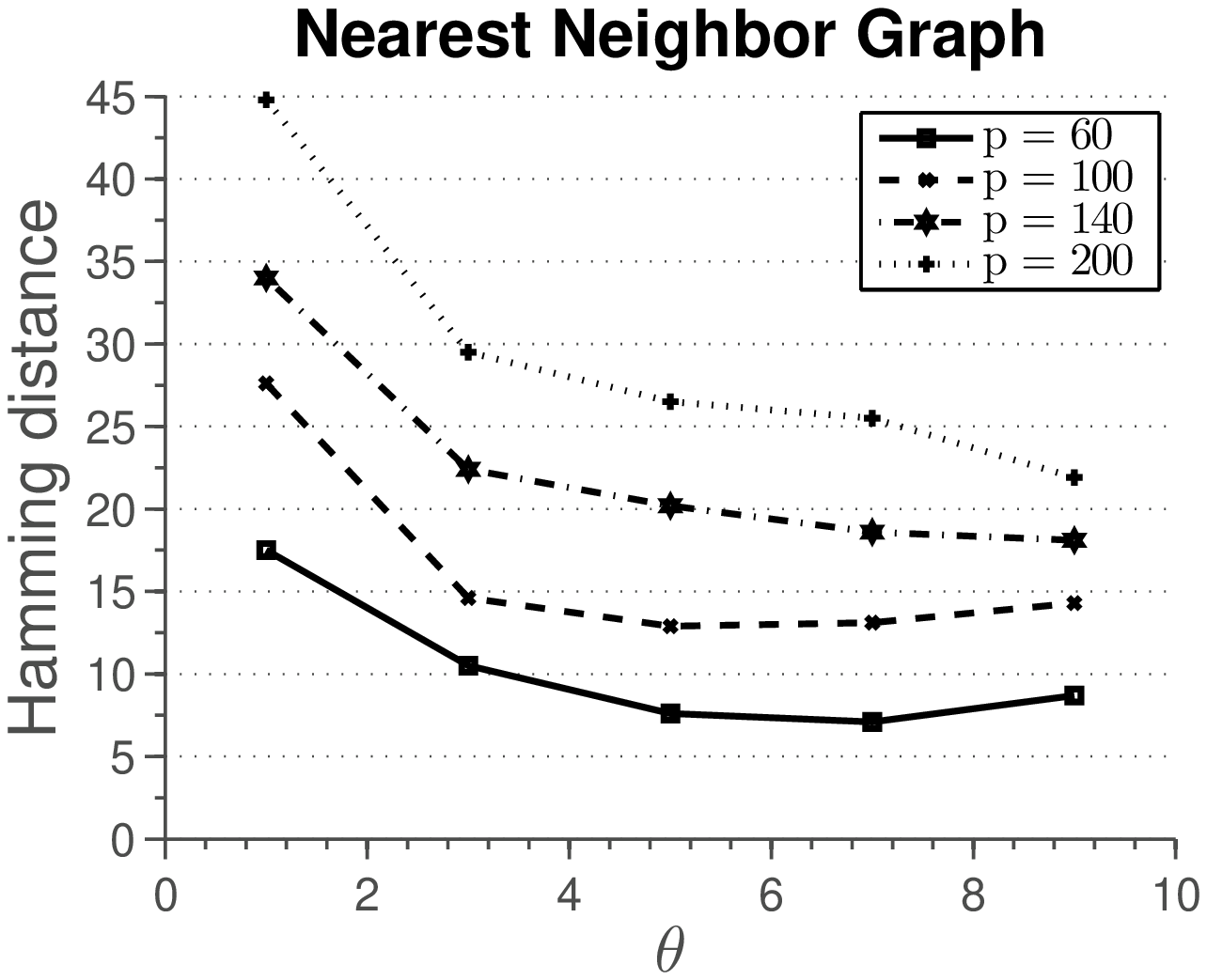}
    \end{subfigure}%
    \caption{glasso procedure}
    \label{fig:hard:glasso}
  \end{subfigure}%

 \begin{subfigure}[b]{0.9\textwidth}
    \centering
    \begin{subfigure}[b]{0.4\textwidth}
      \centering
      \includegraphics[width=\textwidth]{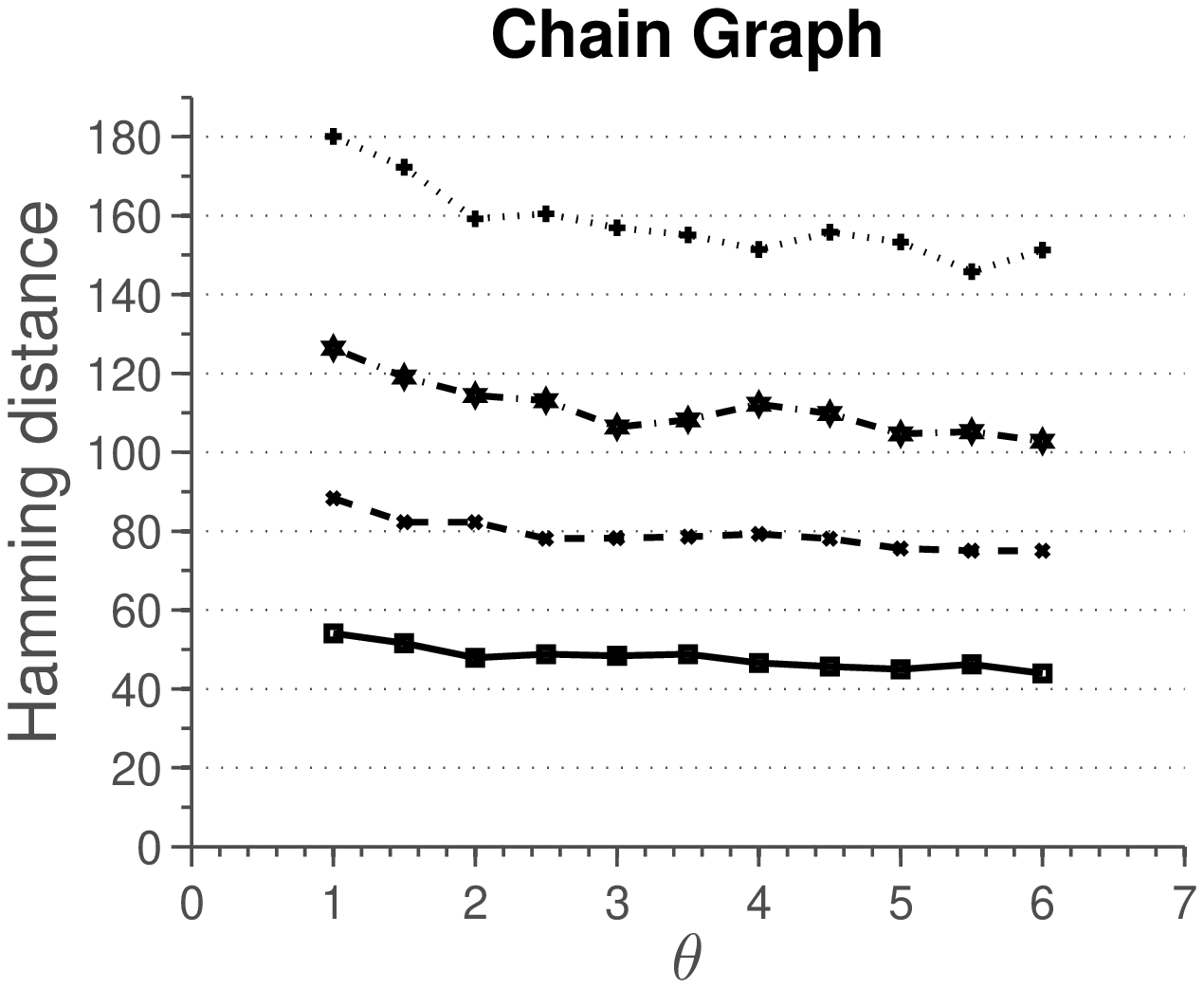}
    \end{subfigure}%
    \hfill
    \begin{subfigure}[b]{0.4\textwidth}
      \centering
      \includegraphics[width=\textwidth]{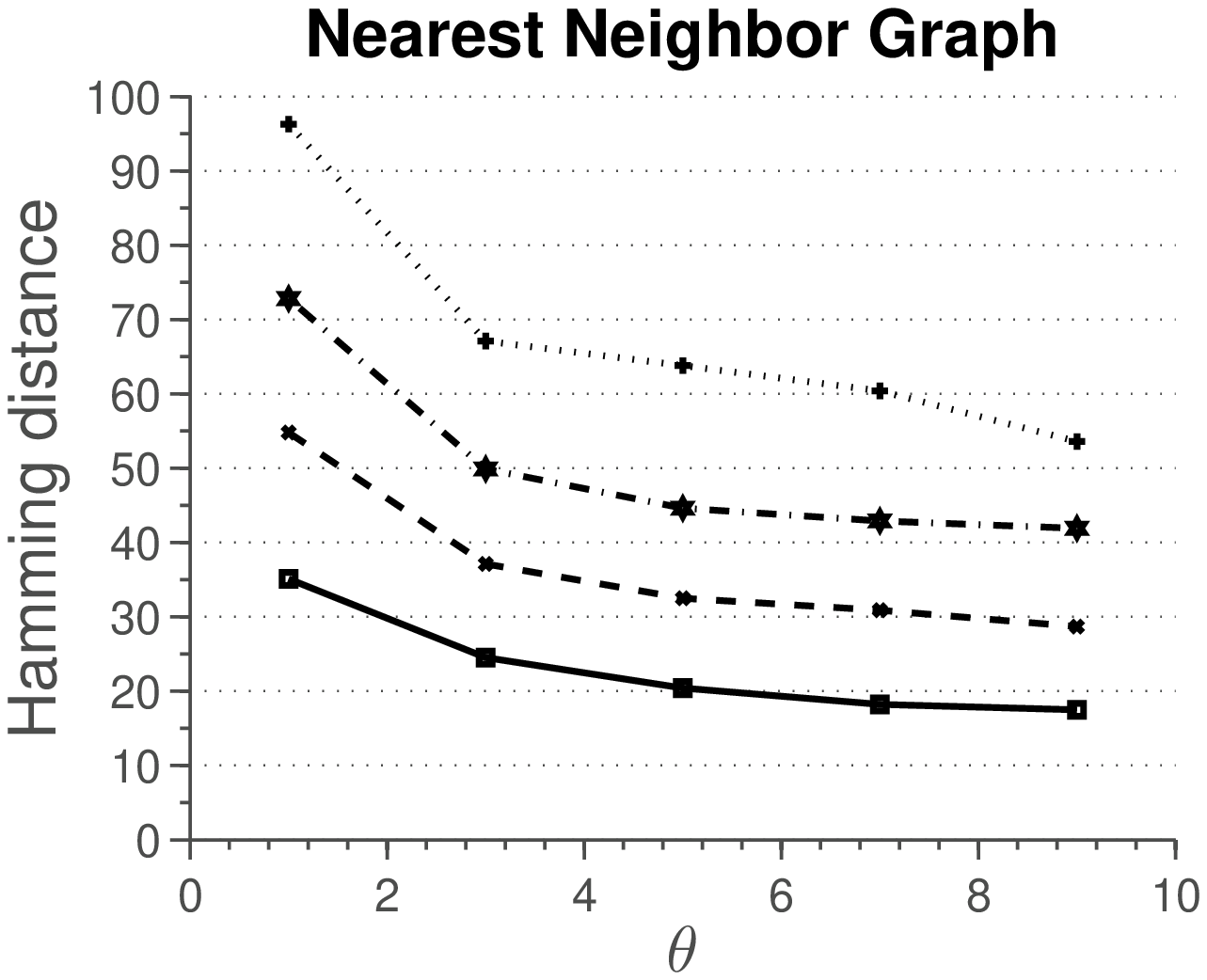}
    \end{subfigure}%
    \caption{Procedure of \cite{danaher2011joint}}
    \label{fig:hard:mt}
  \end{subfigure}%

 \begin{subfigure}[b]{0.9\textwidth}
    \centering
    \begin{subfigure}[b]{0.4\textwidth}
      \centering
      \includegraphics[width=\textwidth]{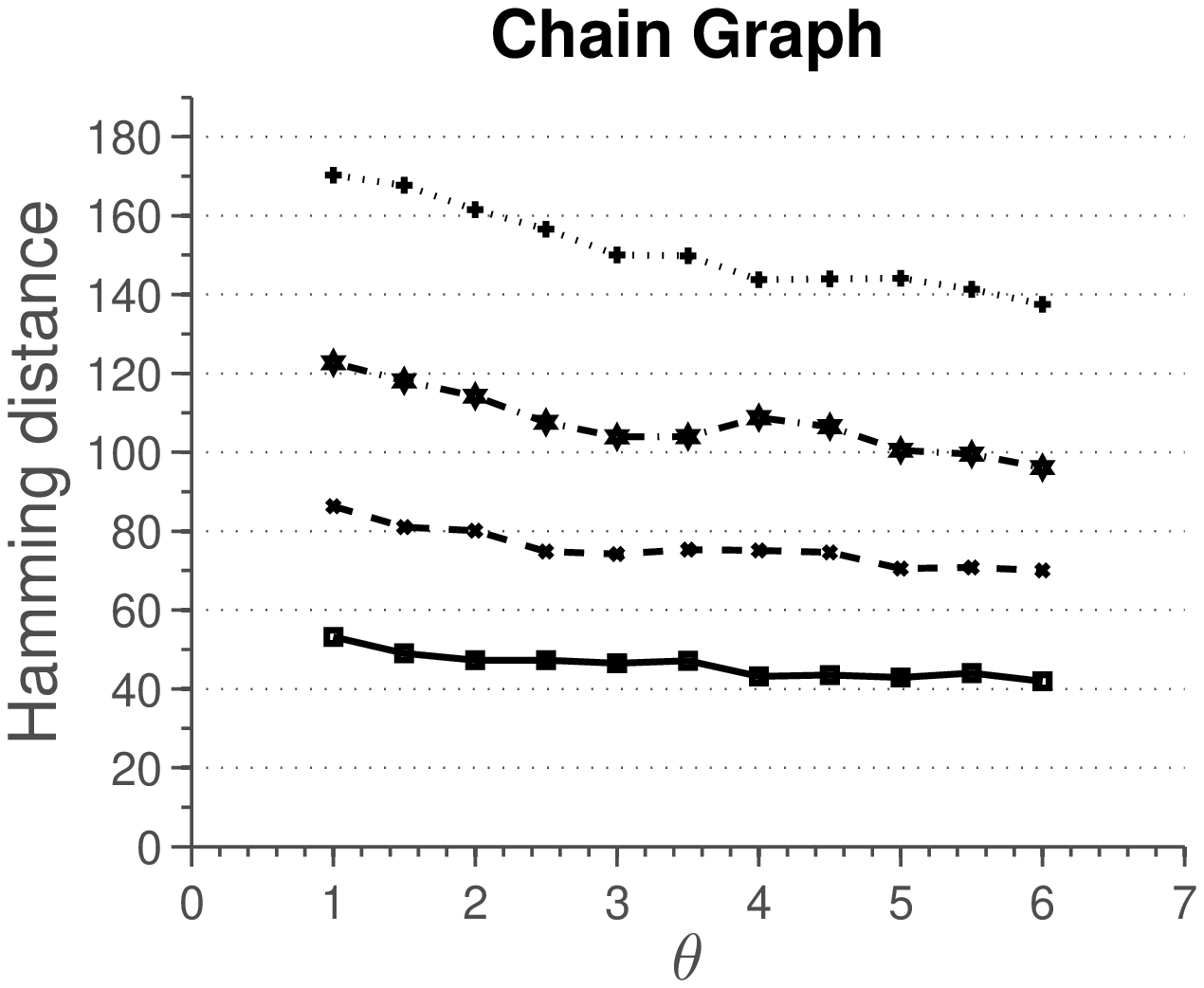}
    \end{subfigure}%
    \hfill
    \begin{subfigure}[b]{0.4\textwidth}
      \centering
      \includegraphics[width=\textwidth]{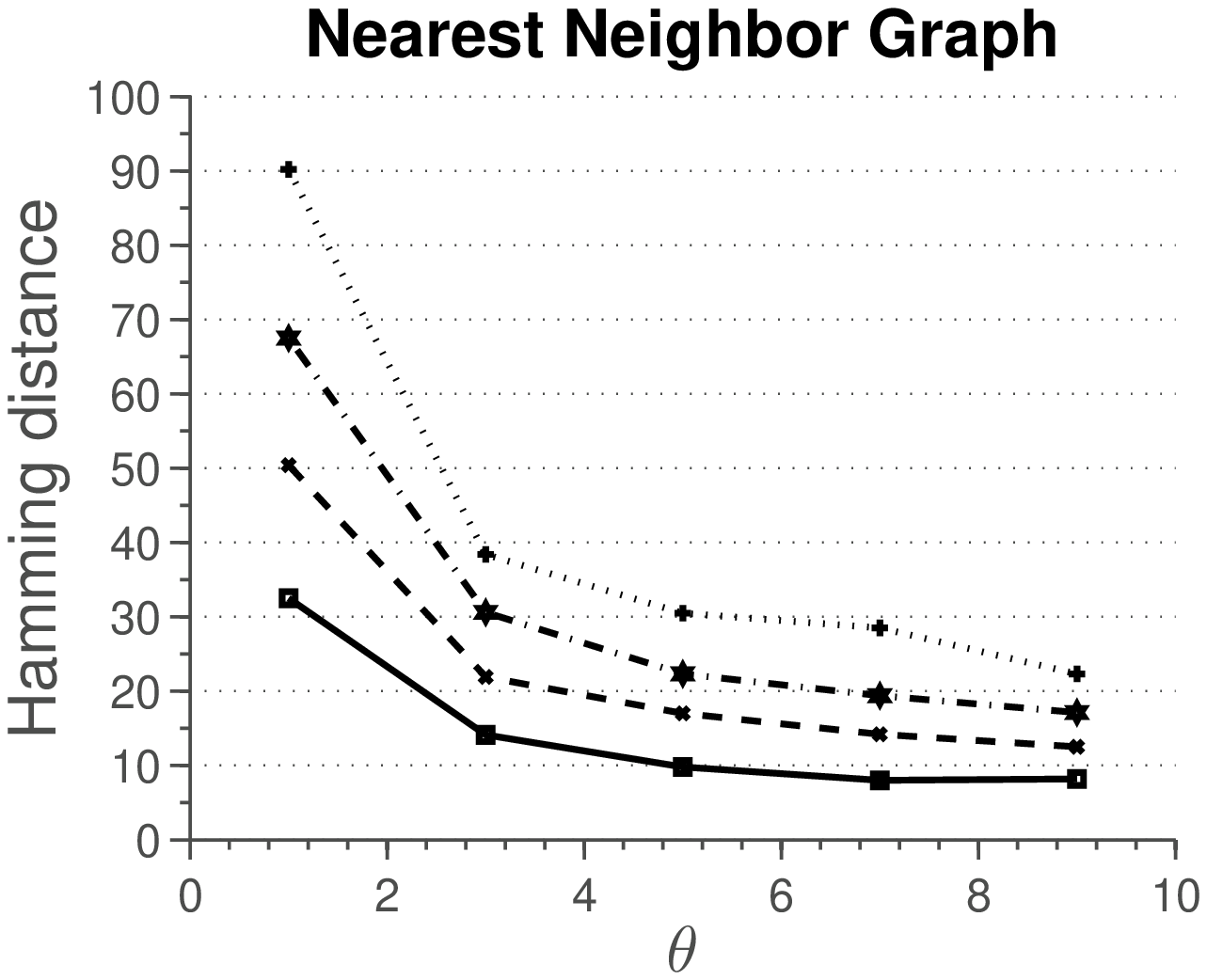}
    \end{subfigure}%
    \caption{Procedure of \cite{Guo:09}}
    \label{fig:hard:guo}
  \end{subfigure}%

 \begin{subfigure}[b]{0.9\textwidth}
    \centering
    \begin{subfigure}[b]{0.4\textwidth}
      \centering
      \includegraphics[width=\textwidth]{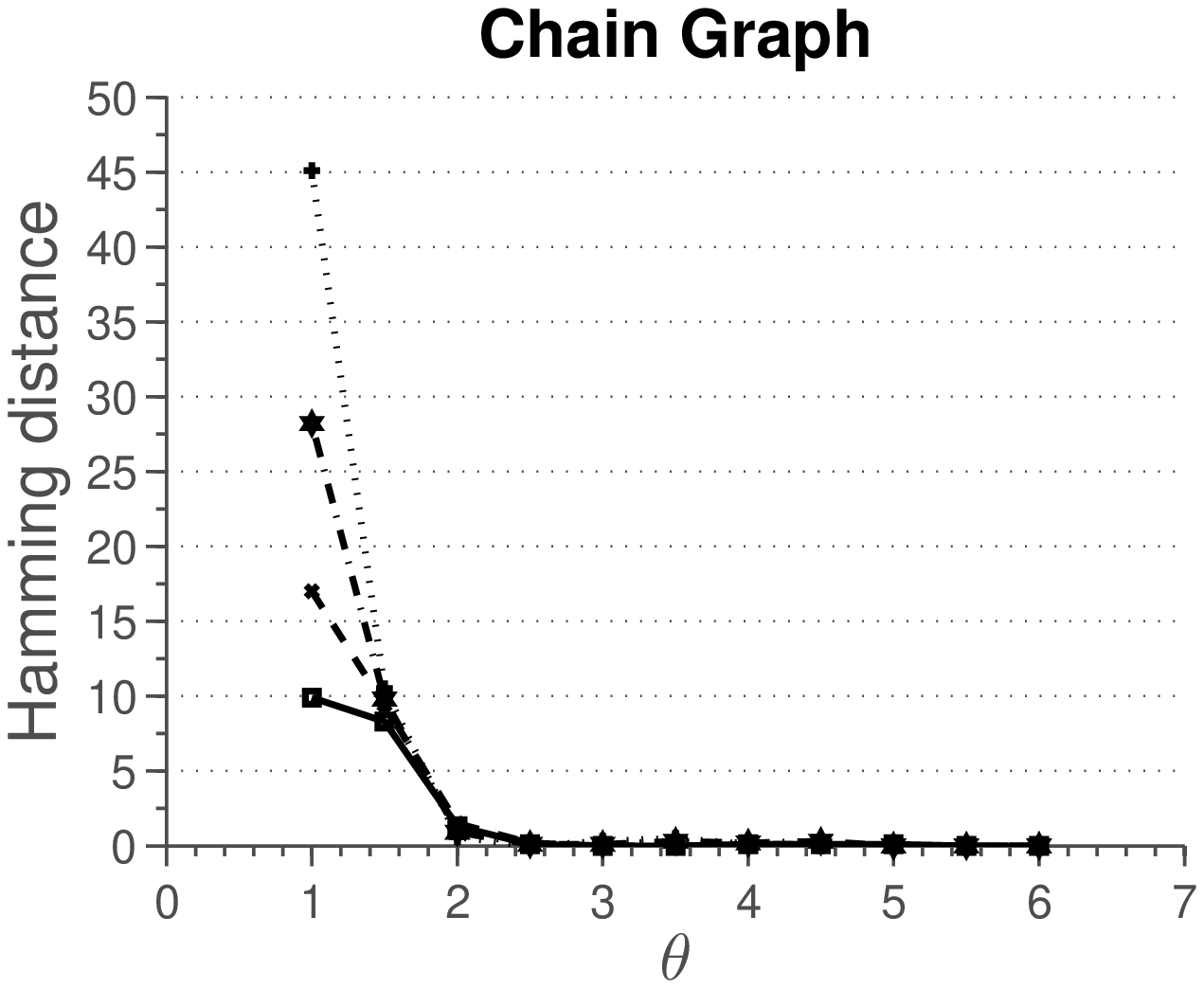}
    \end{subfigure}%
    \hfill
    \begin{subfigure}[b]{0.4\textwidth}
      \centering
      \includegraphics[width=\textwidth]{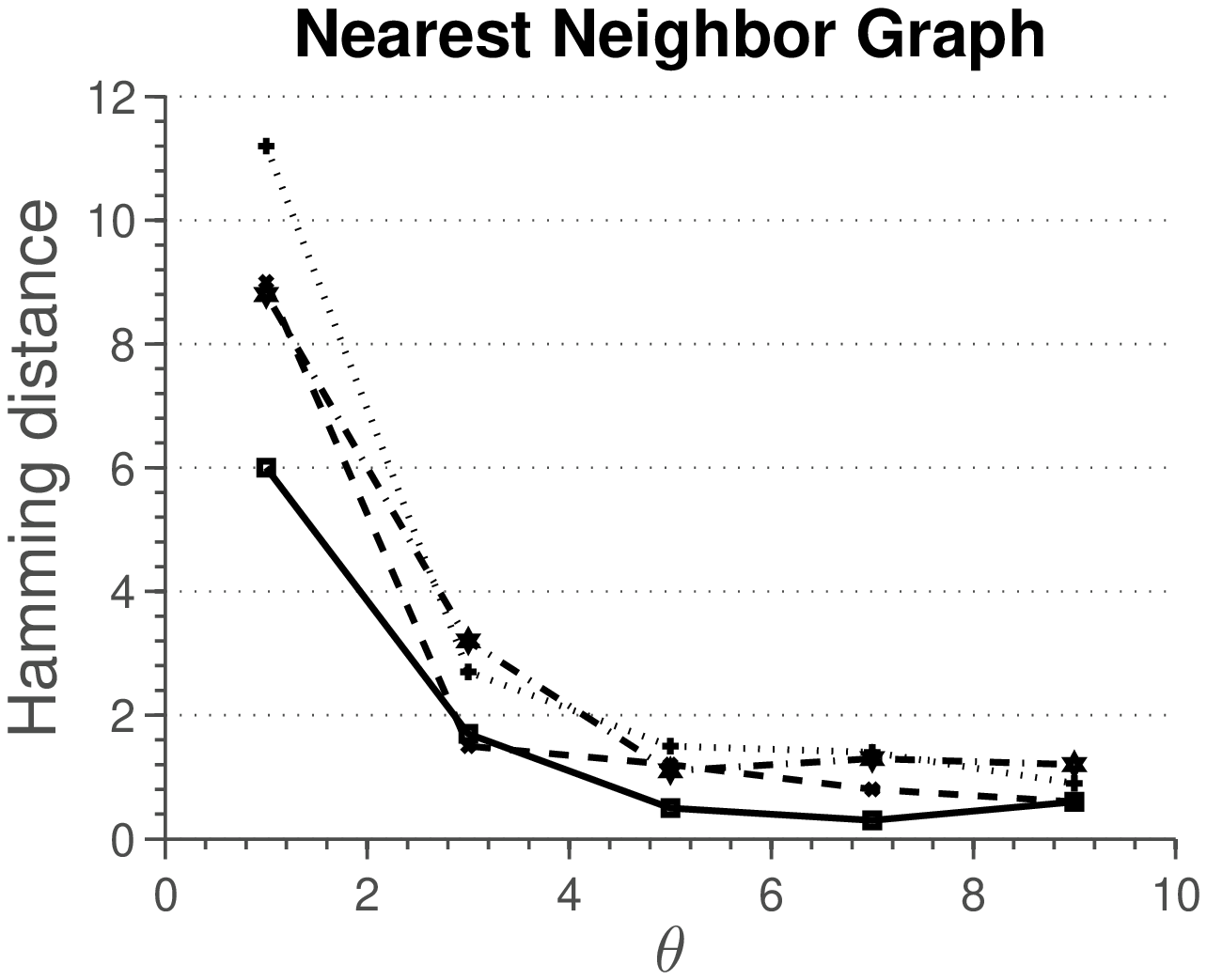}
    \end{subfigure}%
    \caption{Multi-attribute procedure}
    \label{fig:hard:ma}
  \end{subfigure}%
  \caption{Average hamming distance plotted against the rescaled
    sample size. Results are averaged over 100 independent runs.
    Off-diagonal blocks $\Omegab_{ab}$ of the precision matrix
    $\Omegab$ have zeros as diagonal elements.}
  \label{fig:simulation:hard}
\end{figure}

\begin{figure}[p]
  \centering

  \begin{subfigure}[b]{0.9\textwidth}
    \centering
    \begin{subfigure}[b]{0.4\textwidth}
      \centering
      \includegraphics[width=\textwidth]{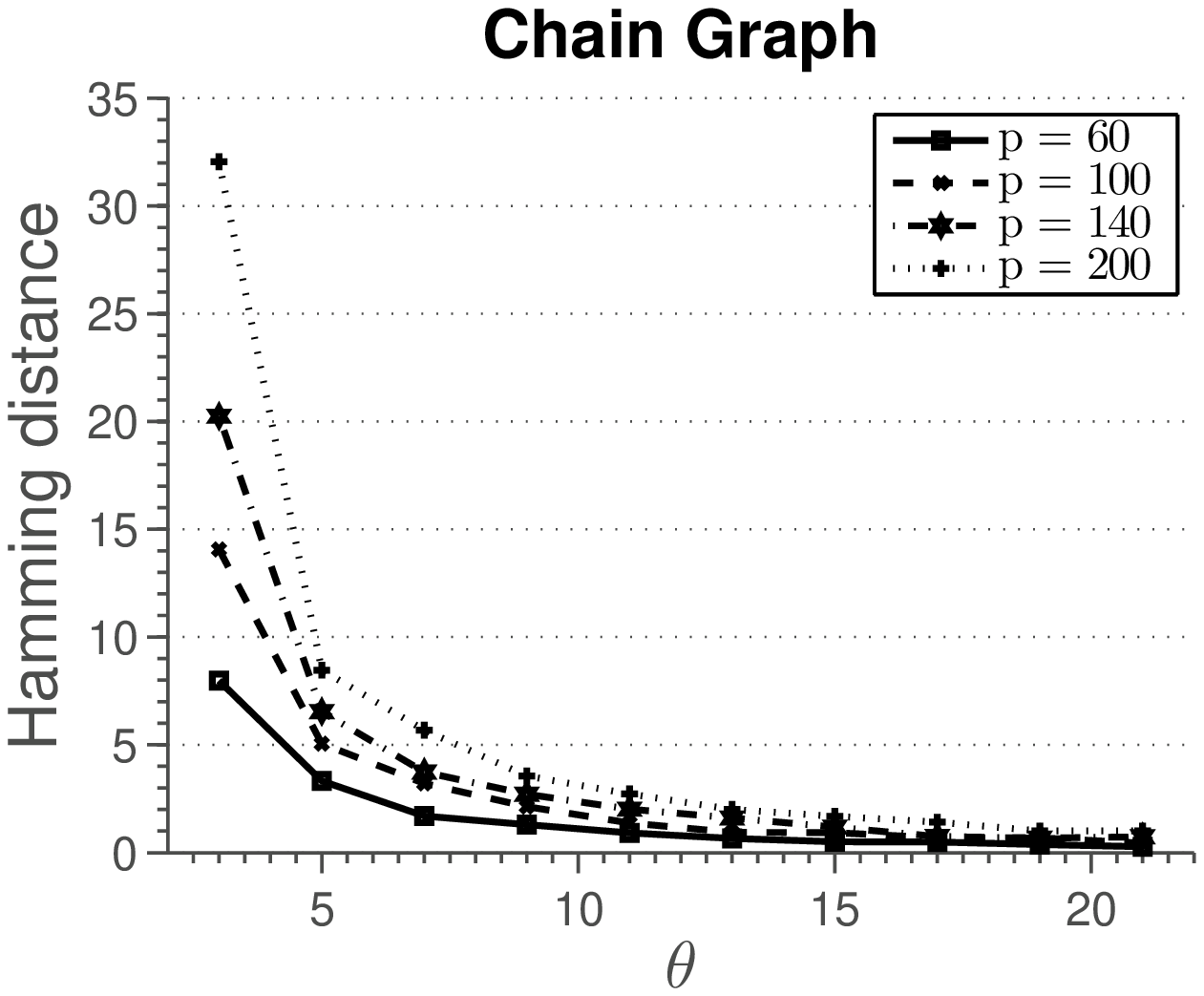}
    \end{subfigure}%
    \hfill
    \begin{subfigure}[b]{0.4\textwidth}
      \centering
      \includegraphics[width=\textwidth]{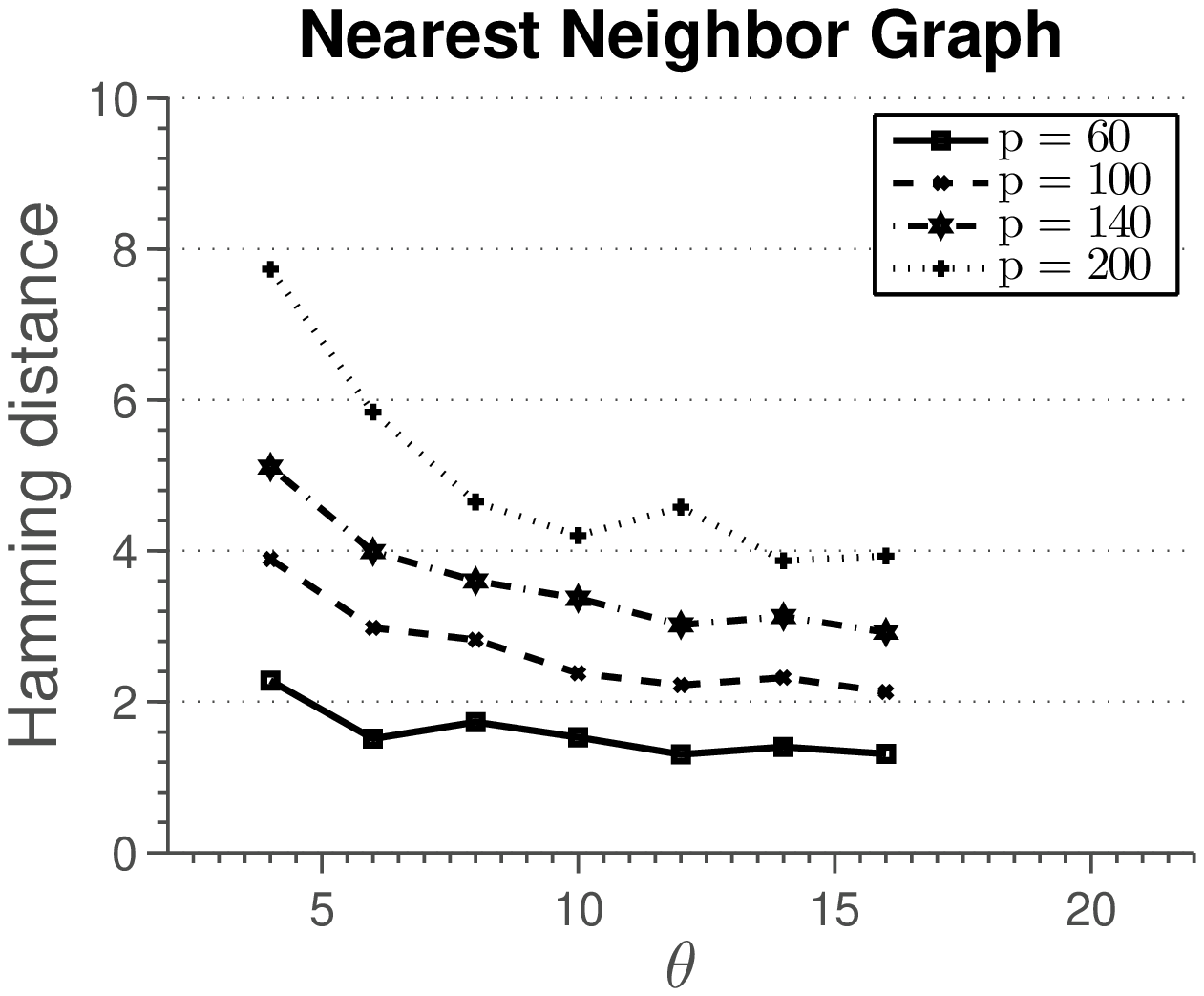}
    \end{subfigure}%
    \caption{glasso procedure}
    \label{fig:rs::glasso}
  \end{subfigure}%

 \begin{subfigure}[b]{0.9\textwidth}
    \centering
    \begin{subfigure}[b]{0.4\textwidth}
      \centering
      \includegraphics[width=\textwidth]{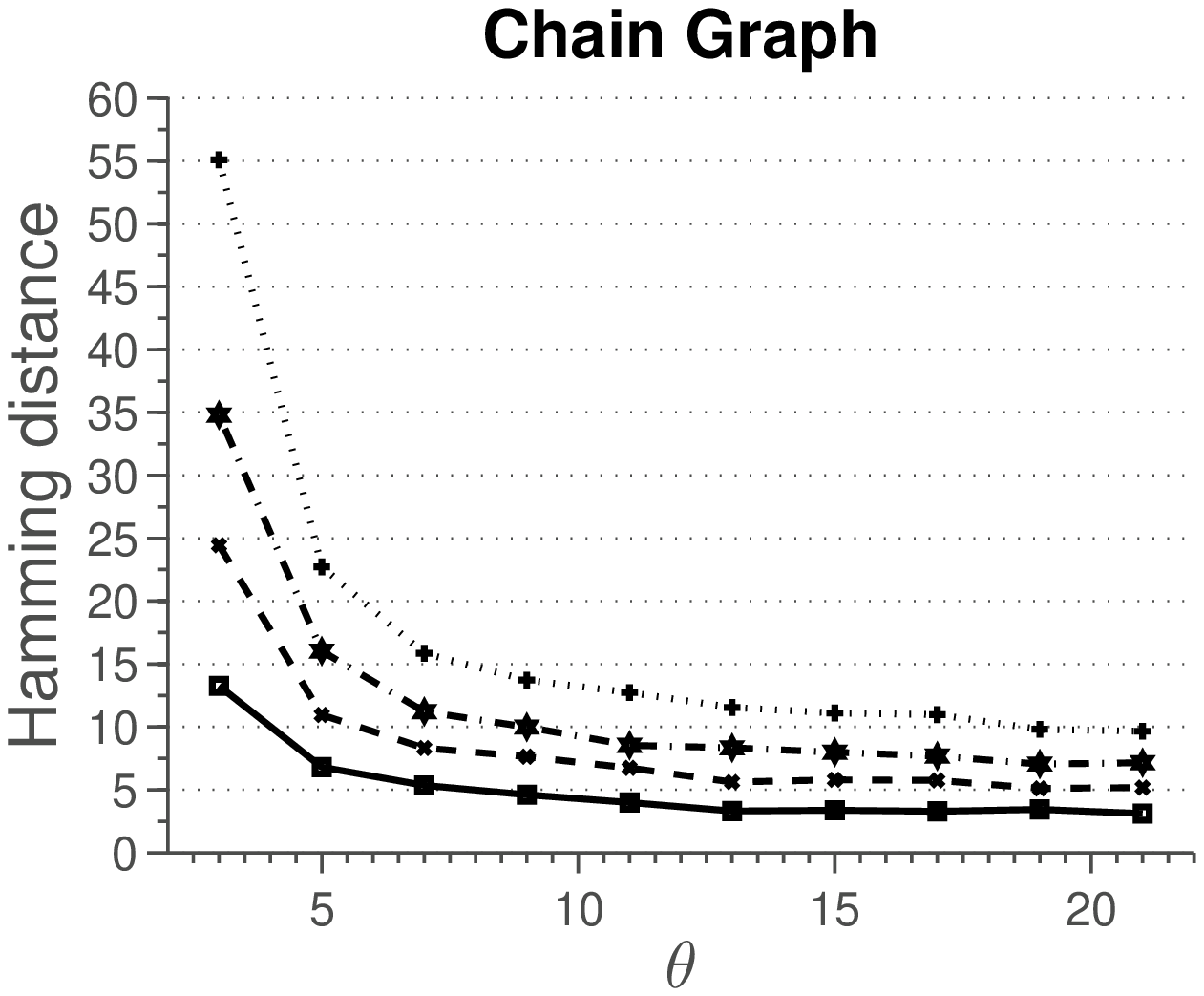}
    \end{subfigure}%
    \hfill
    \begin{subfigure}[b]{0.4\textwidth}
      \centering
      \includegraphics[width=\textwidth]{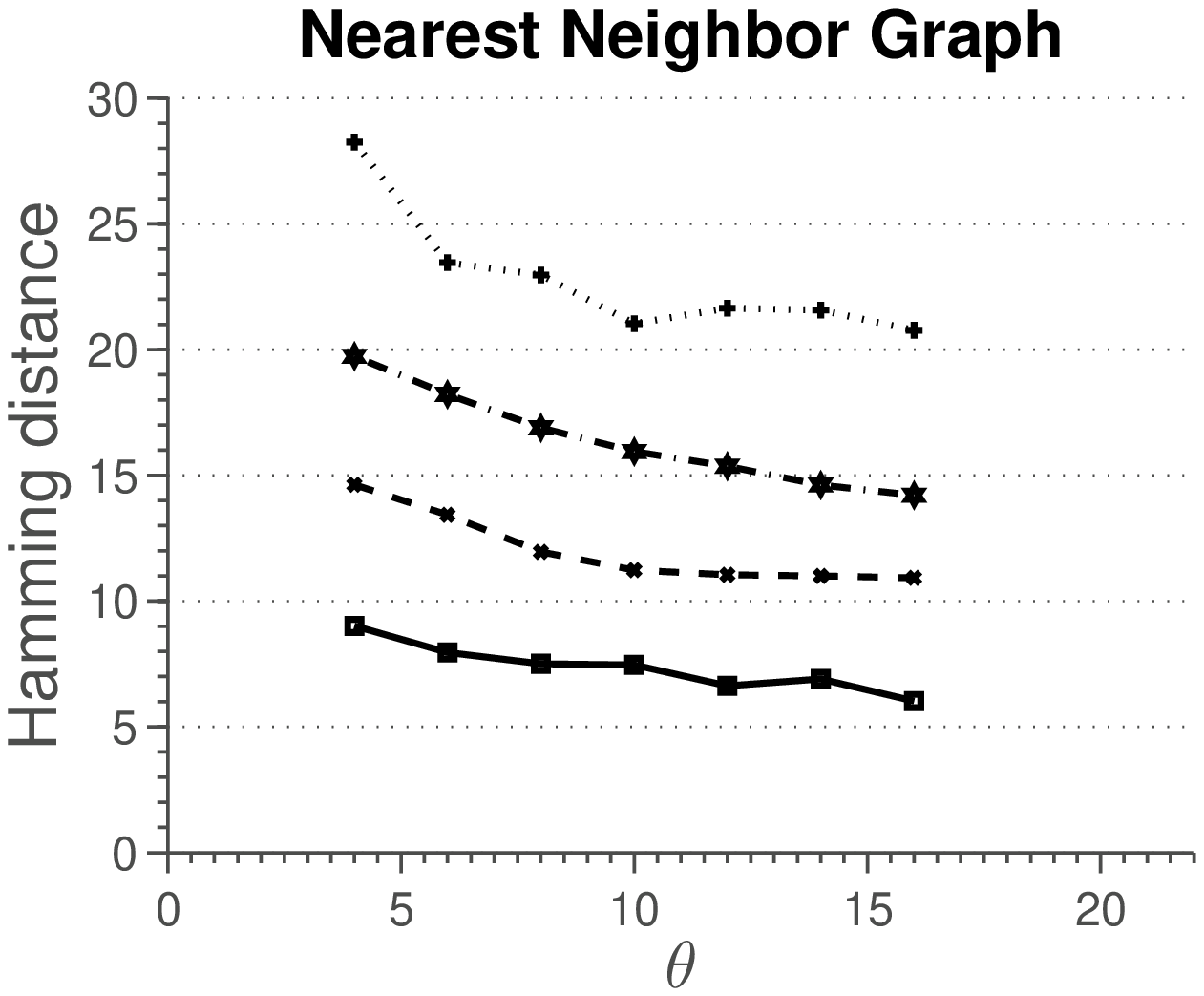}
    \end{subfigure}%
    \caption{Procedure of \cite{danaher2011joint}}
    \label{fig:rs::mt}
  \end{subfigure}%

 \begin{subfigure}[b]{0.9\textwidth}
    \centering
    \begin{subfigure}[b]{0.4\textwidth}
      \centering
      \includegraphics[width=\textwidth]{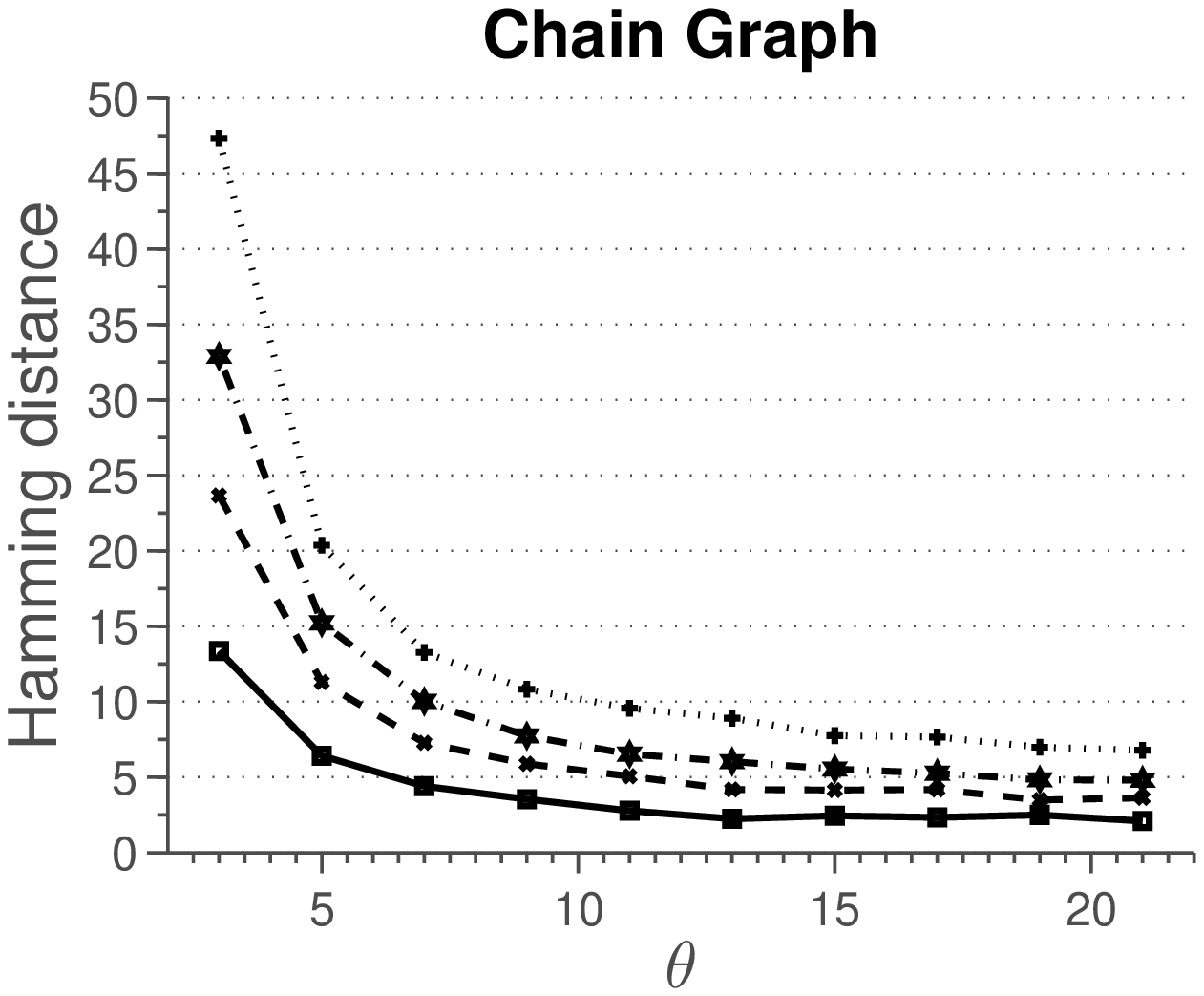}
    \end{subfigure}%
    \hfill
    \begin{subfigure}[b]{0.4\textwidth}
      \centering
      \includegraphics[width=\textwidth]{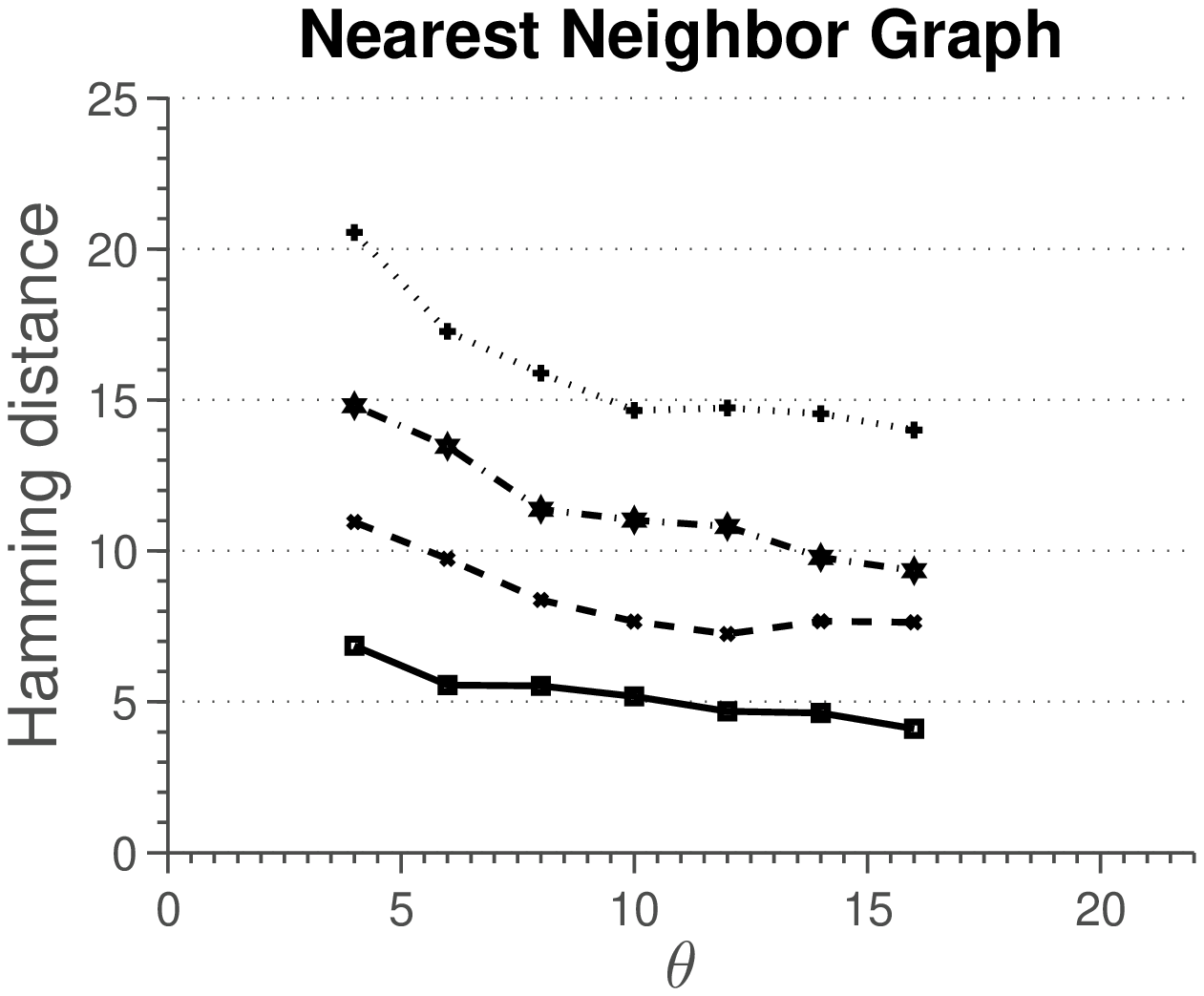}
    \end{subfigure}%
    \caption{Procedure of \cite{Guo:09}}
    \label{fig:rs::guo}
  \end{subfigure}%

 \begin{subfigure}[b]{0.9\textwidth}
    \centering
    \begin{subfigure}[b]{0.4\textwidth}
      \centering
      \includegraphics[width=\textwidth]{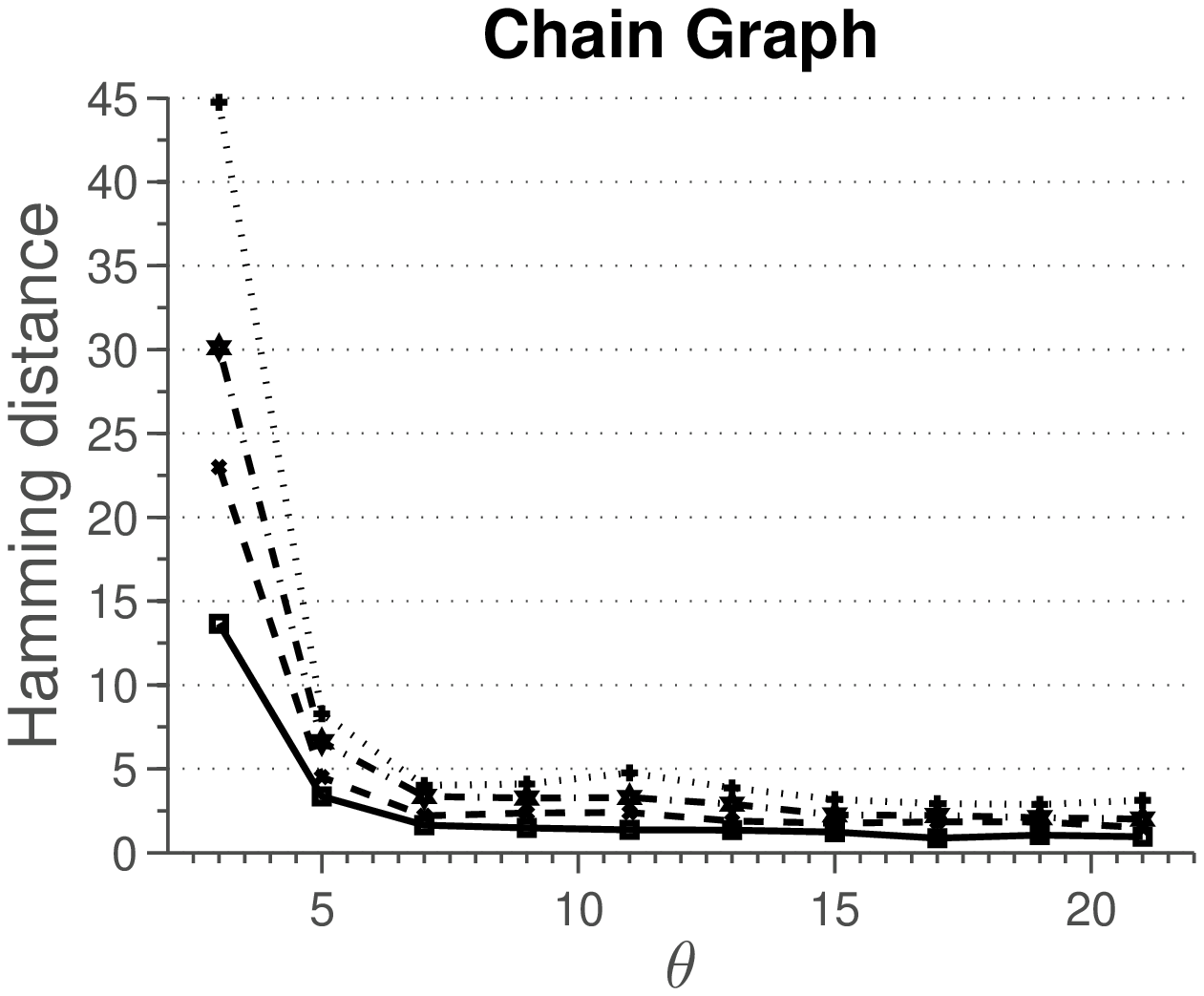}
    \end{subfigure}%
    \hfill
    \begin{subfigure}[b]{0.4\textwidth}
      \centering
      \includegraphics[width=\textwidth]{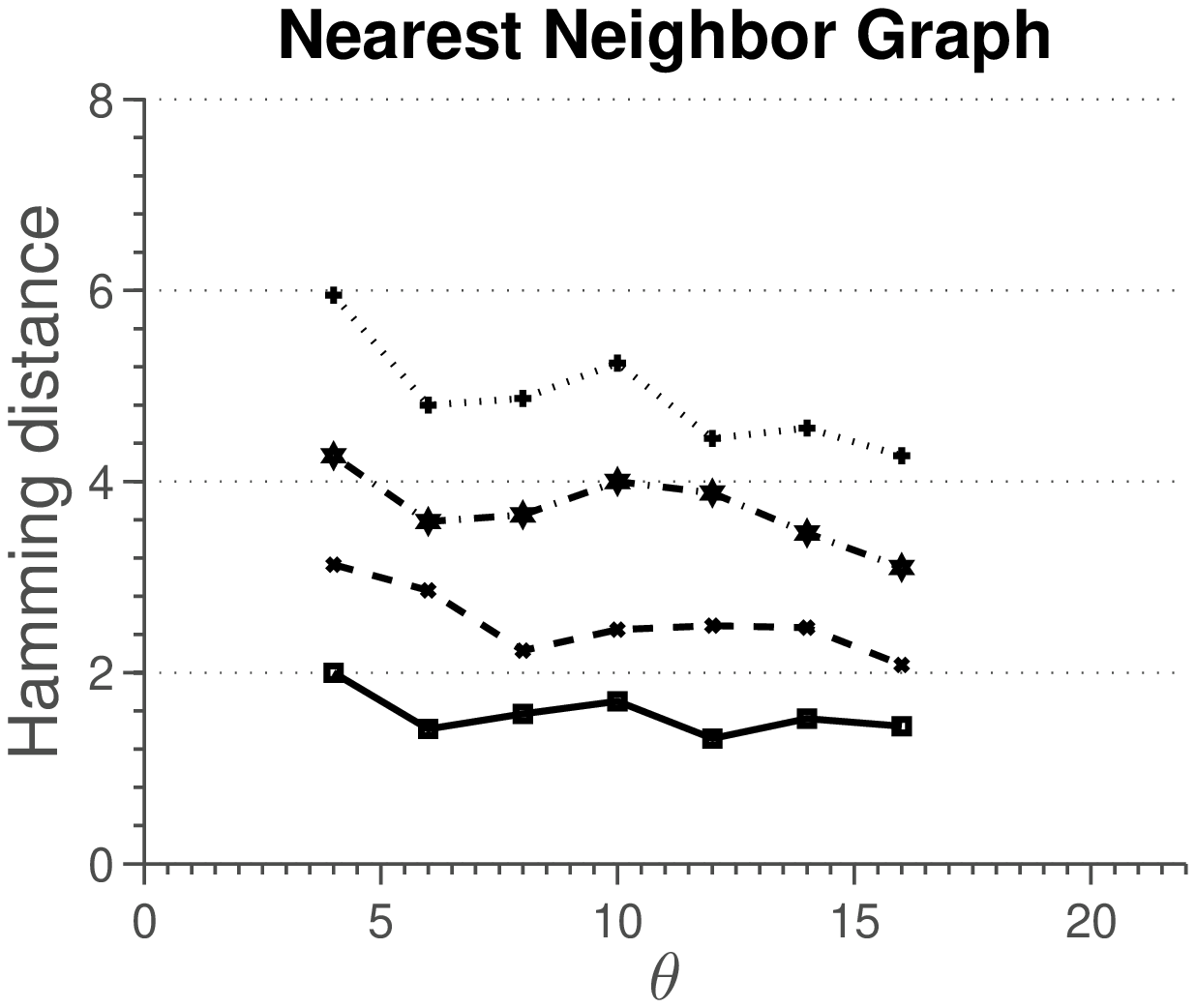}
    \end{subfigure}%
    \caption{Multi-attribute procedure}
    \label{fig:rs::ma}
  \end{subfigure}%
  \caption{Average hamming distance plotted against the rescaled
    sample size. Results are averaged over 100 independent runs.
    Off-diagonal blocks $\Omegab_{ab}$ of the precision matrix
    $\Omegab$ have elements uniformly sampled from $[-0.3, -0.1]
    \bigcup [0.1,0.3]$.  }
  \label{fig:simulation:RS}
\end{figure}

\subsection{Alternative Structure of Off-diagonal Blocks}

In this section, we investigate performance of different estimation
procedures under different assumptions on the elements of the
off-diagonal blocks of the precision matrix.

First, we investigate a situation where the multi-attribute method does
not perform as well as the methods that estimate multiple graphical
models. One such situation arises when different attributes are
conditionally independent. To simulate this situation, we use the data
generating approach as before, however, we make each block
$\Omegab_{ab}$ of the precision matrix $\Omega$ a diagonal
matrix. Figure \ref{fig:simulation:easy} summarizes results of the
simulation. We see that the methods of \cite{danaher2011joint} and
\cite{Guo:09} perform better, since they are estimating much fewer
parameters than the multi-attribute method. ${\rm glasso}$ does not
utilize any structural information underlying the estimation problem
and requires larger sample size to correctly estimate the graph than
other methods.

A completely different situation arises when the edges between nodes
can be inferred only based on inter-attribute data, that is, when a
graph based on any individual attribute is empty. To generate data
under this situation, we follow the procedure as before, but with the
diagonal elements of the off-diagonal blocks $\Omegab_{ab}$ set to
zero. Figure~\ref{fig:simulation:hard} summarizes results of the
simulation. In this setting, we clearly see the advantage of the
multi-attribute method, compared to other three methods. Furthermore,
we can see that ${\rm glasso}$ does better than multi-graph methods of
\cite{danaher2011joint} and \cite{Guo:09}. The reason is that ${\rm
  glasso}$ can identify edges based on inter-attribute relationships
among nodes, while multi-graph methods rely only on intra-attribute
relationships. This simulation illustrates an extreme scenario where
inter-attribute relationships are important for identifying edges.

So far, off-diagonal blocks of the precision matrix were constructed
to have constant values. Now, we use the same data generating
procedure, but generate off-diagonal blocks of a precision matrix in a
different way. Each element of the off-diagonal block $\Omega_{ab}$ is
generated independently and uniformly from the set $[-0.3, -0.1]
\bigcup [0.1,0.3]$. The results of the simulation are given in
Figure~\ref{fig:simulation:RS}. Again, qualitatively, the results are
similar to those given in Figure~\ref{fig:simulation:normal}, except
that in this setting more samples are needed to recover the graph
correctly.

\subsection{Different Number of Samples per Attribute}

In this section, we show how to deal with a case when different number
of samples is available per attribute. As noted in
$\S$\ref{sec:estimation}, we can treat non-measured attributes as
missing completely at random \citep[see][for more
details]{kolar12missing}. 

Let $R = (r_{il})_{i\in\{1,\ldots,n\},l\in\{1,\ldots,pk\}}\in \RR^{n \times pk}$ be an
indicator matrix, which denotes for each sample point $x_i$ the components
that are observed. Then the sample covariance matrix $S =
(\sigma_{lk}) \in \RR^{pk \times pk}$ is estimated as
$\sigma_{lk} = \rbr{\sum_{i=1}^n r_{i,l}r_{i,k}}^{-1}
{\sum_{i=1}^n r_{i,l}r_{i,k}x_{i,l}x_{i,k}}.$
This estimate is plugged into the objective in \eqref{eq:max_ll_opt}.

\begin{figure}[b]
  \centering

  \includegraphics[width=\textwidth]{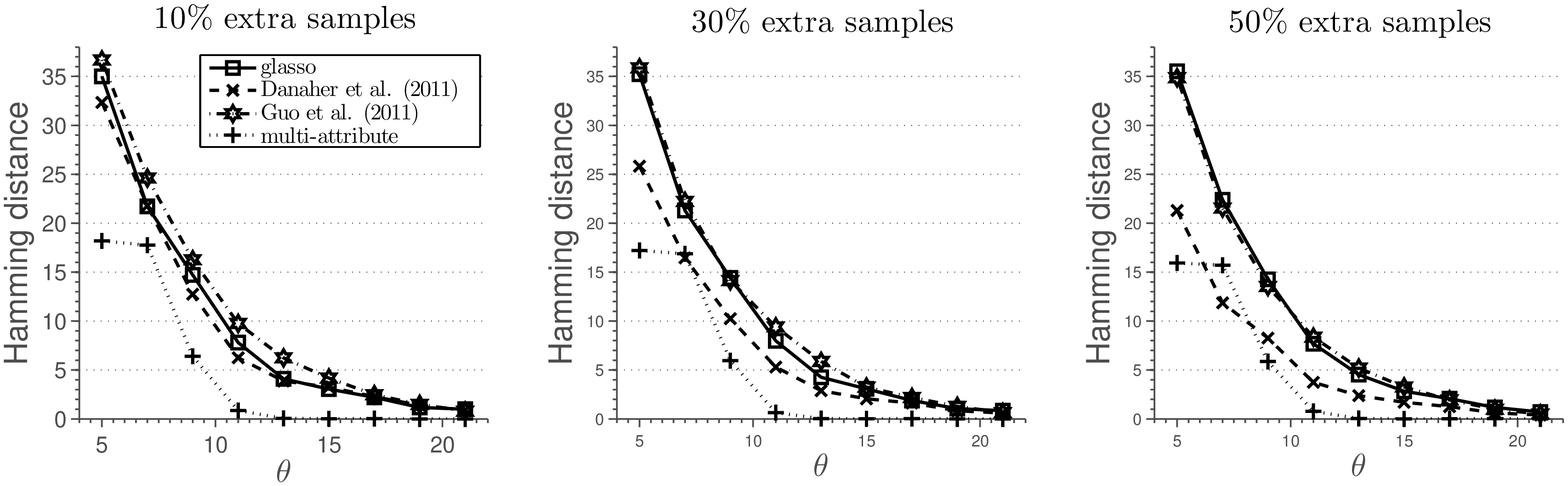}

  \caption{Average hamming distance plotted against the rescaled
    sample size. Results are averaged over 100 independent
    runs. Additional samples available for the first attribute. }
  \label{fig:simulation:diff_sample}
\end{figure}

We generate a chain graph with $p = 60$ nodes, construct a precision
matrix associated with the graph and $k=3$ attributes, and generate $n
= \theta s^2k^2\log(pk)$ samples, $\theta > 0$. Next, we generate
additional $10\%$, $30\%$ and $50\%$ samples from the same model, but
record only the values for the first attribute. Results of the
simulation are given in Figure~\ref{fig:simulation:diff_sample}.
Qualitatively, the results are similar to those presented in
Figure~\ref{fig:simulation:normal}.

\section{Illustrative Applications to Real Data}

In this section, we illustrate how to apply our method to data arising
in studies of biological regulatory networks and Alzheimer's disease.

\subsection{Analysis of a Gene/Protein Regulatory Network}
\label{sec:real_data:sup}

We provide illustrative, exploratory analysis of data from the
well-known NCI-60 database, which contains different molecular
profiles on a panel of 60 diverse human cancer cell
lines.
Data set consists of protein profiles (normalized reverse-phase lysate
arrays for 92 antibodies) and gene profiles (normalized RNA microarray
intensities from Human Genome U95 Affymetrix chip-set for $> 9000$
genes). We focus our analysis on a subset of 91 genes/proteins for
which both types of profiles are available. These profiles are
available across the same set of $60$ cancer cells. More detailed
description of the data set can be found in \citet{katenka2011multi}.

We inferred three types of networks: a network based on protein
measurements alone, a network based on gene expression profiles and a
single gene/protein network. For protein and gene networks we use the
\texttt{glasso}, while for the gene/protein network, we use our
procedure outlined in $\S$\ref{sec:estimation}. We use the stability
selection~\citep{stability:10} procedure to estimate stable
networks. In particular, we first select the penalty parameter
$\lambda$ using cross-validation, which over-selects the number of
edges in a network. Next, we use the selected $\lambda$ to estimate
100 networks based on random subsamples containing 80\% of the
data-points. Final network is composed of stable edges that appear in
at least 95 of the estimated networks. Table~\ref{tab:net_stats}
provides a few summary statistics for the estimated
networks. Furthermore, protein and gene/protein networks share $96$
edges, while gene and gene/protein networks share $104$ edges. Gene
and protein network share only $17$ edges. Finally, $66$ edges are
unique to gene/protein network. Figure~\ref{fig:node_degree} shows
node degree distributions for the three networks.  We observe that the
estimated networks are much sparser than the association networks in
\citet{katenka2011multi}, as expected due to marginal correlations
between a number of nodes.  The differences in networks require a
closer biological inspection by a domain scientist.

\begin{table}[b]

 \caption{Summary statistics for protein, gene, and gene/protein
   networks ($p = 91$).}

 \begin{tabular}{lrrr}
 ~ & { protein network} & {gene network} & {gene/protein network} \\ 

Number of edges & 122&   214 &   249 \\
Density & 0.03 & 0.05 & 0.06 \\
Largest connected component & 62 & 89 & 82 \\
Avg Node Degree  & 2.68 &  4.70 & 5.47\\
Avg Clustering Coefficient  & 0.0008 &    0.001 &   0.003\\
 \end{tabular}

\label{tab:net_stats}
\end{table}

\begin{figure}[t]
  \centering
  \includegraphics[width=\columnwidth]{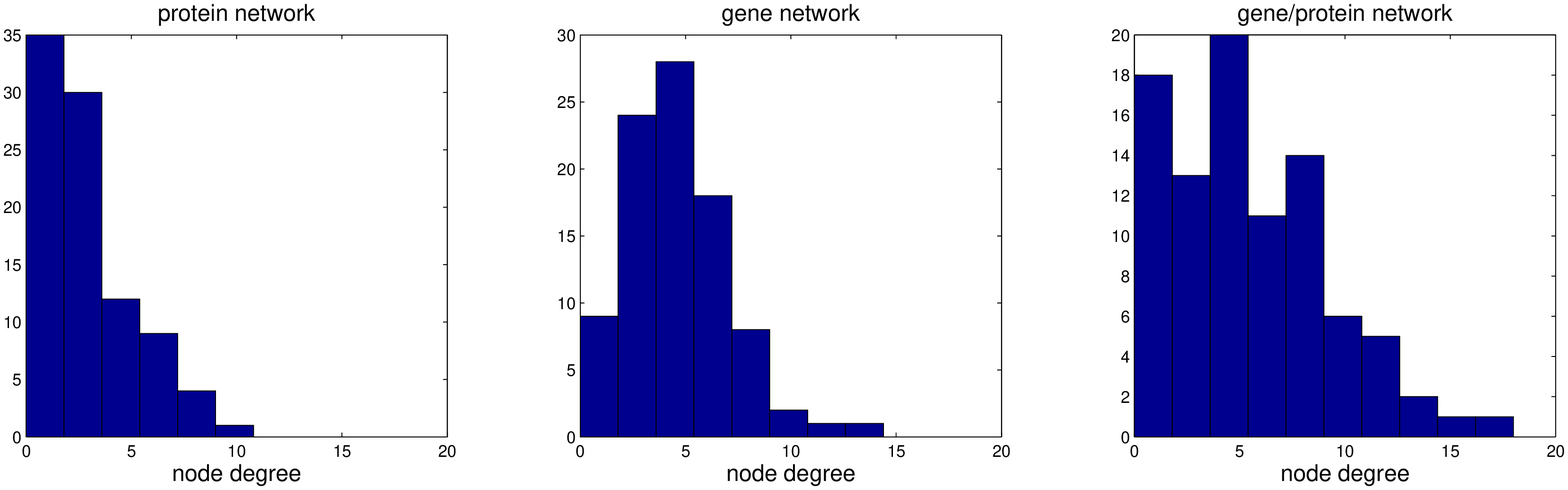}
  \caption{Node degree distributions for protein, gene and
    gene/protein networks.}
  \label{fig:node_degree}
\end{figure}

\begin{figure}[t]
  \centering
  \includegraphics[width=\columnwidth]{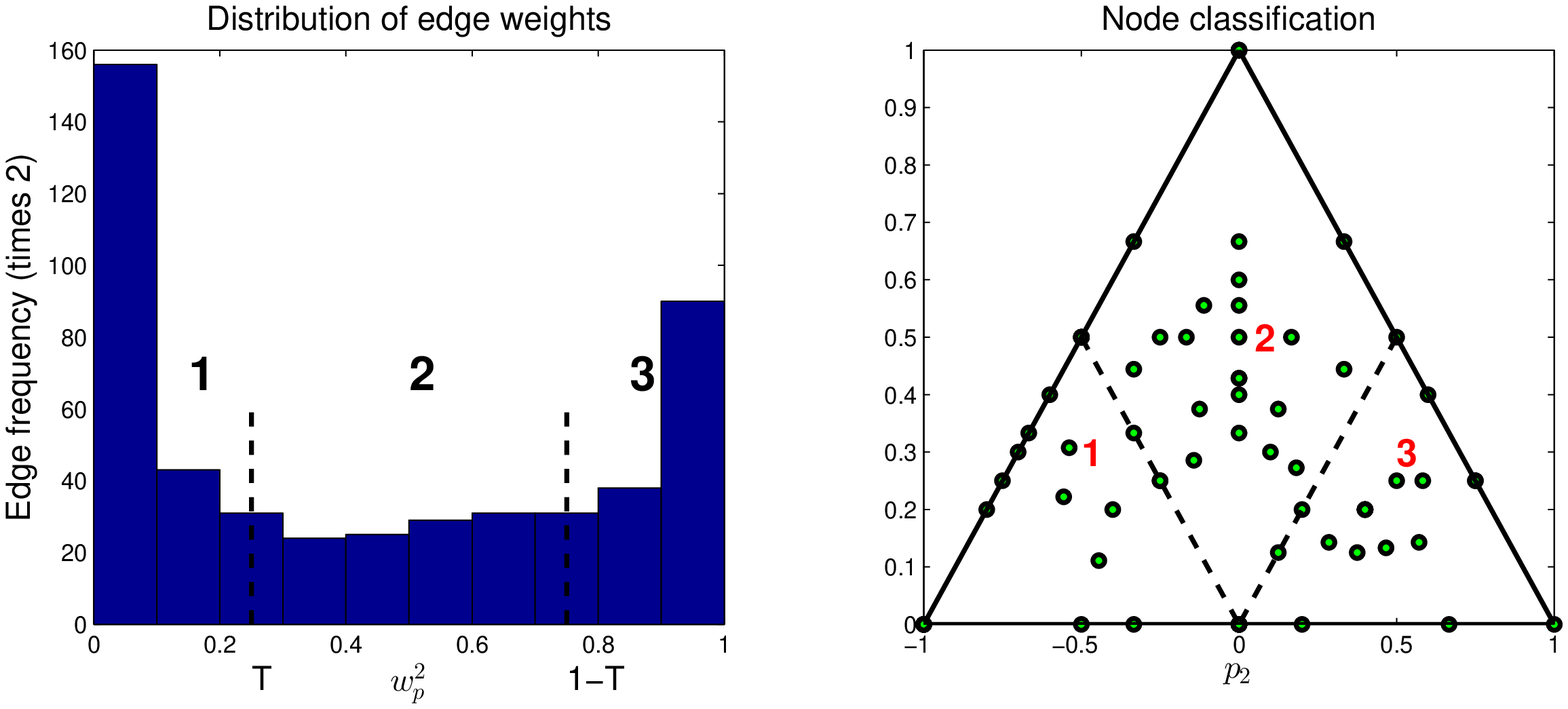}
\caption{Edge and node classification based on $w_p^2$.}
  \label{fig:node_class}
\end{figure}

We proceed with a further exploratory analysis of the gene/protein
network. We investigate the contribution of two nodal attributes to the 
existence of an edges between the nodes. Following
\citet{katenka2011multi}, we use a simple heuristic based on the weight
vectors to classify the nodes and edges into three classes. For an
edge between the nodes $a$ and $b$, we take one weight vector, say
$\wb_a$, and normalize it to have unit norm. Denote $w_{p}$ the
component corresponding to the protein attribute. Left plot in
Figure~\ref{fig:node_class} shows the values of $w_p^2$ over all
edges. The edges can be classified into three classes based on the
value of $w_p^2$. Given a threshold $T$, the edges for which $w_p^2
\in (0, T)$ are classified as gene-influenced, the edges for which
$w_p^2 \in (1-T, 1)$ are classified as protein influenced, while the
remainder of the edges are classified as mixed type. In the left plot
of Figure~\ref{fig:node_class}, the threshold is set as
$T=0.25$. Similar classification can be performed for nodes after
computing the proportion of incident edges. Let $p_1$, $p_2$ and $p_3$
denote proportions of gene, protein and mixed edges, respectively,
incident with a node. These proportions are represented in a simplex
in the right subplot of Figure~\ref{fig:node_class}. Nodes with mostly
gene edges are located in the lower left corner, while the nodes with
mostly protein edges are located in the lower right corner. Mixed
nodes are located in the center and towards the top corner of the
simplex. Further biological enrichment analysis is possible (see
\citet{katenka2011multi}), however, we do not pursue this here.

\subsection{Uncovering Functional Brain Network}
\label{sec:real_data}

We apply our method to the Positron Emission Tomography dataset, which
contains 259 subjects, of whom 72 are healthy, 132 have mild cognitive
Impairment and 55 are diagnosed as Alzheimer's \& Dementia.  Note that
mild cognitive impairment is a transition stage from normal aging to
Alzheimer's \& Dementia. The data can be obtained from {
 \it http://adni.loni.ucla.edu/}.  The preprocessing is done in the same
way as in \cite{huang2009learning}.

\begin{figure}[!b]
  \centering

    \begin{subfigure}[b]{0.33\textwidth}
      \centering
      \includegraphics[width=\textwidth]{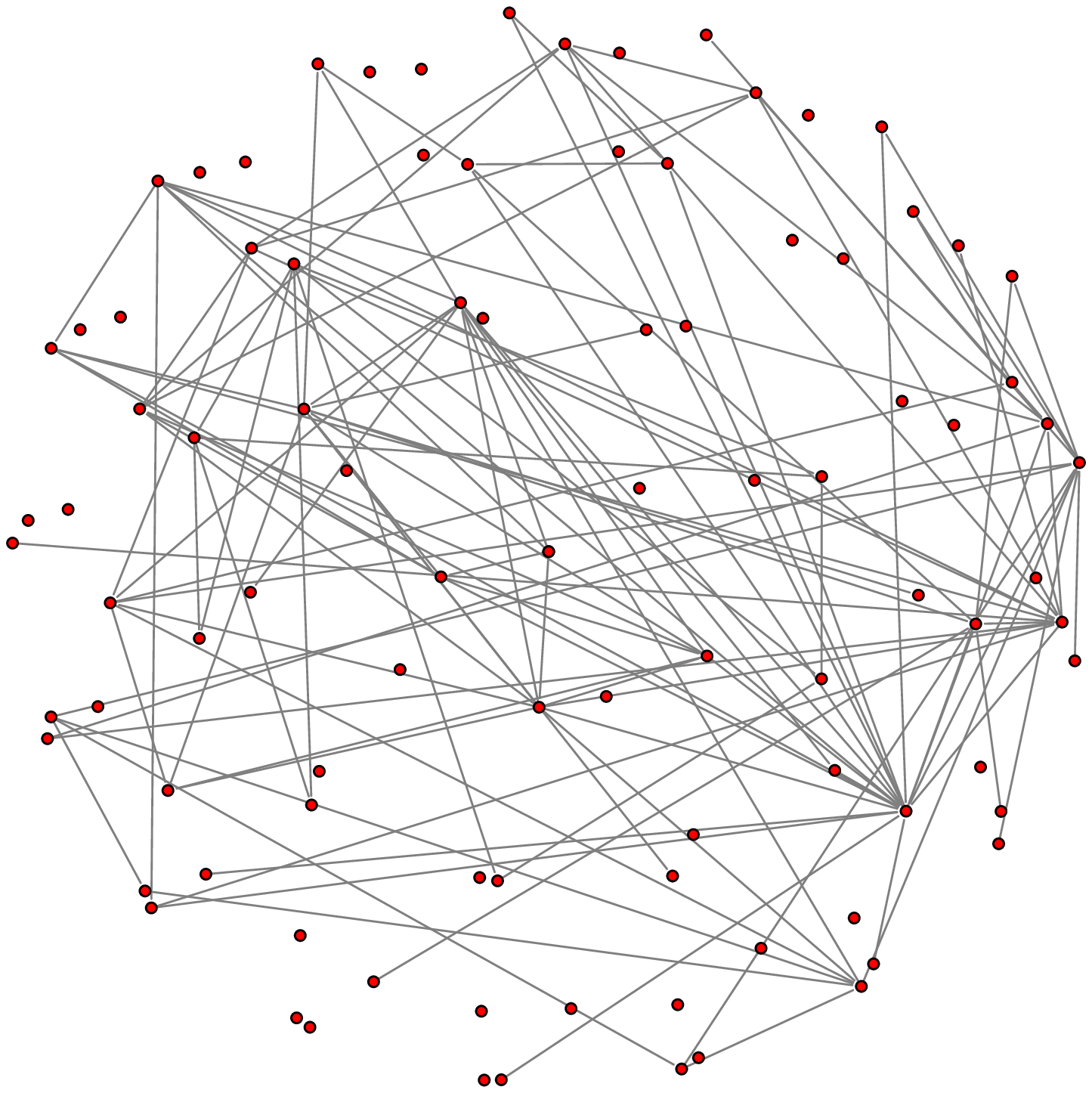}
      \caption{Healthy subjects}
    \end{subfigure}%
    \hfill
    \begin{subfigure}[b]{0.33\textwidth}
      \centering
      \includegraphics[width=\textwidth]{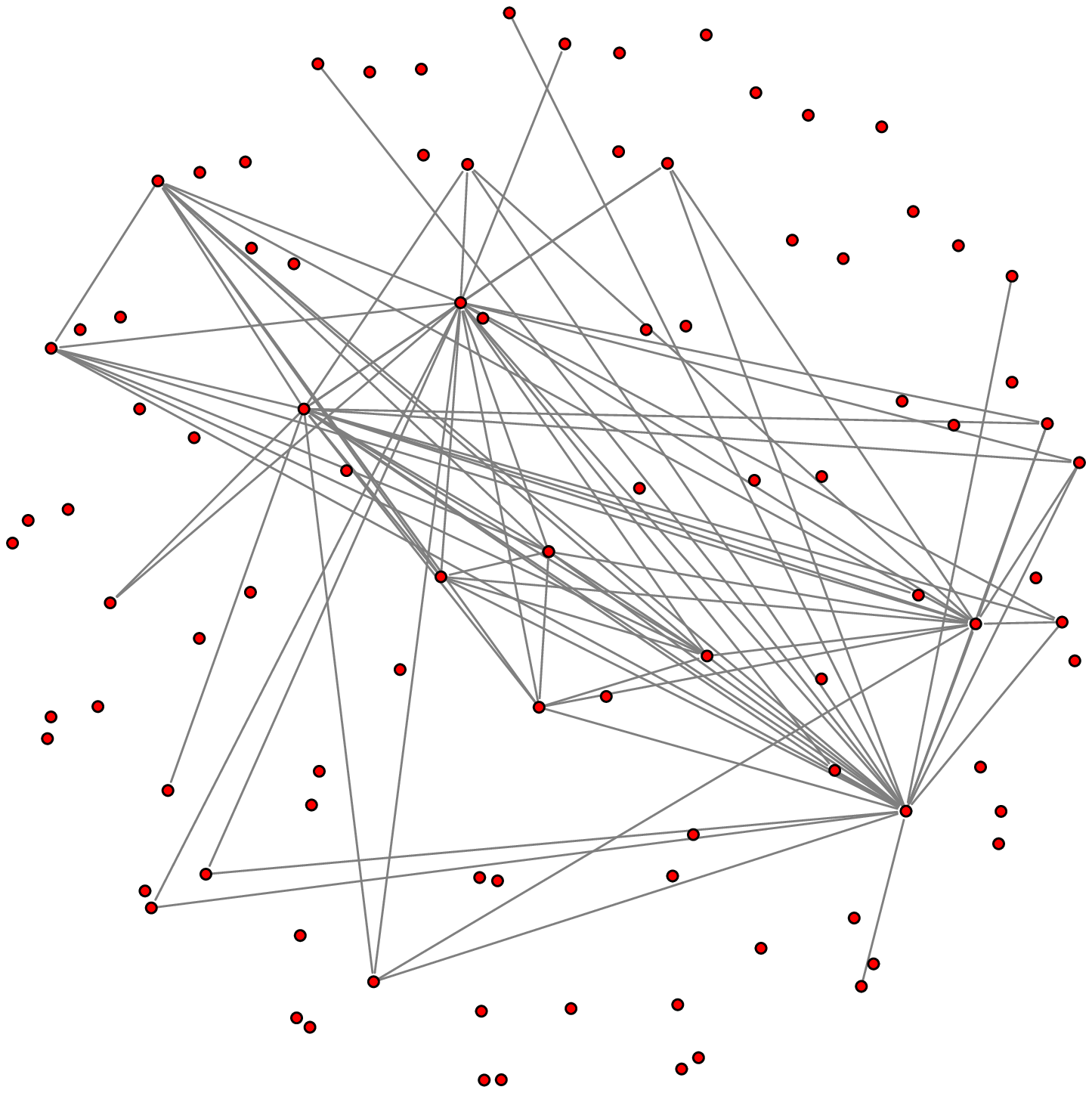}
      \caption{Mild Cognitive Impairment}
    \end{subfigure}%
    \hfill
    \begin{subfigure}[b]{0.33\textwidth}
      \centering
      \includegraphics[width=\textwidth]{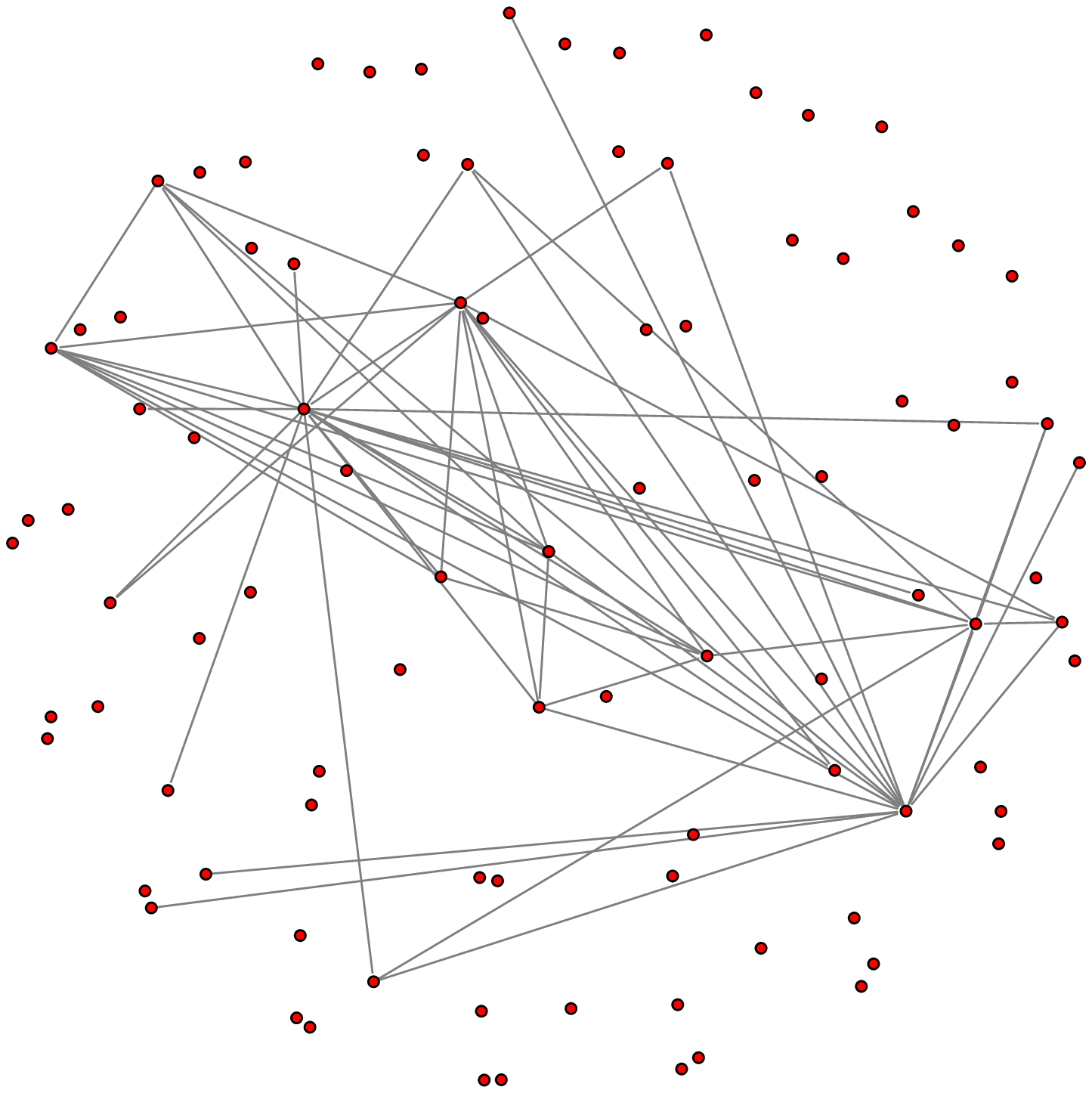}
      \caption{Alzheimer's \& Dementia}
    \end{subfigure}%
  
  \caption{Brain connectivity networks}
  \label{fig:brain}
\end{figure}

Each Positron Emission Tomography image contains $91 \times 109 \times
91 = 902,629$ voxels. The effective brain region contains $180,502$
voxels, which are partitioned into $95$ regions, ignoring the regions
with fewer than $500$ voxels. The largest region contains $5,014$
voxels and the smallest region contains $665$ voxels. Our
preprocessing stage extracts $948$ representative voxels from these
regions using the $K$-median clustering algorithm. The parameter $K$ is
chosen differently for each region, proportionally to the initial
number of voxels in that region.  More specifically, for each category
of subjects we have an $n \times (d_1 + \ldots + d_{95})$ matrix, where
$n$ is the number of subjects and $d_1 + \ldots + d_{95} = 902,629$ is
the number of voxels. Next we set $K_i = \lceil d_i / \sum_j d_j
\rceil$, the number of representative voxels in region $i$,
$i=1,\ldots,95$. The representative voxels are identified by running
the $K$-median clustering algorithm on a sub-matrix of size $n \times
d_i$ with $K = K_i$.

We inferred three networks, one for each subtype of subjects using the
procedure outlined in $\S$\ref{sec:estimation}.  Note that for
different nodes we have different number of attributes, which
correspond to medians found by the clustering algorithm.  We use the
stability selection~\citep{stability:10} approach to estimate stable
networks.  The stability selection procedure is combined with our
estimation procedure as follows.  We first select the penalty
parameter $\lambda$ in \eqref{eq:max_ll_opt} using cross-validation,
which overselects the number of edges in a network.  Next, we create
$100$ subsampled data sets, each of which contain $80\%$ of the data
points, and estimate one network for each dataset using the selected
$\lambda$.  The final network is composed of stable edges that appear
in at least $95$ of the estimated networks.

We visualize the estimated networks in Figure~\ref{fig:brain}.
Table~\ref{tab:brain_net_stats} provides a few summary statistics for
the estimated networks. Appendix~\ref{appendix:d} contains names of
different regions, as well as the adjacency matrices for networks.
From the summary statistics, we can observe that in normal subjects
there are many more connections between different regions of the
brain.  Loss of connectivity in Alzheimer's \& Dementia has been
widely reported in the literature \citep{greicius2004default,
  hedden2009disruption, andrews2007disruption, wu2011altered}.

Learning functional brain connectivity is potentially valuable for
early identification of signs of Alzheimer's disease.
\citet{huang2009learning} approach this problem using exploratory data
analysis. The framework of Gaussian graphical models is
used to explore functional brain connectivity. Here we point out that
our approach can be used for the same exploratory task, without the
need to reduce the information in the whole brain to one number. For
example, from our estimates, we observe the loss of connectivity in
the cerebellum region of patients with Alzheimer's disease, which has
been reported previously in \citet{sjobeck2001alzheimer}. As another
example, we note increased connectivity between the frontal lobe and
other regions in the patients, which was linked to compensation for
the lost connections in other regions \citep{stern2006cognitive,
  gould2006brain}. 

\begin{table}[t]

 \caption{Summary statistics for protein, gene, and gene/protein
   networks ($p = 91$)}

 \begin{tabular}{lrrr}
 ~ & { Healthy }
 & {Mild Cognitive} 
& {Alzheimer's \& } \\ 

& subjects & Impairment & Dementia \\

Number of edges & 116 & 84  &   59 \\
Density & 0.030 & 0.020 & 0.014 \\
Largest connected component & 48 & 27 & 25 \\
Avg Node Degree  & 2.40 & 1.73 &  1.2 \\
Avg Clustering Coefficient  & 0.001  & 0.0023 &    0.0007 \\
 \end{tabular}

\label{tab:brain_net_stats}
\end{table}

\subsection*{Acknowledgments}

{\small
We thank Eric D. Kolaczyk and Natallia V. Katenka for sharing
preprocessed data used in their study with us. Eric P. Xing is
partially supported through the grants NIH R01GM087694 and AFOSR
FA9550010247.  The research of Han Liu is supported by NSF grant
IIS-1116730.
 }

\bibliography{biblio}

\newpage

\appendix

\section{Efficient Updates of the Precision and Covariance Matrices}
\label{appendix:a}

Our algorithm consists of updating the precision matrix by solving
optimization problems \eqref{eq:update_aa} and \eqref{eq:update_ab}
and then updating the estimate of the covariance matrix. Both steps
can be performed efficiently.

Solutions to \eqref{eq:update_aa} and \eqref{eq:update_ab} can
be computed in a closed form as
\begin{align}
  \label{eq:update_aa_final}
  \hat \Omegab_{aa} &=
  (
    1 - t\lambda/\norm{\tilde\Omegab_{aa} + t(\tilde\Sigmab_{aa}-\Sbb_{aa})
    }_F
  )_+
  (\tilde\Omegab_{aa} + t(\tilde\Sigmab_{aa}-\Sbb_{aa})),
  \quad\text{ and} \\
  \label{eq:update_ab_final}
  \hat \Omegab_{ab} & =
  (
    1 - t\lambda/
      \norm{\tilde\Omegab_{ab} + t(\tilde\Sigmab_{ab}-\Sbb_{ab})}_F
  )_+
  (\tilde\Omegab_{ab} + t(\tilde\Sigmab_{ab}-\Sbb_{ab})),
  \quad \forall b \in \ia,
\end{align}
where $(x)_+ = \max(0, x)$.

The estimate of the covariance matrix can be updated efficiently,
without inverting the whole $\hat\Omegab$ matrix, using the matrix
inversion lemma as follows
\begin{equation}
  \label{eq:update_sigma}
  \begin{aligned}
    \hat\Sigmab_\iaia
      &= (\tilde\Omegab_\iaia)^{-1} +
          (\tilde\Omegab_\iaia)^{-1}
            \hat\Omegab_\iaa
            (\hat\Omegab_{aa} - \hat\Omegab_\aia
                    (\tilde\Omegab_\iaia)^{-1}
                    \hat\Omegab_\iaa
            )^{-1}
            \hat\Omegab_\aia
          (\tilde\Omegab_\iaia)^{-1}, \\
    \hat\Sigmab_\aia &=
    - \hat\Omegab_{aa}\hat\Omegab_\aia\hat\Sigmab_\iaia,\\
    \hat\Sigmab_{aa} &=
            (\hat\Omegab_{aa} - \hat\Omegab_\aia
                    (\tilde\Omegab_\iaia)^{-1}
                    \hat\Omegab_\iaa
            )^{-1},
  \end{aligned}
\end{equation}
with $
  (\tilde\Omegab_\iaia)^{-1} =
  \tilde\Sigmab_\iaia
  - \tilde\Sigmab_\iaa
      \tilde\Sigmab_{aa}^{-1}
      \tilde\Sigmab_\aia$.

\section{Complexity Analysis of Multi-attribute Estimation}
\label{appendix:b}

Step 2 of the estimation algorithm updates portions of the precision
and covariance matrices corresponding to one node at a time.  From
$\S$\ref{appendix:a}, we observe that the computational complexity of
updating the precision matrix is $\Ocal\rbr{pk^2}$. Updating the
covariance matrix requires computing $(\tilde \Omegab_\iaia)^{-1}$,
which can be efficiently done in $\Ocal\rbr{p^2k^2 + pk^2 + k^3} =
\Ocal\rbr{p^2k^2}$ operations, assuming that $k \ll p$. With this, the
covariance matrix can be updated in $\Ocal\rbr{p^2k^2}$
operations. Therefore the total cost of updating the covariance and
precision matrices is $\Ocal\rbr{p^2k^2}$ operations. Since step 2
needs to be performed for each node $a \in V$, the total complexity is
$\Ocal\rbr{p^3k^2}$. Let $T$ denote the total number of times step 2
is executed. This leads to the overall complexity of the algorithm as
$\Ocal\rbr{Tp^3k^2}$. In practice, we observe that $T \approx 10
\text{ to } 20$ for sparse graphs. Furthermore, when the whole
solution path is computed, we can use warm starts to further speed up
computation, leading to $T < 5$ for each $\lambda$.

\section{Technical Proofs}
\label{appendix:c}

In this appendix, we collect proofs of the results presented in the
main part of the paper.

\subsection{Proof of Lemma~\ref{lem:convergence}}

We start the proof by giving to technical results needed later.  The
following lemma states that the minimizer of \eqref{eq:max_ll_opt} is
unique and has bounded minimum and maximum eigenvalues, denoted as
$\Lambda_{\min}$ and $\Lambda_{\max}$.
\begin{lemma}
  \label{lem:bound_eigen_value}
  For every value of $\lambda > 0$, the optimization problem in
  Eq.~\eqref{eq:max_ll_opt} has a unique minimizer $\hat\Omegab$,
  which satisfies
  $\Lambda_{\min}(\hat\Omegab) \geq (\Lambda_{\max}(\Sbb) +  \lambda p)^{-1} > 0$
  and
  $\Lambda_{\max}(\hat\Omegab) \leq \lambda^{-1}\sum_{j\in V}k_j$.
\end{lemma}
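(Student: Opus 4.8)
The plan is to establish existence and uniqueness by a convexity-plus-coercivity argument, and then to read both eigenvalue bounds off the subgradient (KKT) optimality conditions for \eqref{eq:max_ll_opt}.

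\emph{Existence and uniqueness.} On the open cone of positive definite matrices the map $\Omegab\mapsto-\log|\Omegab|$ is strictly convex, $\Omegab\mapsto\tr(\Sbb\Omegab)$ is linear, and $\Omegab\mapsto\lambda\sum_{a,b}\norm{\Omegab_{ab}}_F$ is convex, so the objective $g(\Omegab)$ of \eqref{eq:max_ll_opt} is strictly convex and therefore has at most one minimizer. For existence I would argue that $g$ is coercive on the PD cone: since $\Sbb\succeq\zero$ we have $\tr(\Sbb\Omegab)\geq0$, and $\sum_{a,b}\norm{\Omegab_{ab}}_F\geq\norm{\Omegab}_F\geq\Lambda_{\max}(\Omegab)$, so along any sequence with $\Lambda_{\min}(\Omegab)\downarrow0$ the term $-\log|\Omegab|\to+\infty$ forces $g\to+\infty$, while along any sequence with $\Lambda_{\max}(\Omegab)\to\infty$ the penalty grows at least linearly in $\Lambda_{\max}(\Omegab)$ and eventually dominates $-\log|\Omegab|\geq-(\sum_jk_j)\log\Lambda_{\max}(\Omegab)$. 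Hence every sublevel set of $g$ is a compact subset of the PD cone and a unique minimizer $\hat\Omegab$ exists.

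\emph{The eigenvalue bounds.} Because $\hat\Omegab$ is an interior minimizer, stationarity gives $\hat\Omegab^{-1}=\Sbb+\lambda Z$, where $Z$ is a symmetric block-subgradient of $\Omegab\mapsto\sum_{a,b}\norm{\Omegab_{ab}}_F$ at $\hat\Omegab$; that is, $Z_{ab}=\hat\Omegab_{ab}/\norm{\hat\Omegab_{ab}}_F$ when $\hat\Omegab_{ab}\neq\zero$ and $\norm{Z_{ab}}_F\leq1$ otherwise. For the lower bound, $\norm{Z}_2\leq\norm{Z}_F=(\sum_{a,b}\norm{Z_{ab}}_F^2)^{1/2}\leq p$ since there are $p^2$ blocks each of block-Frobenius norm at most $1$, so $\Lambda_{\max}(\hat\Omegab^{-1})=\Lambda_{\max}(\Sbb+\lambda Z)\leq\Lambda_{\max}(\Sbb)+\lambda p$, and inverting yields $\Lambda_{\min}(\hat\Omegab)\geq(\Lambda_{\max}(\Sbb)+\lambda p)^{-1}>0$. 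For the upper bound I would pair the stationarity identity with $\hat\Omegab$ and take traces: $\sum_{j\in V}k_j=\tr(\hat\Omegab\hat\Omegab^{-1})=\tr(\Sbb\hat\Omegab)+\lambda\tr(Z\hat\Omegab)$. Here $\tr(\Sbb\hat\Omegab)\geq0$, and expanding in blocks, using symmetry $\hat\Omegab_{ba}=\hat\Omegab_{ab}^T$ and the form of $Z$, gives $\tr(Z\hat\Omegab)=\sum_{a,b}\tr(Z_{ab}\hat\Omegab_{ba})=\sum_{a,b}\norm{\hat\Omegab_{ab}}_F\geq\norm{\hat\Omegab}_F\geq\Lambda_{\max}(\hat\Omegab)$. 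Combining, $\sum_{j\in V}k_j\geq\lambda\Lambda_{\max}(\hat\Omegab)$, i.e. $\Lambda_{\max}(\hat\Omegab)\leq\lambda^{-1}\sum_{j\in V}k_j$.

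The only routine obstacles are (i) making the coercivity/compactness claim precise enough to legitimately invoke the KKT conditions and (ii) the block bookkeeping that identifies $\tr(Z\hat\Omegab)$ with $\sum_{a,b}\norm{\hat\Omegab_{ab}}_F$; neither presents a genuine difficulty, and the whole argument parallels the univariate graphical-lasso eigenvalue bound with Frobenius-norm blocks replacing absolute values.
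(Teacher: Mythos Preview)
Your argument is correct and essentially mirrors the paper's: both use the KKT identity $\hat\Omegab^{-1}=\Sbb+\lambda Z$ together with $\Lambda_{\max}(Z)\le p$ for the lower eigenvalue bound, and both obtain the upper bound from the identity $\sum_j k_j=\tr(\Sbb\hat\Omegab)+\lambda\sum_{a,b}\|\hat\Omegab_{ab}\|_F$ combined with $\Lambda_{\max}(\hat\Omegab)\le\sum_{a,b}\|\hat\Omegab_{ab}\|_F$. The only cosmetic differences are that the paper reaches this identity via the zero primal--dual gap (whereas you trace the KKT equation against $\hat\Omegab$, which is the same computation), and the paper handles existence by passing to the equivalent constrained problem and invoking Weierstrass rather than your direct coercivity argument.
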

\begin{proof}
The optimization objective given in \eqref{eq:max_ll_opt} can be
written in the equivalent constrained form as
\begin{equation*}
  \min_{\Omegab \succ \zero}\ \tr \Sbb \Omegab - \log|\Omegab|
  \quad
  \text{subject to}
  \quad
  \sum_{a, b} \norm{\Omegab_{ab}}_F \leq C(\lambda).
\end{equation*}
The procedure involves minimizing a continuous
objective over a compact set, and so by Weierstrass’ theorem, the
minimum is always achieved. Furthermore, the objective is strongly
convex and therefore the minimum is unique.

The solution $\hat\Omegab$ to the optimization problem
\eqref{eq:max_ll_opt} satisfies
\begin{equation}
  \Sbb - \hat\Omegab^{-1} + \lambda\Zb = \zero
\end{equation}
where $\Zb \in \partial \sum_{a,b} \norm{\hat\Omegab_{ab}}_F$ is the
element of the sub-differential and satisfies $\norm{\Zb_{ab}}_F \leq
1$ for all $(a,b) \in V^2$. Therefore,
\begin{equation*}
  \Lambda_{\max}(\hat\Omegab^{-1})
  \leq \Lambda_{\max}(\Sbb) +  \lambda\Lambda_{\max}(\Zb)
  \leq \Lambda_{\max}(\Sbb) +  \lambda p.
\end{equation*}

Next, we prove an upper bound on $\Lambda_{\max}(\hat\Omegab)$.  At
optimum, the primal-dual gap is zero, which gives that
\[
\sum_{a,b}\norm{\hat\Omega_{ab}}_F \leq
\lambda^{-1}(\sum_{j\in V}k_j - \tr\Sbb\hat\Omegab) \leq
\lambda^{-1}\sum_{j\in V}k_j,
\] as $\Sbb \succeq \zero$ and
$\hat\Omegab \succ \zero$.  Since $\Lambda_{\max}(\hat\Omegab) \leq
\sum_{a,b} \norm{\hat\Omegab_{ab}}_F$, the proof is done.  
\end{proof}

The next results states that the objective function has a Lipschitz continuous
gradient, which will be used to show that the generalized gradient
descent can be used to find $\hat\Omegab$.
\begin{lemma}
  \label{lem:lip_grad}
  The function $f(\Ab) = \tr\Sbb\Ab - \log|\Ab|$ has a Lipschitz
  continuous gradient on the set $\{\Ab \in \Scal^p\ :\
  \Lambda_{\min}(\Ab) \geq \gamma \}$, with the Lipschitz constant $L
  = \gamma^{-2}$.
\end{lemma}
\begin{proof}
    We have that $\nabla f(\Ab) = \Sbb-\Ab^{-1}$. Then
  \begin{equation*}
  \begin{aligned}
    \norm{\nabla f(\Ab) - \nabla f(\Ab')}_F
    &= \norm{\Ab^{-1} - (\Ab')^{-1}}_F \\
    &\leq \Lambda_{\max}{\Ab^{-1}} \norm{\Ab - \Ab'}_F \Lambda_{\max}{\Ab^{-1}} \\
    &\leq \gamma^{-2} \norm{\Ab - \Ab'}_F,
  \end{aligned}
  \end{equation*}
which completes the proof.
\end{proof}

Now, we provide the proof of Lemma~\ref{lem:convergence}.

By construction, the sequence of estimates
$(\tilde\Omegab^{(t)})_{t\geq1}$ decrease the objective value and are
positive definite.

To prove the convergence, we first introduce some additional notation.
Let $f(\Omegab) = \tr\Sbb\Omegab - \log|\Omegab|$ and $F(\Omegab) =
f(\Omegab) + \sum_{ab}\norm{\Omegab_{ab}}_F$. For any $L > 0$,
let
\[
 Q_L(\Omegab; \bar \Omegab) :=
   f(\bar \Omegab) + \tr[(\Omegab-\bar\Omegab)\nabla f(\bar \Omegab)]
   + \frac{L}{2} \norm{\Omegab - \bar \Omegab}_F^2 +
   \sum_{ab}\norm{\Omegab_{ab}}_F
\]
be a quadratic approximation of $F(\Omegab)$ at a given point
$\bar\Omegab$, which has a unique minimizer
\[
p_L(\bar\Omegab) := \arg\min_{\Omegab} Q_L(\Omegab; \bar \Omegab).
\]
From Lemma 2.3.~in \citet{beck09fast}, we have that
\begin{equation}
  \label{eq:proof_convergence:1}
  F(\bar\Omegab) - F(p_L(\bar\Omegab)) \geq \frac{L}{2}
  \norm{p_L(\bar\Omegab) - \bar\Omegab}_F^2
\end{equation}
if $F(p_L(\bar\Omegab)) \leq Q_L(p_L(\bar\Omegab);\bar\Omegab)$. Note
that $F(p_L(\bar\Omegab)) \leq Q_L(p_L(\bar\Omegab);\bar\Omegab)$
always holds if $L$ is as large as the Lipschitz constant of $\nabla
F$.

Let $\tilde\Omegab^{(t-1)}$ and $\tilde\Omegab^{(t)}$ denote two
successive iterates obtained by the procedure. Without loss of
generality, we can assume that $\tilde\Omegab^{(t)}$ is obtained by
updating the rows/columns corresponding to the node $a$. From
\eqref{eq:proof_convergence:1}, it follows that
\begin{equation}
  \label{eq:proof_convergence:2}
   \frac{2}{L_k}(F(\tilde\Omegab^{(t-1)}) - F(\tilde\Omegab^{(t)}))
  \geq \norm{\tilde\Omegab^{(t-1)}_{aa} - \tilde\Omegab^{(t)}_{aa} }_F +
     2 \sum_{b\neq a} \norm{\tilde\Omegab^{(t-1)}_{ab} - \tilde\Omegab^{(t)}_{ab} }_F
\end{equation}
where $L_k$ is a current estimate of the Lipschitz constant. Recall
that in our procedure the scalar $t$ serves as a local approximation
of $1/L$. Since eigenvalues of $\hat\Omegab$ are bounded according to
Lemma~\ref{lem:bound_eigen_value}, we can conclude that the
eigenvalues of $\tilde\Omegab^{(t-1)}$ are bounded as well. Therefore
the current Lipschitz constant is bounded away from zero, using
Lemma~\ref{lem:lip_grad}. Combining the results, we observe that the
right hand side of \eqref{eq:proof_convergence:2} converges to
zero as $t\rightarrow\infty$, since the optimization procedure
produces iterates that decrease the objective value. This shows that
$\norm{\tilde\Omegab^{(t-1)}_{aa} - \tilde\Omegab^{(t)}_{aa} }_F + 2
\sum_{b\neq a} \norm{\tilde\Omegab^{(t-1)}_{ab} -
  \tilde\Omegab^{(t)}_{ab} }_F $ converges to zero, for any $a \in
V$. Since $(\tilde\Omegab^{(t)}$ is a bounded sequence, it has a
limit point, which we denote $\hat\Omegab$. It is easy to see, from
the stationary conditions for the optimization problem given in
\eqref{eq:partial_min}, that the limit point $\hat\Omegab$ also
satisfies the global KKT conditions to the optimization problem in
\eqref{eq:max_ll_opt}.

\subsection{Proof of Lemma~\ref{lem:nec_suff_block_diag}}
  Suppose that the solution $\hat\Omegab$ to \eqref{eq:max_ll_opt}
  is block diagonal with blocks $P_1, P_2, \ldots, P_l$. For two
  nodes $a,b$ in different blocks, we have that
  $(\hat\Omegab)^{-1}_{ab} = 0$ as the inverse of the block diagonal
  matrix is block diagonal. From the KKT conditions, it follows that
  $\norm{\Sbb_{ab}}_F \leq \lambda$.

  Now suppose that $\norm{\Sbb_{ab}}_F \leq \lambda$ for all $a\in
  P_j, b\in P_{j'}, j\neq j'$. For every $l'=1,\ldots,l$ construct
  \begin{equation*}
    \tilde\Omegab_{l'} = \arg
    \min_{\Omegab_{l'} \succ \zero}\ \tr \Sbb_{l'} \Omegab_{l'} - \log|\Omegab_{l'}|
    + \lambda \sum_{a, b} \norm{\Omegab_{ab}}_F.
  \end{equation*}
  Then $\hat\Omegab = \diag(\hat\Omegab_1, \hat\Omegab_2, \ldots,
  \hat\Omegab_l)$ is the solution of \eqref{eq:max_ll_opt} as it
  satisfies the KKT conditions.

\subsection{Proof of Eq.~\eqref{eq:comp_cov}}
\label{sec:appendix_eq_3}

First, we note that 
\[
\Var\left( (\Xb_a^T,\Xb_b^T)^T \mid \Xb_{\bar{ab}}\right) =
\Sigmab_{ab,ab} -
\Sigmab_{ab,\bar{ab}}\Sigmab^{-1}_{\bar{ab},\bar{ab}}\Sigmab_{\bar{ab},ab}
\]
is the conditional covariance matrix of $(\Xb_a^T, \Xb_b^T)^T$ given the
remaining nodes $\Xb_{\bar{ab}}$ (see Proposition C.5 in
\citet{lauritzen96graphical}). Define $\bar\Sigmab =
\Sigmab_{ab,ab} -
\Sigmab_{ab,\bar{ab}}\Sigmab^{-1}_{\bar{ab},\bar{ab}}\Sigmab_{\bar{ab},ab}$.
Partial canonical correlation between $\Xb_a$ and $\Xb_b$ is equal to
zero if and only if $\bar\Sigmab_{ab} = \zero$. On the other hand, the
matrix inversion lemma gives that $\Omegab_{ab,ab} = \bar
\Sigmab^{-1}$. Now, $\Omegab_{ab} = \zero$ if and only if
$\bar\Sigmab_{ab}=  0$. This shows the equivalence relationship in
Eq.~\eqref{eq:comp_cov}.

\subsection{Proof of Proposition~\ref{prop:sparsistency}}
\label{sec:appendix_a}

We provide sufficient conditions for consistent network
estimation. Proposition~\ref{prop:sparsistency} given in
$\S$\ref{sec:theory} is then a simple consequence. To provide
sufficient conditions, we extend the work of \citet{Ravikumar11} to
our setting, where we observe multiple attributes for each node. In
particular, we extend their Theorem~1.

For simplicity of presentation, we assume that $k_a = k$, for all $a
\in V$, that is, we assume that the same number of attributes is
observed for each node. Our assumptions involve the Hessian of the
function $f(\Ab) = \tr\Sbb\Ab - \log|\Ab|$ evaluated at the true
$\Omegab^*$, 
\begin{equation}
  \label{eq:fisher}
\Hcal = \Hcal(\Omegabt) =
(\Omegabt)^{-1}\otimes(\Omegabt)^{-1} \in \RR^{(pk)^2\times(pk)^2},
\end{equation}
and the true covariance matrix $\Sigmab^*$. The Hessian and the
covariance matrix can be thought of block matrices with blocks of size
$k^2 \times k^2$ and $k \times k$, respectively. We will make use of
the operator $\Ccal(\cdot)$ that operates on these block matrices and
outputs a smaller matrix with elements that equal to the Frobenius
norm of the original blocks, 
\[
  \left(
    \begin{array}{cccc}
     \Ab_{11} & \Ab_{12} & \cdots & \Ab_{1p} \\
     \Ab_{21} & \Ab_{22} & \cdots & \Ab_{2p} \\
     \vdots &  & \ddots & \vdots \\
     \Ab_{p1} & \cdots &  & \Ab_{pp} \\
    \end{array}
  \right)
  \quad
  \xrightarrow{\ \ \ \Ccal(\cdot)\ \ \ }
  \quad
  \left(
    \begin{array}{cccc}
     \norm{\Ab_{11}}_F & \norm{\Ab_{12}}_F & \cdots & \norm{\Ab_{1p}}_F \\
     \norm{\Ab_{21}}_F & \norm{\Ab_{22}}_F & \cdots & \norm{\Ab_{2p}}_F \\
     \vdots &  & \ddots & \vdots \\
     \norm{\Ab_{p1}}_F & \cdots &  & \norm{\Ab_{pp}}_F \\
    \end{array}
  \right)  
\]
In particular, $\Ccal(\Sigmab^*) \in \RR^{p\times p}$ and
$\Ccal(\Hcal) \in \RR^{p^2 \times p^2}$.

We denote the index set of the non-zero blocks of the precision matrix as
\[
\Tcal := \{(a,b) \in V \times V \ :\ \norm{\Omegabt_{ab}}_2 \neq
0 \} \cup \{(a,a)\ :\ a \in V \}
\] and let $\Ncal$ denote its complement in $V \times V$, that is,
\[
\Ncal = \{(a,b)\ :\ \norm{\Omegab_{ab}}_F = 0\}.
\]

As mentioned earlier, we need to make an assumption on the Hessian
matrix, which takes the standard { irrepresentable}-like form.  There
exists a constant $\alpha \in [0, 1)$ such that
\begin{equation}
\label{eq:assum:irrepresentable:1}
\opnorm{
\Ccal\left(
  \Hcal_{\Ncal\Tcal}(\Hcal_{\Tcal\Tcal})^{-1}
\right)}{\infty} \leq 1 - \alpha.
\end{equation}
These condition extends the irrepresentable condition given in
\citet{Ravikumar11}, which was needed for estimation of networks from
single attribute observations. It is worth noting, that the condition
given in Eq.~\eqref{eq:assum:irrepresentable:1} can be much weaker
than the irrepresentable condition of \citet{Ravikumar11} applied
directly to the full Hessian matrix. This can be observed in
simulations done in $\S$\ref{sec:simulation}, where a chain network is
not consistently estimated even with a large number of samples.

We will also need the following two quantities to specify the results
\begin{equation}
  \label{eq:kappa_sigma}
  \kappa_{\Sigmab^*} = \opnorm{\Ccal(\Sigmab^*)}{\infty}
\end{equation}
and
\begin{equation}
  \label{eq:kappa_fisher}
  \kappa_{\Hcal} = \opnorm{\Ccal(\Hcal_{\Tcal\Tcal}^{-1})}{\infty}.
\end{equation}

Finally, the results are going to depend on the tail bounds for the
elements of the matrix $\Ccal(\Sbb - \Sigmab^*)$. We will assume that
there is a constant $v_* \in (0, \infty]$ and a function $f: \NN
\times (0, \infty) \mapsto (0,\infty)$ such that for any $(a,b)\in V
\times V$
\begin{equation}
  \label{eq:tail_func}
  \PP\rbr{\Ccal(\Sbb - \Sigmab^*)_{ab} \geq \delta} \leq
  \frac{1}{f(n,\delta)}
  \qquad \delta \in (0, v_*^{-1}].
\end{equation}
The function $f(n,\delta)$ will be monotonically increasing in both
$n$ and $\delta$. Therefore, we define the following two inverse
functions
\begin{equation}
  \label{eq:tail_inf:1}
  \bar n_f(\delta; r) = \arg\max\{n\ :\ f(n, \delta) \leq r\}
\end{equation}
and
\begin{equation}
  \label{eq:tail_inf:2}
  \bar \delta_f(r; n) = \arg\max\{\delta\ :\ f(n, \delta) \leq r\}
\end{equation}
for $r \in [1, \infty)$.

With the notation introduced, we have the following result.
\begin{theorem}
  \label{thm:linf}
  Assume that the irrepresentable condition in
  Eq.~\eqref{eq:assum:irrepresentable:1} is satisfied and that
  there exists a constant $v_* \in (0, \infty]$ and a function
  $f(n,\delta)$ so that Eq.~\eqref{eq:tail_func} is satisfied for any
  $(a,b) \in V \times V$. Let
  \begin{equation*}
    \lambda = \frac{8}{\alpha}\bar\delta_f(n, p^\tau)
  \end{equation*}
  for some $\tau > 2$. If
  \begin{equation}
    \label{eq:sample_size_general}
    n > \bar n_f\left(\frac{1}{\max(v_*,
         6(1+8\alpha^{-1})s\max(\kappa_{\Sigmab^*}\kappa_{\Hcal},
                                \kappa_{\Sigmab^*}^3\kappa_{\Hcal}^2))
       }, p^\tau\right)
  \end{equation}
  then
  \begin{equation}
    \label{eq:linf_bound}
    \norm{\Ccal(\hat\Omegab-\Omegab)}_\infty
      \leq 2(1+8\alpha^{-1})\kappa_\Hcal
           \bar\delta_f(n, p^\tau)
  \end{equation}
  with probability at least $1-p^{2-\tau}$.
\end{theorem}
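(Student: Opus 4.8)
The plan is to adapt the primal--dual witness (PDW) construction of \citet{Ravikumar11} to the group/Frobenius penalty, carrying the argument through in the $\ell_\infty$ geometry induced by the block operator $\Ccal(\cdot)$ rather than the entrywise $\ell_\infty$ geometry of the scalar case. First I would construct a candidate primal solution $\tilde\Omegab$ as the minimizer of \eqref{eq:max_ll_opt} over the restricted set $\cbr{\Omegab\succ\zero : \Omegab_{ab}=\zero \text{ for all } (a,b)\in\Ncal}$; as in Lemma~\ref{lem:bound_eigen_value} this restricted problem has a unique, positive definite solution. I would then pick $\tilde Z_{ab}\in\partial\norm{\tilde\Omegab_{ab}}_F$ for $(a,b)\in\Tcal$ and \emph{define} $\tilde Z_{ab}$, $(a,b)\in\Ncal$, through the stationarity relation $\Sbb-\tilde\Omegab^{-1}+\lambda\tilde Z=\zero$. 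Since \eqref{eq:max_ll_opt} is strictly convex, it then suffices to verify \emph{strict dual feasibility}, $\norm{\tilde Z_{ab}}_F<1$ for every $(a,b)\in\Ncal$: this certifies that $(\tilde\Omegab,\tilde Z)$ solves the full KKT system, so that $\hat\Omegab=\tilde\Omegab$, $\hat\Omegab_{ab}=\zero$ on $\Ncal$, and the bound \eqref{eq:linf_bound} follows from the $\ell_\infty$ control of $\Deltab:=\tilde\Omegab-\Omegabt$ obtained along the way.

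The analytic core is a first--order expansion of the score. Writing $W:=\Sbb-\Sigmab^*$ and expanding $\tilde\Omegab^{-1}-\Sigmab^{*}= -\Sigmab^{*}\Deltab\Sigmab^{*}+R$ with remainder $R=\Sigmab^{*}\Deltab\,\tilde\Omegab^{-1}\Deltab\,\Sigmab^{*}$, the stationarity relation becomes, after vectorizing along the $k\times k$ block partition, $\Hcal\,\mathrm{vec}(\Deltab)=\mathrm{vec}(R-W-\lambda\tilde Z)$ with $\Hcal=\Sigmab^{*}\otimes\Sigmab^{*}$. Using $\Deltab_\Ncal=\zero$ to solve the $\Tcal$--block and substituting into the $\Ncal$--block gives
\[
\lambda\tilde Z_\Ncal = R_\Ncal - W_\Ncal + \Hcal_{\Ncal\Tcal}(\Hcal_{\Tcal\Tcal})^{-1}\rbr{W_\Tcal - R_\Tcal + \lambda\tilde Z_\Tcal}.
\]
At this point I would record the submultiplicativity facts $\Ccal(AB)\preceq\Ccal(A)\Ccal(B)$ entrywise (from $\norm{(AB)_{ab}}_F\le\sum_c\norm{A_{ac}}_F\norm{B_{cb}}_F$) and $\norm{\Ccal(M\,\mathrm{vec}(v))}_\infty\le\opnorm{\Ccal(M)}{\infty}\,\norm{\Ccal(v)}_\infty$ for block vectors $v$, after which the irrepresentable condition \eqref{eq:assum:irrepresentable:1} and $\norm{\Ccal(\tilde Z_\Tcal)}_\infty\le1$ yield
\[
\lambda\norm{\Ccal(\tilde Z_\Ncal)}_\infty \le (2-\alpha)\rbr{\norm{\Ccal(W)}_\infty+\norm{\Ccal(R)}_\infty} + (1-\alpha)\lambda ,
\]
so strict dual feasibility holds whenever $\norm{\Ccal(W)}_\infty+\norm{\Ccal(R)}_\infty\le\alpha\lambda/4$. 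The same identity on the $\Tcal$--block gives $\norm{\Ccal(\Deltab)}_\infty\le\kappa_\Hcal\rbr{\norm{\Ccal(W)}_\infty+\norm{\Ccal(R)}_\infty+\lambda}$.

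It then remains to control the two quantities on the right. For the sampling error, a union bound over the $O(p^2)$ node pairs together with the tail assumption \eqref{eq:tail_func} and the definition of $\bar\delta_f$ gives $\norm{\Ccal(W)}_\infty\le\bar\delta_f(n,p^\tau)$ on an event of probability at least $1-p^{2-\tau}$, which is legitimate as long as $\bar\delta_f(n,p^\tau)\le v_*^{-1}$ --- guaranteed by \eqref{eq:sample_size_general}. For the remainder I would use a Neumann--series estimate: once $\norm{\Ccal(\Deltab)}_\infty$ is small enough relative to $s$ and $\kappa_{\Sigmab^*}$, the series for $\tilde\Omegab^{-1}=(\Omegabt+\Deltab)^{-1}$ converges, $\norm{\Ccal(\tilde\Omegab^{-1})}_\infty\lesssim\kappa_{\Sigmab^*}$, and $\norm{\Ccal(R)}_\infty\lesssim s\,\kappa_{\Sigmab^*}^{3}\,\norm{\Ccal(\Deltab)}_\infty^{2}$, the single factor $s$ (maximal degree) entering because converting the block--vector bound on $\Deltab$ into the operator norm $\opnorm{\Ccal(\Deltab)}{\infty}$ needed to multiply matrices costs at most $s+1$. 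Since this estimate of $R$ presupposes a bound on $\Deltab$ that itself involves $R$, I would close the loop with the continuity/fixed--point argument of \citet[Lemma~6]{Ravikumar11}: on the event $\cbr{\norm{\Ccal(W)}_\infty\le\bar\delta_f(n,p^\tau)}$, with $\lambda=8\alpha^{-1}\bar\delta_f(n,p^\tau)$ as prescribed, the unique solution $\Deltab_\Tcal$ of the restricted stationarity equation stays in the ball $\cbr{\norm{\Ccal(\Deltab)}_\infty\le r}$, $r:=2(1+8\alpha^{-1})\kappa_\Hcal\bar\delta_f(n,p^\tau)$, and one obtains $\norm{\Ccal(R)}_\infty\le\bar\delta_f(n,p^\tau)$, provided $s\max(\kappa_{\Sigmab^*}\kappa_\Hcal,\kappa_{\Sigmab^*}^{3}\kappa_\Hcal^{2})\bar\delta_f(n,p^\tau)$ is below the $\alpha$--dependent constant appearing in \eqref{eq:sample_size_general}.

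Combining the pieces, $\norm{\Ccal(W)}_\infty+\norm{\Ccal(R)}_\infty\le2\bar\delta_f(n,p^\tau)=\alpha\lambda/4$, so strict dual feasibility holds, $\hat\Omegab=\tilde\Omegab$ has the correct zero--block pattern, and $\norm{\Ccal(\hat\Omegab-\Omegabt)}_\infty=\norm{\Ccal(\Deltab)}_\infty\le r=2(1+8\alpha^{-1})\kappa_\Hcal\bar\delta_f(n,p^\tau)$, which is \eqref{eq:linf_bound}. I expect the main obstacle to be the remainder step: getting the Neumann estimate with the right powers of $s$, $\kappa_{\Sigmab^*}$ and $\kappa_\Hcal$ and packaging the mutual dependence between $\Deltab$ and $R$ into a clean fixed--point argument, all while keeping the block structure straight --- this is exactly where one must carefully distinguish the elementwise norm $\norm{\Ccal(\cdot)}_\infty$ from the operator norm $\opnorm{\Ccal(\cdot)}{\infty}$, and where the $k$--dependence ultimately enters through $\bar\delta_f$. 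Everything else is a fairly mechanical transcription of the scalar graphical--lasso analysis with $\Ccal(\cdot)$ in place of $|\cdot|$.
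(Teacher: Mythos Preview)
Your proposal is correct and follows essentially the same route as the paper: the paper explicitly reduces Theorem~\ref{thm:linf} to Theorem~1 of \citet{Ravikumar11} by supplying block--norm analogues of their Lemmas~4--6 (here Lemmas~\ref{lem:dual_feasibility}, \ref{lem:bound_remainder}, \ref{lem:bound_delta}), together with the $\Ccal(\cdot)$--submultiplicativity facts you identify. Your PDW construction, strict--dual--feasibility check, Neumann--series remainder bound $\norm{\Ccal(R)}_\infty\le\tfrac{3s}{2}\kappa_{\Sigmab^*}^3\norm{\Ccal(\Deltab)}_\infty^2$, and the fixed--point argument with $r=2\kappa_\Hcal(\norm{\Ccal(W)}_\infty+\lambda)$ match the paper's lemmas almost line for line.
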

Theorem~\ref{thm:linf} is of the same form as Theorem~1 in
\citet{Ravikumar11}, but the $\ell_\infty$ element-wise convergence is
established for $\Ccal(\hat\Omegab - \Omegab)$, which will guarantee
successful recovery of non-zero partial canonical correlations if the
blocks of the true precision matrix are sufficiently large.

Theorem~\ref{thm:linf} is proven as Theorem~1 in
\citet{Ravikumar11}. We provide technical results in
Lemma~\ref{lem:dual_feasibility}, Lemma~\ref{lem:bound_remainder} and
Lemma~\ref{lem:bound_delta}, which can be used to substitute results
of Lemma~4, Lemma~5 and Lemma~6 in \citet{Ravikumar11} under our
setting. The rest of the arguments then go through. Below we provide
some more details.

First, let $\Zcal : \RR^{ pk \times pk} \mapsto \RR^{ pk \times pk}$
be the mapping defined as
\begin{equation}
    \label{eq:subgradient_operator}
    \Zcal(\Ab)_{ab} = \left\{
    \begin{array}{ll}
      \frac{\Ab_{ab}}{\norm{\Ab_{ab}}_F} &
      \text{if } \norm{\Ab_{ab}}_F \neq 0, \\
      \Zb \text{ with } \norm{\Zb}_F \leq 1 &
      \text{if } \norm{\Ab_{ab}}_F = 0,
    \end{array}
    \right.
\end{equation}
Next, define the function
\begin{equation}
  \label{eq:def_G}
  G(\Omegab) = \tr \Omegab \Sbb - \log|\Omegab| +
  \lambda \norm{\Ccal(\Omegab)}_1,
\quad \forall \Omegab \succ 0
\end{equation}
and the following system of equations
\begin{equation} \label{eq:kkt_system}
\left\{
\begin{array}{rcll}
\Sbb_{ab} - (\Omegab^{-1})_{ab} &=& -\lambda
  \Zcal(\Omegab)_{ab},
    & \text{if } \Omegab_{ab} \neq 0 \\
\norm{\Sbb_{ab} - (\Omegab^{-1})_{ab}}_F &\leq& \lambda,
& \text{if } \Omegab_{ab} = 0. \\
\end{array}
\right.
\end{equation}
It is known that $\Omegab \in \mathbb{R}^{\tilde p \times \tilde p}$
is the minimizer of optimization problem in Eq.~\eqref{eq:max_ll_opt}
if and only if it satisfies the system of equations given in
Eq.~\eqref{eq:kkt_system}. We have already shown in
Lemma~\ref{lem:bound_eigen_value} that the minimizer is unique.

Let $\tilde \Omegab$ be the solution to the following constrained
optimization problem
\begin{equation}
  \label{eq:const_max_ll_opt}
  \min_{\Omegab \succ \zero}\
  \tr \Sbb \Omegab - \log|\Omegab| + \lambda \norm{\Ccal(\Omegab)}_1
  \text{ subject to } \Ccal(\Omegab)_{ab} = 0,\ \forall (a,b) \in \Ncal.
\end{equation}
Observe that one cannot find $\tilde \Omegab$ in practice, as it
depends on the unknown set $\Ncal$. However, it is a useful
construction in the proof. We will prove that $\tilde \Omegab$ is
solution to the optimization problem given in
Eq.~\eqref{eq:max_ll_opt}, that is, we will show that $\tilde\Omegab$
satisfies the system of equations \eqref{eq:kkt_system}.

Using the first-order Taylor expansion we have that
\begin{equation} \label{eq:linearization_log_barier}
\tilde\Omegab^{-1} = (\Omegabt)^{-1} - (\Omegabt)^{-1}
\Deltab(\Omegabt)^{-1} + R(\Deltab),
\end{equation}
where $\Deltab = \Omegab - \Omegabt$ and $R(\Deltab)$ denotes the
remainder term. With this, we state and prove
Lemma~\ref{lem:dual_feasibility}, Lemma~\ref{lem:bound_remainder} and
Lemma~\ref{lem:bound_delta}. They can be combined as in
\citet{Ravikumar11} to complete the proof of Theorem~\ref{thm:linf}. 

\begin{lemma}
  \label{lem:dual_feasibility}
  Assume that
  \begin{equation}
    \label{eq:assum:dual_feasibility}
    \max_{ab}\norm{\Deltab_{ab}}_F \leq \frac{\alpha\lambda}{8}
    \quad\text{ and }\quad
    \max_{ab}\norm{\Sigmab^*_{ab} - \Sbb_{ab}}_F \leq \frac{\alpha\lambda}{8}.
  \end{equation}
  Then $\tilde \Omegab$ is the solution to the optimization problem in
  Eq.~\eqref{eq:max_ll_opt}.
\end{lemma}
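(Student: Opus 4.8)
The plan is to show that the witness solution $\tilde\Omegab$, defined by the restricted problem \eqref{eq:const_max_ll_opt}, satisfies the full KKT system \eqref{eq:kkt_system}; by uniqueness of the minimizer (Lemma~\ref{lem:bound_eigen_value}), this forces $\tilde\Omegab$ to be the actual solution of \eqref{eq:max_ll_opt}. By construction, $\tilde\Omegab$ already satisfies the stationarity equations on the set $\Tcal$ (the first line of \eqref{eq:kkt_system}), since on $\Tcal$ the restricted problem has the same first-order conditions as the unrestricted one and $\Ccal(\tilde\Omegab)_{ab}=0$ on $\Ncal$ by the constraint. Hence the only thing to verify is \emph{strict dual feasibility} on $\Ncal$: that $\norm{\Sbb_{ab}-(\tilde\Omegab^{-1})_{ab}}_F \leq \lambda$ for every $(a,b)\in\Ncal$, i.e. that the subgradient slack on the off-support blocks does not exceed $\lambda$.

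First I would write, for $(a,b)\in\Ncal$,
\[
\Sbb_{ab} - (\tilde\Omegab^{-1})_{ab}
 = \bigl(\Sbb_{ab}-\Sigmab^*_{ab}\bigr)
 + \bigl((\Omegabt)^{-1}\Deltab(\Omegabt)^{-1}\bigr)_{ab}
 - R(\Deltab)_{ab},
\]
using \eqref{eq:linearization_log_barier} and the fact that $(\Omegabt)^{-1}_{ab}=\Sigmab^*_{ab}$. The first term is controlled directly by the second hypothesis in \eqref{eq:assum:dual_feasibility}, giving a contribution at most $\alpha\lambda/8$ in Frobenius norm. For the middle (linear) term, the idea is the standard primal-dual argument: write the vectorized stationarity condition on $\Tcal$ to solve for $\mathrm{vec}(\Deltab_\Tcal)$ in terms of $\Hcal_{\Tcal\Tcal}$, the noise $W:=\Sbb-\Sigmab^*$, and the subgradient $\Zb$, then substitute into the $\Ncal$-block of the Taylor expansion; the coefficient matrix that appears is exactly $\Hcal_{\Ncal\Tcal}(\Hcal_{\Tcal\Tcal})^{-1}$, and applying the $\Ccal(\cdot)$ operator together with the irrepresentable bound \eqref{eq:assum:irrepresentable:1} yields a contraction factor $(1-\alpha)$ on the term $\lambda\norm{\Zb}$ plus a $(2-\alpha)$-type factor on the $\Ccal(W)$ contribution — this collapses to roughly $\lambda(1-\alpha) + (\text{small})$. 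The remainder term $R(\Deltab)_{ab}$ is bounded using the first hypothesis $\max_{ab}\norm{\Deltab_{ab}}_F\leq\alpha\lambda/8$ (which guarantees $\Delta$ is small enough for the Neumann-series expansion of $(\Omegabt+\Deltab)^{-1}$ to converge and the quadratic-in-$\Delta$ remainder to be negligible relative to $\lambda$); this is where the analogue of Lemma~5 of \citet{Ravikumar11} is invoked in the block form. Adding the three pieces, one checks $\alpha\lambda/8 + (1-\alpha/4)\lambda + \alpha\lambda/8 \leq \lambda$ (with the precise bookkeeping the constants are arranged so the total is $\leq\lambda$), establishing strict dual feasibility.

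The main obstacle is the careful accounting of the $\Ccal(\cdot)$-operator through the Kronecker-product algebra: one must verify that $\Ccal$ interacts submultiplicatively with block matrix products (so that $\opnorm{\Ccal(AB)}{\infty}\leq\opnorm{\Ccal(A)}{\infty}\opnorm{\Ccal(B)}{\infty}$ and $\norm{\Ccal(A\,\mathrm{vec}\text{-}\mathrm{stuff})}{\infty}$ behaves as needed), since the noise $W$ and the perturbation $\Delta$ now have $k\times k$ blocks rather than scalars, and the Frobenius norms must be threaded through the vectorized stationarity identity correctly. Once that block-matrix calculus is in place, the argument is a direct transcription of the scalar-attribute proof in \citet{Ravikumar11}, with $|\cdot|$ replaced by $\norm{\cdot}_F$ and absolute-value row sums replaced by $\opnorm{\cdot}{\infty}$ of the $\Ccal$-reduced matrices. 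I would organize the write-up by first recording the $\Ccal$-submultiplicativity facts as a short sublemma, then doing the three-term split above.
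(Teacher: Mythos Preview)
Your overall strategy---primal--dual witness, check that the restricted solution $\tilde\Omegab$ satisfies the full KKT system \eqref{eq:kkt_system}, so that by uniqueness it is the global minimizer---is exactly what the paper does. The decomposition via \eqref{eq:linearization_log_barier}, solving the $\Tcal$-block stationarity for $\bar\Deltab_\Tcal$, substituting into the $\Ncal$-block, and invoking the irrepresentable condition \eqref{eq:assum:irrepresentable:1} to extract the factor $(1-\alpha)$ on the $\lambda\bar\Zcal$ part are all the same steps. The paper's final arithmetic is
\[
\lambda(1-\alpha)\;+\;(2-\alpha)\frac{\alpha\lambda}{4}\;<\;\lambda,
\]
coming from grouping $\bar\Rb-\bar\Sigmab^*+\bar\Sbb$ together both in the $\Hcal_{\Ncal\Tcal}\Hcal_{\Tcal\Tcal}^{-1}(\cdot)$ term and in the raw $\Ncal$-block; your rough count $\alpha\lambda/8+(1-\alpha/4)\lambda+\alpha\lambda/8$ equals $\lambda$ exactly, so you would need to redo that bookkeeping along the paper's grouping.

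The one substantive divergence is your treatment of the remainder. The paper's proof uses the first hypothesis in \eqref{eq:assum:dual_feasibility} \emph{directly} as a bound $\max_{ab}\norm{\Rb(\Deltab)_{ab}}_F\le\alpha\lambda/8$; the ``$\Deltab$'' in the displayed assumption is evidently a typo for ``$\Rb(\Deltab)$'' (this is precisely the hypothesis of Lemma~4 in \citet{Ravikumar11}, which the paper is adapting blockwise). You instead take the hypothesis at face value as a bound on $\Deltab$ and propose to deduce a bound on $\Rb$ via the block remainder lemma. That will not close with only the stated hypotheses: the remainder lemma (Lemma~\ref{lem:bound_remainder}) requires the extra condition $\norm{\Ccal(\Deltab)}_\infty\le 1/(3\kappa_{\Sigmab^*}s)$, and even then gives $\norm{\Ccal(\Rb)}_\infty\le \tfrac{3s}{2}\kappa_{\Sigmab^*}^3(\alpha\lambda/8)^2$, which is not $\alpha\lambda/8$ without further relations among $s,\kappa_{\Sigmab^*},\lambda$ that this lemma does not assume. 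So drop that detour: read the first hypothesis as the bound on $\Rb$, and the proof goes through exactly as in the paper without invoking any Neumann-series control here. The $\Ccal$-submultiplicativity facts you mention are indeed recorded in the paper (Lemma~\ref{lem:submultiplicity_fro_linf_opnorm} and companions), but they are used in Lemmas~\ref{lem:bound_remainder} and~\ref{lem:bound_delta}, not in the present lemma, where the irrepresentable assumption on $\Ccal(\Hcal_{\Ncal\Tcal}\Hcal_{\Tcal\Tcal}^{-1})$ is applied directly.
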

\begin{proof}
  We use $\Rb$ to denote $\Rb(\Delta)$. Recall that $\Delta_\Ncal = 0$
  by construction.  Using \eqref{eq:linearization_log_barier} we can
  rewrite \eqref{eq:kkt_system} as
  \begin{align}
    \Hcal_{ab,\Tcal}\bar \Deltab_\Tcal - \bar \Rb_{ab} + \bar\Sbb_{ab} -
    \bar\Sigmab_{ab}^* + \lambda \bar \Zcal(\tilde\Omegab)_{ab} & = 0
    & \text{if } (a,b) \in \Tcal \label{eq:kkt_system:1} \\
    \norm{\Hcal_{ab,\Tcal}\bar\Deltab_\Tcal - \bar \Rb_{ab} + \bar\Sbb_{ab} -
    \bar\Sigmab_{ab}^*}_2 & \leq \lambda
    & \text{if } (a,b) \in \Ncal \label{eq:kkt_system:2}.
  \end{align}
  By construction, the solution $\tilde\Omegab$ satisfy
  \eqref{eq:kkt_system:1}. Under the assumptions, we show that
  \eqref{eq:kkt_system:2} is also satisfied with inequality.

  From \eqref{eq:kkt_system:1}, we can solve for $\Delta_\Tcal$,
  \begin{equation*}
    \Delta_{\Tcal} = \Hcal_{\Tcal, \Tcal}^{-1}
     [\bar\Rb_\Tcal - \bar\Sigmab_\Tcal + \bar\Sbb_\Tcal
      - \lambda\bar\Zcal(\tilde\Omegab)_\Tcal].
  \end{equation*}
  Then
  \begin{equation*}
  \begin{aligned}
  &\norm{\Hcal_{ab,\Tcal}\Hcal_{\Tcal, \Tcal}^{-1}
     [\bar\Rb_\Tcal - \bar\Sigmab_\Tcal + \bar\Sbb_\Tcal
      - \lambda\bar\Zcal(\tilde\Omegab)_\Tcal]
      - \bar \Rb_{ab} + \bar\Sbb_{ab} -
        \bar\Sigmab_{ab}^*}_2 \\
  & \qquad \leq
  \lambda
    \norm{\Hcal_{ab,\Tcal}\Hcal_{\Tcal, \Tcal}^{-1}\bar\Zcal(\tilde\Omegab)_\Tcal}_2
  +
  \norm{\Hcal_{ab,\Tcal}\Hcal_{\Tcal, \Tcal}^{-1}
     [\bar\Rb_\Tcal - \bar\Sigmab_\Tcal + \bar\Sbb_\Tcal]}_2
  + \norm{\bar \Rb_{ab} + \bar\Sbb_{ab} - \bar\Sigmab_{ab}^*}_2 \\
  & \qquad \leq
  \lambda(1-\alpha) + (2-\alpha)\frac{\alpha\lambda}{4} \\
  & \qquad < \lambda
  \end{aligned}
  \end{equation*}
  using assumption on $\Hcal$ in \eqref{eq:assum:irrepresentable:1} and
  \eqref{eq:assum:dual_feasibility}. This shows that $\tilde \Omegab$
  satisfies \eqref{eq:kkt_system}.
\end{proof}

\begin{lemma}
  \label{lem:bound_remainder}
  Assume that
  \begin{equation}
    \label{eq:assum:remainder}
    \norm{\Ccal(\Deltab)}_\infty \leq \frac{1}{3\kappa_{\Sigmab^*}s}.
  \end{equation}
  Then
  \begin{equation}
    \label{eq:bound_remainder}
    \norm{\Ccal(\Rb(\Deltab))}_\infty \leq
      \frac{3s}{2}\kappa_{\Sigmab^*}^3\norm{\Ccal(\Deltab)}_\infty^2.
  \end{equation}
\end{lemma}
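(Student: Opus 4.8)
The plan is to start from the exact integral (or series) form of the remainder in the first-order Taylor expansion of the matrix inverse map $\Ab \mapsto \Ab^{-1}$ around $\Omegabt$. Writing $\tilde\Omegab = \Omegabt + \Deltab$, the standard Neumann-series identity gives
\[
R(\Deltab) = \tilde\Omegab^{-1} - (\Omegabt)^{-1} + (\Omegabt)^{-1}\Deltab(\Omegabt)^{-1}
= \sum_{m\geq 2} (-1)^m \bigl((\Omegabt)^{-1}\Deltab\bigr)^m (\Omegabt)^{-1}
= (\Omegabt)^{-1}\Deltab\,J\,\Deltab\,(\Omegabt)^{-1},
\]
where $J = \bigl(\Iden + (\Omegabt)^{-1}\Deltab\bigr)^{-1}(\Omegabt)^{-1}$ is the "tail" factor. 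So the first step is to record this identity and to note that the series converges provided the relevant operator norm of $(\Omegabt)^{-1}\Deltab$ is strictly below $1$; the hypothesis \eqref{eq:assum:remainder} will be exactly what delivers that.

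The second step is to convert everything into the block-Frobenius norm $\norm{\Ccal(\cdot)}_\infty$ that the lemma is stated in. The key observation — which mirrors how \citet{Ravikumar11} handle the scalar case — is that $\Ccal(\cdot)$ together with $\opnorm{\cdot}{\infty}$ behaves submultiplicatively under block matrix products: for block matrices $\Ab,\Bb$ one has $\Ccal(\Ab\Bb)_{ij} = \norm{\sum_c \Ab_{ic}\Bb_{cj}}_F \leq \sum_c \norm{\Ab_{ic}}_F\,\opnorm{\Bb_{cj}}{2} \leq \sum_c \Ccal(\Ab)_{ic}\,\Ccal(\Bb)_{cj}$ (using $\norm{XY}_F \le \norm{X}_F\opnorm{Y}{2} \le \norm{X}_F\norm{Y}_F$), hence $\norm{\Ccal(\Ab\Bb)}_\infty \le \norm{\Ccal(\Ab)}_\infty\,\norm{\Ccal(\Bb)}_\infty$ when the matrices are square in block structure, and more usefully $\opnorm{\Ccal(\Ab\Bb)}{\infty} \le \opnorm{\Ccal(\Ab)}{\infty}\opnorm{\Ccal(\Bb)}{\infty}$. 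Applying this to $R(\Deltab) = (\Omegabt)^{-1}\Deltab J \Deltab (\Omegabt)^{-1}$ and using $\kappa_{\Sigmab^*} = \opnorm{\Ccal(\Sigmab^*)}{\infty} = \opnorm{\Ccal((\Omegabt)^{-1})}{\infty}$ gives roughly
\[
\norm{\Ccal(R(\Deltab))}_\infty \leq \kappa_{\Sigmab^*}^2 \cdot \opnorm{\Ccal(\Deltab)}{\infty}^2 \cdot \opnorm{\Ccal(J)}{\infty},
\]
after also noting that $\Deltab$ has at most $s$ nonzero blocks per block-row (it is supported on $\Tcal$, whose block-rows have at most $s$ entries by the degree bound), so $\norm{\Ccal(\Deltab X)}_\infty \le s\,\norm{\Ccal(\Deltab)}_\infty\,\norm{\Ccal(X)}_\infty$-type inequalities can be invoked to produce the factor $s$.

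The third step is to bound the tail factor $\opnorm{\Ccal(J)}{\infty}$. Write $J = (\Omegabt)^{-1}\sum_{m\geq 0}(-\Deltab(\Omegabt)^{-1})^m$; then $\opnorm{\Ccal(J)}{\infty} \le \kappa_{\Sigmab^*}\sum_{m\ge 0}\bigl(s\,\opnorm{\Ccal(\Deltab)}{\infty}\kappa_{\Sigmab^*}\bigr)^m$, and hypothesis \eqref{eq:assum:remainder}, $\norm{\Ccal(\Deltab)}_\infty \le \frac{1}{3\kappa_{\Sigmab^*}s}$, makes the geometric ratio at most $1/3$, so the sum is at most $3/2$. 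Combining the three steps yields $\norm{\Ccal(R(\Deltab))}_\infty \le \kappa_{\Sigmab^*}^3 \cdot s \cdot \frac{3}{2} \cdot \norm{\Ccal(\Deltab)}_\infty^2$, which is exactly \eqref{eq:bound_remainder}.

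The main obstacle I anticipate is the second step: getting the submultiplicativity and the sparsity-induced factor $s$ to interact correctly, i.e. being careful about which matrix in each product carries the "at most $s$ nonzero blocks per row" property and therefore contributes an $s$ rather than a $p$. The constants $3$ and $3/2$ have to be tracked precisely so that the geometric series closes with ratio $\le 1/3$; this is routine but is where an off-by-a-constant error would creep in. Everything else is the standard first-order perturbation bound for matrix inversion, lifted to the block norm via the operator $\Ccal(\cdot)$, exactly paralleling Lemma~5 of \citet{Ravikumar11}.
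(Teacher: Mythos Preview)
Your proposal is correct and follows essentially the same approach as the paper: Neumann-series expansion of the remainder, block-norm submultiplicativity for $\Ccal(\cdot)$, a geometric-series bound on the tail factor with ratio $\le 1/3$ coming from the hypothesis, and careful tracking of the single factor of $s$ arising from the block-row sparsity of $\Deltab$. The only cosmetic difference is that you factor $R(\Deltab)=(\Omegabt)^{-1}\Deltab\,\tilde\Omegab^{-1}\,\Deltab(\Omegabt)^{-1}$ (your $J=\tilde\Omegab^{-1}$), whereas the paper writes $R(\Deltab)=(\Omegabt)^{-1}\Deltab(\Omegabt)^{-1}\Deltab\,J\,(\Omegabt)^{-1}$ with $J=\sum_{k\ge 0}(-1)^k((\Omegabt)^{-1}\Deltab)^k$; these are equivalent, and the paper's $J$ is bounded directly by $3/2$ while yours carries the extra $\kappa_{\Sigmab^*}$. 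Your flagged ``main obstacle'' is exactly the point where the paper invokes the inequality $\norm{\Ccal(\Ab\Bb)}_\infty \le \norm{\Ccal(\Ab)}_\infty\,\opnorm{\Ccal(\Bb)^T}{\infty}$ to ensure that only one of the two $\Deltab$ factors contributes an $\opnorm{\Ccal(\Deltab)}{\infty}\le s\norm{\Ccal(\Deltab)}_\infty$, the other entering as $\norm{\Ccal(\Deltab)}_\infty$ directly; with that in hand your ``rough'' display in step~2 tightens to the stated bound.
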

\begin{proof}
  Remainder term can be written as
  \begin{equation*}
    \Rb(\Deltab) =
       (\Omegabt + \Deltab)^{-1} - (\Omegabt)^{-1} + (\Omegabt)^{-1}\Deltab(\Omegabt)^{-1}.
  \end{equation*}
  Using \eqref{eq:submultiplicity_fro_linf_opnorm}, we have that
  \begin{equation*}
  \begin{aligned}
    \opnorm{\Ccal((\Omegabt)^{-1}\Deltab)}{\infty} & \leq
      \opnorm{\Ccal((\Omegabt)^{-1})}{\infty}
      \opnorm{\Ccal(\Deltab)}{\infty} \\
    & \leq
      s\opnorm{\Ccal((\Omegabt)^{-1})}{\infty}
      \norm{\Ccal(\Deltab)}_{\infty} \\
   & \leq \frac{1}{3}
  \end{aligned}
  \end{equation*}
  which gives us the following expansion
  \begin{equation*}
    (\Omegabt + \Deltab)^{-1} =
    (\Omegabt)^{-1} - (\Omegabt)^{-1}\Deltab(\Omegabt)^{-1} +
    (\Omegabt)^{-1}\Deltab(\Omegabt)^{-1}\Deltab\Jb(\Omegabt)^{-1},
  \end{equation*}
  with $\Jb = \sum_{k \geq 0}(-1)^k((\Omegabt)^{-1}\Deltab)^k$. Using
  \eqref{eq:max_fro_norm} and
  \eqref{eq:submultiplicity_fro_linf_opnorm}, we have that
  \begin{equation*}
  \begin{aligned}
      \norm{\Ccal(\Rb)}_\infty &\leq
       \norm{\Ccal((\Omegabt)^{-1}\Deltab)}_\infty
       \opnorm{\Ccal((\Omegabt)^{-1}\Deltab\Jb(\Omegabt)^{-1})^T}{\infty}
       \\
    &\leq
      \opnorm{\Ccal((\Omegabt)^{-1})}{\infty}^3\norm{\Ccal(\Deltab)}_\infty
      \opnorm{\Ccal(\Jb^T)}{\infty}
      \opnorm{\Ccal(\Deltab)}{\infty} \\
    & \leq
      s\opnorm{\Ccal((\Omegabt)^{-1})}{\infty}^3\norm{\Ccal(\Deltab)}_\infty^2
      \opnorm{\Ccal(\Jb^T)}{\infty}.
  \end{aligned}
  \end{equation*}
  Next, we have that
  \begin{equation*}
  \begin{aligned}
    \opnorm{\Ccal(\Jb^T)}{\infty} & \leq
    \sum_{k > 0} \opnorm{\Ccal(\Deltab(\Omegabt)^{-1})}{\infty}^k \\
    &\leq \frac{1}{1-\opnorm{\Ccal(\Deltab(\Omegabt)^{-1})}{\infty}} \\
    &\leq \frac{3}{2},
  \end{aligned}
  \end{equation*}
  which gives us
  \begin{equation*}
    \norm{\Ccal(\Rb)}_\infty \leq
      \frac{3s}{2}\kappa_{\Sigmab^*}^3\norm{\Ccal(\Deltab)}_\infty^2
  \end{equation*}
  as claimed.
\end{proof}

\begin{lemma}
\label{lem:bound_delta}
Assume that
\begin{equation}
  \label{eq:assum:bound_delta}
  r := 2\kappa_\Hcal(\norm{\Ccal(\Sbb - \Sigmab^*)}_\infty + \lambda)
    \leq \min\left( \frac{1}{3\kappa_{\Sigmab^*}s},
               \frac{1}{3\kappa_{\Hcal}\kappa_{\Sigmab^*}^3s} \right).
\end{equation}
Then
\begin{equation}
  \label{eq:linf_bound_restrict}
  \norm{\Ccal(\Deltab)}_\infty \leq r.
\end{equation}
\end{lemma}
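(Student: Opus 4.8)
\textbf{Proof proposal for Lemma~\ref{lem:bound_delta}.}

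The plan is to reproduce, in the block (Frobenius-norm) geometry, the fixed-point argument used for Lemma~6 of \citet{Ravikumar11}. Recall that $\tilde\Omegab$ is the minimizer of the restricted problem \eqref{eq:const_max_ll_opt}, so $\Deltab = \tilde\Omegab - \Omegabt$ is supported on $\Tcal$ and satisfies the stationarity equation \eqref{eq:kkt_system:1}. Substituting the Taylor expansion \eqref{eq:linearization_log_barier} and solving for the restricted component shows that $\bar\Deltab_\Tcal$ is a fixed point of the map
\[
F(\bar\Deltab_\Tcal) \;=\; \Hcal_{\Tcal\Tcal}^{-1}\Bigl(\bar\Rb(\Deltab)_\Tcal - \bigl(\bar\Sbb_\Tcal - \bar\Sigmab^*_\Tcal\bigr) - \lambda\,\bar\Zcal(\Omegabt + \Deltab)_\Tcal\Bigr),
\]
where $\Deltab$ is recovered from $\bar\Deltab_\Tcal$ by zero-padding on $\Ncal$. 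I would show that $F$ carries the closed ball $\mathbb{B}(r) := \{\bar\Deltab_\Tcal\ :\ \norm{\Ccal(\Deltab)}_\infty \leq r\}$ into itself; since $\mathbb{B}(r)$ is compact and convex and $F$ is continuous (the set-valued sub-differential term being treated exactly as in \citet{Ravikumar11}, and $r$ small enough keeps $\Omegabt+\Deltab$ positive definite so that $F$ is well defined), Brouwer's theorem produces a fixed point in $\mathbb{B}(r)$, and uniqueness of the minimizer (Lemma~\ref{lem:bound_eigen_value}) forces this fixed point to be $\bar\Deltab_\Tcal$ itself, which is \eqref{eq:linf_bound_restrict}.

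For the self-map property, fix $\bar\Deltab_\Tcal \in \mathbb{B}(r)$ and bound the three terms of $\Ccal(F(\bar\Deltab_\Tcal))$ with the submultiplicativity of $\Ccal(\cdot)$ under $\opnorm{\cdot}{\infty}$ already used in the proof of Lemma~\ref{lem:bound_remainder}, together with $\kappa_\Hcal = \opnorm{\Ccal(\Hcal_{\Tcal\Tcal}^{-1})}{\infty}$. The sub-differential term contributes at most $\lambda\,\opnorm{\Ccal(\Hcal_{\Tcal\Tcal}^{-1})}{\infty}\cdot 1 = \lambda\kappa_\Hcal$, since every block of $\Zcal(\cdot)$ has Frobenius norm at most one; the sampling-error term contributes at most $\kappa_\Hcal\norm{\Ccal(\Sbb - \Sigmab^*)}_\infty$; and for the remainder term the bound $r \leq (3\kappa_{\Sigmab^*}s)^{-1}$ provided by \eqref{eq:assum:bound_delta} makes the hypothesis of Lemma~\ref{lem:bound_remainder} hold for every $\Deltab$ with $\norm{\Ccal(\Deltab)}_\infty \leq r$, so that $\norm{\Ccal(\Rb(\Deltab))}_\infty \leq \tfrac{3s}{2}\kappa_{\Sigmab^*}^3 r^2$ and hence $\norm{\Ccal(\Hcal_{\Tcal\Tcal}^{-1}\bar\Rb_\Tcal)}_\infty \leq \tfrac{3s}{2}\kappa_\Hcal\kappa_{\Sigmab^*}^3 r^2$.

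Adding the three pieces and using the definition $r = 2\kappa_\Hcal(\norm{\Ccal(\Sbb - \Sigmab^*)}_\infty + \lambda)$ gives
\[
\norm{\Ccal(F(\bar\Deltab_\Tcal))}_\infty \;\leq\; \kappa_\Hcal\bigl(\norm{\Ccal(\Sbb - \Sigmab^*)}_\infty + \lambda\bigr) + \tfrac{3s}{2}\kappa_\Hcal\kappa_{\Sigmab^*}^3 r^2 \;=\; \tfrac{r}{2} + \tfrac{3s}{2}\kappa_\Hcal\kappa_{\Sigmab^*}^3 r^2.
\]
The second branch of the minimum in \eqref{eq:assum:bound_delta}, namely $r \leq (3\kappa_\Hcal\kappa_{\Sigmab^*}^3 s)^{-1}$, is exactly what forces $\tfrac{3s}{2}\kappa_\Hcal\kappa_{\Sigmab^*}^3 r^2 \leq r/2$, whence $\norm{\Ccal(F(\bar\Deltab_\Tcal))}_\infty \leq r$ and $F(\mathbb{B}(r)) \subseteq \mathbb{B}(r)$. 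I expect the only real obstacle to be technical rather than conceptual: checking well-definedness and continuity of $F$ on $\mathbb{B}(r)$ in the presence of the set-valued $\bar\Zcal(\Omegabt+\Deltab)_\Tcal$ (handled by a continuous selection or a set-valued fixed-point theorem as in \citet{Ravikumar11}), and keeping track of the two separate constraints on $r$ — one to invoke Lemma~\ref{lem:bound_remainder}, the other to absorb the quadratic remainder — which is precisely why a minimum of two quantities appears in \eqref{eq:assum:bound_delta}.
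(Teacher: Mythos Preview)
Your proposal is correct and follows essentially the same route as the paper: both define the fixed-point map $F(\bar\Deltab_\Tcal) = -\Hcal_{\Tcal\Tcal}^{-1}\bar G(\Omegabt_\Tcal+\Deltab_\Tcal) + \bar\Deltab_\Tcal$, decompose it into the remainder piece and the sampling-plus-subgradient piece, use Lemma~\ref{lem:bound_remainder} together with the first branch of \eqref{eq:assum:bound_delta} to bound the remainder contribution by $r/2$, and use the definition of $r$ to bound the other contribution by $r/2$, yielding $F(\mathbb{B}(r))\subseteq\mathbb{B}(r)$. Your write-up is actually more explicit than the paper's about the Brouwer/uniqueness step and about why the two separate constraints on $r$ are both needed, which the paper leaves implicit.
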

\begin{proof}
  The proof follows the proof of Lemma 6 in \citet{Ravikumar11}.
  Define the ball
  \begin{equation*}
    \Bcal(r) := \{ \Ab\ :\ \Ccal(\Ab)_{ab} \leq r, \forall (a,b) \in \Tcal \},
  \end{equation*}
  the gradient mapping
  \begin{equation*}
    G(\Omegab_\Tcal) = -(\Omegab^{-1})_\Tcal + \Sbb_{\Tcal} + \lambda \Zcal(\Omegab)_{\Tcal}
  \end{equation*}
  and
  \begin{equation*}
    F(\bar\Deltab_\Tcal) =
     -\Hcal_{\Tcal\Tcal}^{-1}\bar
     G(\Omegab^{*}_{\Tcal}+\Deltab_{\Tcal})
     + \bar\Deltab_\Tcal.
  \end{equation*}
  We need to show that $F(\Bcal(r))\subseteq\Bcal(r)$, which implies
  that $\norm{\Ccal(\Deltab_\Tcal)}_\infty \leq r$.

  Under the assumptions of the lemma, for any $\Deltab_S \in \Bcal(r)$,
  we have the following decomposition
  \begin{equation*}
    F(\bar\Deltab_{\Tcal}) =
    \Hcal_{\Tcal\Tcal}^{-1}\bar\Rb(\Deltab)_\Tcal
    + \Hcal_{\Tcal\Tcal}^{-1}(\bar\Sbb_{\Tcal}-\bar\Sigmab_\Tcal^* +
    \lambda \bar\Zcal(\Omegab^*+\Deltab)_{\Tcal}).
  \end{equation*}
  Using Lemma~\ref{lem:bound_remainder}, the first term can be bounded
  as
  \begin{equation*}
  \begin{aligned}
    \norm{\Ccal(\Hcal_{\Tcal\Tcal}^{-1}\bar\Rb(\Deltab)_\Tcal)}_\infty
    & \leq \opnorm{\Ccal(\Hcal_{\Tcal\Tcal}^{-1})}{\infty}
            \norm{\Ccal(\Rb(\Deltab)}_\infty \\
    & \leq \frac{3s}{2}\kappa_{\Hcal}\kappa_{\Sigmab^*}^3\norm{\Ccal(\Deltab)}_\infty^2 \\
    & \leq \frac{3s}{2}\kappa_{\Hcal}\kappa_{\Sigmab^*}^3r^2 \\
    & \leq r/2
  \end{aligned}
  \end{equation*}
  where the last inequality follows under the assumptions. Similarly
  \begin{equation*}
  \begin{aligned}
    &\norm{\Ccal(\Hcal_{\Tcal\Tcal}^{-1}(\bar\Sbb_{\Tcal}-\bar\Sigmab_\Tcal^* +
    \lambda \bar\Zcal(\Omegab^*+\Deltab)_{\Tcal})}_\infty \\
    & \qquad\leq \opnorm{\Ccal(\Hcal_{\Tcal\Tcal}^{-1})}{\infty}
           (\norm{\Ccal(\Sbb-\Sigmab^*)}_\infty +
            \lambda\norm{\Ccal(\Zcal(\Omegab^*+\Deltab))}_\infty) \\
    & \qquad\leq \kappa_{\Hcal}(\norm{\Ccal(\Sbb-\bar\Sigmab^*)}_\infty +
    \lambda) \\
    & \qquad\leq r/2.
  \end{aligned}
  \end{equation*}
  This shows that $F(\Bcal(r))\subseteq\Bcal(r)$.
\end{proof}

The following result is a corollary of Theorem~\ref{thm:linf}, which
shows that the graph structure can be estimated consistently under
some assumptions.
\begin{corollary}
  Assume that the conditions of Theorem~\ref{thm:linf} are satisfied. 
  Furthermore, suppose that
  \begin{equation*}
    \min_{(a,b)\in\Tcal,\ a\neq b}
    \norm{\Omegab}_F > 
       2(1+8\alpha^{-1})\kappa_\Hcal
           \bar\delta_f(n, p^\tau)
  \end{equation*}
  then Algorithm~1 estimates a graph $\hat G$ which satisfies 
  \[
     \PP\rbr{\hat G \neq G} \geq 1-p^{2-\tau}.
  \]  
\end{corollary}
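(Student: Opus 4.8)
The plan is to read the corollary off from Theorem~\ref{thm:linf} by identifying the output of Algorithm~1 with the unique minimizer $\hat\Omegab$ of \eqref{eq:max_ll_opt} and then combining the element-wise $\ell_\infty$ control from \eqref{eq:linf_bound} with the minimum-signal hypothesis. By Lemma~\ref{lem:bound_eigen_value} the minimizer $\hat\Omegab$ exists and is unique, and by Lemma~\ref{lem:convergence} the iterates produced by Algorithm~1 converge to it; the graph output by the algorithm therefore has edge set $\hat E = \{(a,b)\ :\ a\neq b,\ \norm{\hat\Omegab_{ab}}_F \neq 0\}$. Consequently it suffices to show that, on an event of probability at least $1-p^{2-\tau}$, one has simultaneously $\hat\Omegab_{ab}=\zero$ for every $(a,b)\in\Ncal$ (ruling out false edges) and $\norm{\hat\Omegab_{ab}}_F>0$ for every $(a,b)\in\Tcal$ with $a\neq b$ (ruling out missed edges).

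I would then invoke Theorem~\ref{thm:linf} together with the internal structure of its proof. Under the stated hypotheses there is an event $\Ecal$ with $\PP(\Ecal)\geq 1-p^{2-\tau}$ --- concretely, the tail event $\{\norm{\Ccal(\Sbb-\Sigmab^*)}_\infty \leq \bar\delta_f(n,p^\tau)\}$ --- on which two things hold: first, the deterministic hypotheses \eqref{eq:assum:dual_feasibility} of Lemma~\ref{lem:dual_feasibility} are satisfied, so the restricted estimator $\tilde\Omegab$ of \eqref{eq:const_max_ll_opt}, which by construction has $\tilde\Omegab_{ab}=\zero$ for all $(a,b)\in\Ncal$, solves the KKT system \eqref{eq:kkt_system} and hence, by uniqueness, coincides with $\hat\Omegab$; and second, the element-wise bound \eqref{eq:linf_bound} holds, that is $\norm{\Ccal(\hat\Omegab-\Omegabt)}_\infty \leq \rho_n$ with $\rho_n := 2(1+8\alpha^{-1})\kappa_\Hcal\,\bar\delta_f(n,p^\tau)$.

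On $\Ecal$ the first fact immediately yields the absence of false positives, since $\hat\Omegab_{ab}=\zero$ for every $(a,b)\in\Ncal$. For the absence of false negatives, fix $(a,b)\in\Tcal$ with $a\neq b$; the triangle inequality for the Frobenius norm and the second fact give
\[
\norm{\hat\Omegab_{ab}}_F \;\geq\; \norm{\Omegabt_{ab}}_F - \norm{\hat\Omegab_{ab}-\Omegabt_{ab}}_F \;=\; \norm{\Omegabt_{ab}}_F - \Ccal(\hat\Omegab-\Omegabt)_{ab} \;\geq\; \norm{\Omegabt_{ab}}_F - \rho_n \;>\; 0,
\]
where the final strict inequality is exactly the minimum-signal hypothesis $\min_{(a,b)\in\Tcal,\,a\neq b}\norm{\Omegabt_{ab}}_F > \rho_n$; hence $(a,b)\in\hat E$. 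Combining the two directions gives $\hat E=E$ on $\Ecal$, so $\hat G=G$ there, and therefore $\PP(\hat G=G)\geq 1-p^{2-\tau}$ (the inequality in the corollary as displayed should of course read $\hat G = G$).

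The only step needing care is the bookkeeping that replaces the unobservable $\tilde\Omegab$ by the actual algorithmic output: one has to confirm that the randomness controlling \eqref{eq:linf_bound} is carried by the same single tail event on which the hypotheses of Lemma~\ref{lem:dual_feasibility} hold --- so that on that event $\tilde\Omegab=\hat\Omegab$ and the exact zero pattern on $\Ncal$ is genuine --- and that Lemma~\ref{lem:convergence} legitimately identifies $\hat\Omegab$ with the limit of the block-coordinate-descent iterates. Once those identifications are in place, the corollary follows from the displayed one-line estimate.
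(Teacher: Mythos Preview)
Your proposal is correct and is precisely the standard primal--dual witness argument the paper intends; in fact the paper states this result as an immediate corollary of Theorem~\ref{thm:linf} without giving a separate proof, so you have simply spelled out the implicit reasoning (including correctly flagging the $\hat G \neq G$ typo).
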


Next, we specialize the result of Theorem~\ref{thm:linf} to a case
where $\Xb$ has sub-Gaussian tails. That is, the random vector $\Xb =
(X_1, \ldots, X_{pk})^T$ is zero-mean with covariance $\Sigmab^*$. Each
$(\sigma_{aa}^*)^{-1/2}X_a$ is sub-Gaussian with parameter $\gamma$.

\begin{proposition}
\label{prop:inf_conv_subgauss}
Set the penalty parameter in $\lambda$ in Eq.~\eqref{eq:max_ll_opt} as 
\[
\lambda = 8k\alpha^{-1}
\rbr{128(1+4\gamma^2)^2(\max_a(\sigma_{aa}^*)^2)n^{-1}(2\log(2k) +
  \tau\log(p))}^{1/2}.
\]
If
\begin{equation*}
  n > C_1s^2k^2(1+8\alpha^{-1})^2(\tau\log p + \log 4 + 2\log k)
\end{equation*}
where $C_1 = (48\sqrt{2}(1+4\gamma^2)(\max_a \sigma_{aa}^*)
                \max(\kappa_{\Sigmab^*}\kappa_{\Hcal},
                \kappa_{\Sigmab^*}^3\kappa_{\Hcal}^2))^2$ 
then
\begin{equation*}
    \norm{\Ccal(\hat\Omegab-\Omegab)}_\infty
      \leq 16\sqrt{2}(1+4\gamma^2)\max_i \sigma_{ii}^*
          (1+8\alpha^{-1})\kappa_\Hcal k
          \rbr{\frac{\tau\log p + \log 4 + 2\log k}{n}}^{1/2}
\end{equation*}
with probability $1 - p^{2-\tau}$.
\end{proposition}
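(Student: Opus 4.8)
The plan is to specialize the abstract statement of Theorem~\ref{thm:linf} by exhibiting an explicit tail function $f(n,\delta)$ that verifies \eqref{eq:tail_func} under the sub-Gaussian assumption, computing its inverses $\bar\delta_f(n,p^\tau)$ and $\bar n_f(\cdot,p^\tau)$ in closed form, and then substituting these into the prescriptions of Theorem~\ref{thm:linf} for $\lambda$, for the sample-size threshold, and for the $\ell_\infty$ bound. The only step with genuine probabilistic content is a Bernstein-type concentration bound for the individual entries of the sample covariance matrix: since each $(\sigma_{aa}^*)^{-1/2}X_a$ is sub-Gaussian with parameter $\gamma$, each product $X_iX_j$ is sub-exponential, and this is precisely Lemma~1 of \citet{Ravikumar11}, which yields, for all $i,j\in\{1,\ldots,pk\}$ and all $\delta$ below a threshold of order $(1+4\gamma^2)\max_a\sigma_{aa}^*$,
\[
  \PP\rbr{\bigl|(\Sbb)_{ij}-(\Sigmab^*)_{ij}\bigr| > \delta}
  \leq 4\exp\rbr{-\frac{n\delta^2}{128(1+4\gamma^2)^2(\max_a\sigma_{aa}^*)^2}}.
\]

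Next I would lift this entrywise bound to the block-Frobenius scale on which $\Ccal(\cdot)$ operates. Each block $\Sbb_{ab}-\Sigmab^*_{ab}$ is $k\times k$, so $\norm{\Sbb_{ab}-\Sigmab^*_{ab}}_F\leq k\max_{i,j}\bigl|(\Sbb)_{ij}-(\Sigmab^*)_{ij}\bigr|$, and a union bound over the $k^2$ entries of the block gives
\[
  \PP\rbr{\Ccal(\Sbb-\Sigmab^*)_{ab}\geq\delta}
  \leq 4k^2\exp\rbr{-\frac{n\delta^2}{128k^2(1+4\gamma^2)^2(\max_a\sigma_{aa}^*)^2}},
\]
valid for $\delta\leq v_*^{-1}$ with $v_*^{-1}$ of order $k(1+4\gamma^2)\max_a\sigma_{aa}^*$. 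Thus $f(n,\delta)=(4k^2)^{-1}\exp\bigl(n\delta^2/(128k^2(1+4\gamma^2)^2(\max_a\sigma_{aa}^*)^2)\bigr)$. Solving $f(n,\delta)=p^\tau$ for $\delta$ and using $\log(4k^2p^\tau)=\tau\log p+\log 4+2\log k=\tau\log p+2\log(2k)$ gives
\[
  \bar\delta_f(n,p^\tau)=8\sqrt{2}\,(1+4\gamma^2)(\max_a\sigma_{aa}^*)\,k\,
    \rbr{\frac{\tau\log p+\log 4+2\log k}{n}}^{1/2},
\]
and solving the same equation for $n$ gives $\bar n_f(\delta,p^\tau)=128k^2(1+4\gamma^2)^2(\max_a\sigma_{aa}^*)^2(\tau\log p+\log 4+2\log k)\,\delta^{-2}$.

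Finally I would substitute these closed forms into Theorem~\ref{thm:linf}. Setting $\lambda=8\alpha^{-1}\bar\delta_f(n,p^\tau)$ reproduces the penalty stated in the proposition, and inserting $\bar\delta_f(n,p^\tau)$ into \eqref{eq:linf_bound} while pulling the prefactor $2(1+8\alpha^{-1})\kappa_\Hcal$ through gives the claimed rate with constant $16\sqrt{2}(1+4\gamma^2)(\max_a\sigma_{aa}^*)(1+8\alpha^{-1})\kappa_\Hcal k$. For the sample-size requirement, \eqref{eq:sample_size_general} asks for $n>\bar n_f(\delta^\dagger,p^\tau)$ with $\delta^\dagger=\min\bigl(v_*^{-1},\,(6(1+8\alpha^{-1})s\max(\kappa_{\Sigmab^*}\kappa_\Hcal,\kappa_{\Sigmab^*}^3\kappa_\Hcal^2))^{-1}\bigr)$; in the regime of interest the second term attains the minimum, so $(\delta^\dagger)^{-2}$ contributes the factor $36(1+8\alpha^{-1})^2s^2\max(\kappa_{\Sigmab^*}\kappa_\Hcal,\kappa_{\Sigmab^*}^3\kappa_\Hcal^2)^2$, and combining $128\cdot 36=(48\sqrt{2})^2$ with the leading $k^2$ yields exactly $n>C_1s^2k^2(1+8\alpha^{-1})^2(\tau\log p+\log 4+2\log k)$ with $C_1$ as stated. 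The success probability $1-p^{2-\tau}$ is inherited verbatim from Theorem~\ref{thm:linf}.

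The only non-routine ingredient here is the sub-exponential tail bound for $(\Sbb)_{ij}-(\Sigmab^*)_{ij}$, which is borrowed unchanged from \citet{Ravikumar11}; the remaining effort is bookkeeping of the two extra factors of $k$ that the block structure introduces — one from $\norm{\cdot}_F\leq k\max|\cdot|$, which rescales the deviation $\delta$, and one from the $k^2$-term union bound, which enters only logarithmically as the $2\log k$ and $\log 4$ corrections — together with verifying that the minimum in \eqref{eq:sample_size_general} is attained at the term producing $C_1$.
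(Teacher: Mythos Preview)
Your proposal is correct and follows essentially the same route as the paper: derive an entrywise sub-Gaussian tail bound for $\Sbb-\Sigmab^*$ (borrowed from \citet{Ravikumar11}), lift it to the block Frobenius scale via $\norm{\cdot}_F\leq k\max|\cdot|$ plus a $k^2$-term union bound to obtain $f(n,\delta)=(4k^2)^{-1}\exp(n\delta^2/(c_*k^2))$, and then substitute the resulting $\bar\delta_f$ and $\bar n_f$ into Theorem~\ref{thm:linf}. Your bookkeeping of constants---in particular the identification $128\cdot36=(48\sqrt{2})^2$ and the tracking of the two factors of $k$---matches the paper's computation exactly.
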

The proof simply follows by observing that, for any $(a,b)$,
\begin{equation}
\begin{aligned}
\PP\rbr{\Ccal(\Sbb - \Sigmab^*)_{ab} > \delta}
&\leq \PP\rbr{\max_{(c,d) \in (a,b)} (\sigma_{cd} - \sigma_{cd}^*)^2 > \delta^2/k^2}\\
&\leq k^2\PP\rbr{|\sigma_{cd} - \sigma_{cd}^*| > \delta/k}\\
&\leq 4k^2\exp\left(-\frac{n\delta^2}{c_*k^2}\right)
\end{aligned}
\end{equation}
for all $\delta\in(0, 8(1+4\gamma^2)(\max_a \sigma_{aa}^*))$ with
$c_* = 128(1+4\gamma^2)^2(\max_a(\sigma_{aa}^*)^2)$.
Therefore,
\begin{align*}
  f(n,\delta) &= \frac{1}{4k^2}\exp(c_*\frac{n\delta^2}{k^2})\\
  \bar n_f(\delta;r) &= \frac{k^2\log(4k^2r)}{c_*\delta^2} \\
  \bar \delta_f(r;n) &= \rbr{ \frac{k^2\log(4k^2r)}{c_*n} }^{1/2}.
\end{align*}
Theorem~\ref{thm:linf} and some simple algebra complete the proof.

Proposition~\ref{prop:sparsistency} is a simple conseqeuence of
Proposition~\ref{prop:inf_conv_subgauss}.

\subsection{Some Results on Norms of Block Matrices}

Let $\Tcal$ be a partition of $V$. Throughout this section, we
assume that matrices $\Ab, \Bb \in \RR^{p \times p}$ and a vector $\bb
\in \RR^{p}$ are partitioned into blocks according to $\Tcal$.
\begin{lemma}
  \begin{equation}
    \label{eq:martix_fro_linf_opnorm}
    \max_{a \in \Tcal} \norm{\Ab_{a\cdot}\bb}_2
      \leq
      \max_{a\in\Tcal}\sum_{b \in \Tcal}\norm{\Ab_{ab}}_F
      \max_{c\in\Tcal}\norm{\bb_c}_2.
  \end{equation}
\end{lemma}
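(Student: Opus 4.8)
The plan is to prove \eqref{eq:martix_fro_linf_opnorm} directly from the triangle inequality together with the elementary bound $\norm{M\xb}_2 \le \norm{M}_F\,\norm{\xb}_2$, which holds for any matrix $M$ and conformable vector $\xb$ (apply Cauchy--Schwarz to each row of $M$ and sum the squares of the resulting inner products).

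First I would fix an arbitrary block index $a \in \Tcal$ and write the $a$-th block of the product as $\Ab_{a\cdot}\bb = \sum_{b\in\Tcal}\Ab_{ab}\bb_b$, which is legitimate since the partition of $\bb$ is compatible with the column partition of $\Ab$. The triangle inequality then gives
\[
\norm{\Ab_{a\cdot}\bb}_2 \;\le\; \sum_{b\in\Tcal}\norm{\Ab_{ab}\bb_b}_2 \;\le\; \sum_{b\in\Tcal}\norm{\Ab_{ab}}_F\,\norm{\bb_b}_2,
\]
using the elementary bound on each summand. Replacing $\norm{\bb_b}_2$ by $\max_{c\in\Tcal}\norm{\bb_c}_2$ and pulling this factor out of the sum yields $\norm{\Ab_{a\cdot}\bb}_2 \le \bigl(\sum_{b\in\Tcal}\norm{\Ab_{ab}}_F\bigr)\max_{c\in\Tcal}\norm{\bb_c}_2$.

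Finally, I would take the maximum over $a\in\Tcal$ on both sides and bound $\sum_{b\in\Tcal}\norm{\Ab_{ab}}_F \le \max_{a\in\Tcal}\sum_{b\in\Tcal}\norm{\Ab_{ab}}_F$, which is exactly \eqref{eq:martix_fro_linf_opnorm}. There is no genuine obstacle here: the argument is a short chain of the triangle inequality and the bound $\norm{M\xb}_2 \le \norm{M}_F\norm{\xb}_2$. The only point meriting care is the bookkeeping of the block partition --- in particular remembering that $\Ab_{a\cdot}\bb$ denotes the $a$-th \emph{block} of the vector $\Ab\bb$ rather than a scalar, so that the outer norm on the left is the Euclidean norm on the block indexed by $a$.
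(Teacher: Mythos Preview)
Your proposal is correct and follows essentially the same approach as the paper: fix $a$, apply the triangle inequality to $\sum_{b}\Ab_{ab}\bb_b$, bound each term by $\norm{\Ab_{ab}}_F\norm{\bb_b}_2$, pull out $\max_c\norm{\bb_c}_2$, and maximize over $a$. The only cosmetic difference is that the paper spells out the inequality $\norm{M\xb}_2\le\norm{M}_F\norm{\xb}_2$ by summing the squared Cauchy--Schwarz bounds over the rows $i\in a$, whereas you invoke it directly.
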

\begin{proof}
  For any $a \in \Tcal$,
  \begin{equation*}
    \begin{aligned}
      \norm{\Ab_{a\cdot}\bb}_2
      & \leq \sum_{b \in \Tcal} \norm{\Ab_{ab}\bb_b}_2 \\
      & = \sum_{b \in \Tcal} \rbr{ \sum_{i\in a} (\Ab_{ib}\bb_b)^2 }^{1/2}  \\
      & \leq \sum_{b \in \Tcal} \rbr{ \sum_{i\in a} \norm{\Ab_{ib}}_2^2\norm{\bb_b}_2^2 }^{1/2}  \\
      & \leq \sum_{b \in \Tcal} \rbr{ \sum_{i\in a} \norm{\Ab_{ib}}_2^2 }^{1/2}
             \max_{c \in \Tcal} \norm{\bb_c}_2 \\
      & = \sum_{b \in \Tcal} \norm{\Ab_{ab}}_F \max_{c \in \Tcal} \norm{\bb_c}_2.
    \end{aligned}
  \end{equation*}
\end{proof}

\begin{lemma}
  \label{lem:submultiplicity_fro_linf_opnorm}
  \begin{equation}
    \label{eq:submultiplicity_fro_linf_opnorm}
    \opnorm{\Ccal(\Ab\Bb)}{\infty} \leq
      \opnorm{\Ccal(\Bb)}{\infty}
      \opnorm{\Ccal(\Ab)}{\infty}.
  \end{equation}
\end{lemma}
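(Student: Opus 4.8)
The plan is to reduce the claim to the submultiplicativity of the ordinary $\ell_\infty$ operator norm, by first establishing that $\Ccal(\Ab\Bb)$ is dominated entrywise by the matrix product $\Ccal(\Ab)\Ccal(\Bb)$.

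First I would fix two block indices $a$ and $c$ and write $(\Ab\Bb)_{ac} = \sum_b \Ab_{ab}\Bb_{bc}$, where the sum runs over the blocks of the partition $\Tcal$. Applying the triangle inequality for the Frobenius norm and then the submultiplicative bound $\norm{\Xb\Yb}_F \leq \norm{\Xb}_F\norm{\Yb}_F$ (valid for rectangular blocks as well, since $\norm{\Xb\Yb}_F \leq \opnorm{\Xb}{2}\norm{\Yb}_F \leq \norm{\Xb}_F\norm{\Yb}_F$), one obtains
\[
\Ccal(\Ab\Bb)_{ac} = \Bigl\|\sum_b \Ab_{ab}\Bb_{bc}\Bigr\|_F \leq \sum_b \norm{\Ab_{ab}}_F\,\norm{\Bb_{bc}}_F = \bigl(\Ccal(\Ab)\Ccal(\Bb)\bigr)_{ac}.
\]

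Since the entries of $\Ccal(\Ab)$ and $\Ccal(\Bb)$, and hence of their product, are all nonnegative, and since $\opnorm{\Mb}{\infty} = \max_i \sum_j |\Mb_{ij}|$ is monotone with respect to the entrywise partial order on nonnegative matrices, the displayed inequality gives $\opnorm{\Ccal(\Ab\Bb)}{\infty} \leq \opnorm{\Ccal(\Ab)\Ccal(\Bb)}{\infty}$. Finally, the ordinary matrix norm $\opnorm{\cdot}{\infty}$ is submultiplicative, so $\opnorm{\Ccal(\Ab)\Ccal(\Bb)}{\infty} \leq \opnorm{\Ccal(\Ab)}{\infty}\opnorm{\Ccal(\Bb)}{\infty}$; chaining the two inequalities and commuting the scalar factors yields the claim. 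There is no genuine obstacle here; the only steps requiring a moment's care are the Frobenius submultiplicativity bound on the individual blocks and the nonnegativity of $\Ccal(\cdot)$ that licenses the monotonicity of $\opnorm{\cdot}{\infty}$, both of which are elementary.
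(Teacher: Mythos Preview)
Your proof is correct and is essentially the same argument as the paper's: both hinge on the entrywise bound $\Ccal(\Ab\Bb)_{ac}\leq\sum_b\norm{\Ab_{ab}}_F\norm{\Bb_{bc}}_F$ via the triangle inequality and Frobenius submultiplicativity, after which the paper directly sums over $c$ and takes the max over $a$ while you phrase the same step as monotonicity plus submultiplicativity of $\opnorm{\cdot}{\infty}$.
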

\begin{proof}
  Let $\Cb = \Ab\Bb$ and let $\Tcal$ be a partition of $V$.
  \begin{equation*}
  \begin{aligned}
    \opnorm{\Ccal(\Ab\Bb)}{\infty}
    &= \max_{a\in\Tcal} \sum_{b\in\Tcal}\norm{\Cb_{ab}}_F \\
    &\leq \max_{a \in \Tcal} \sum_b\sum_c\norm{\Ab_{ac}}_F\norm{\Bb_{cb}}_F \\
    &\leq \{\max_{a \in \Tcal} \sum_c\norm{\Ab_{ac}}_F\}
          \{\max_{c \in \Tcal} \sum_b\norm{\Bb_{cb}}_F\} \\
    &=       \opnorm{\Ccal(\Ab)}{\infty} \opnorm{\Ccal(\Bb)}{\infty}.
  \end{aligned}
  \end{equation*}
\end{proof}

\begin{lemma}
\begin{equation}
  \label{eq:max_fro_norm}
  \norm{\Ccal(\Ab\Bb)}_\infty
    \leq \norm{\Ccal(\Ab)}_\infty \opnorm{\Ccal(\Bb)^T}{\infty}.
\end{equation}
\end{lemma}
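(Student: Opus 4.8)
The plan is to establish the inequality blockwise and then take a maximum. Since $\norm{\Ccal(\Ab\Bb)}_\infty = \max_{a,b}\norm{(\Ab\Bb)_{ab}}_F$, it suffices to show that for each pair of blocks $(a,b)$ one has $\norm{(\Ab\Bb)_{ab}}_F \leq \norm{\Ccal(\Ab)}_\infty \opnorm{\Ccal(\Bb)^T}{\infty}$.

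First I would expand the block product as $(\Ab\Bb)_{ab} = \sum_{c}\Ab_{ac}\Bb_{cb}$, the sum running over the blocks $c$ of the partition $\Tcal$, and apply the triangle inequality for the Frobenius norm to get $\norm{(\Ab\Bb)_{ab}}_F \leq \sum_c \norm{\Ab_{ac}\Bb_{cb}}_F$. For each summand I would invoke sub-multiplicativity of the Frobenius norm, $\norm{\Ab_{ac}\Bb_{cb}}_F \leq \norm{\Ab_{ac}}_F\norm{\Bb_{cb}}_F$, which yields $\norm{(\Ab\Bb)_{ab}}_F \leq \sum_c \norm{\Ab_{ac}}_F\norm{\Bb_{cb}}_F$. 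I would then bound each factor $\norm{\Ab_{ac}}_F$ by the elementwise maximum $\norm{\Ccal(\Ab)}_\infty = \max_{c,d}\norm{\Ab_{cd}}_F$ and pull it out of the sum, leaving $\sum_c \norm{\Bb_{cb}}_F$. This last quantity is the $b$-th column sum of $\Ccal(\Bb)$, i.e. the $b$-th row sum of $\Ccal(\Bb)^T$, hence at most $\max_b \sum_c \Ccal(\Bb)^T_{bc} = \opnorm{\Ccal(\Bb)^T}{\infty}$. Combining the two bounds and maximizing over $(a,b)$ gives the claim.

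There is no serious obstacle; the argument is essentially the bookkeeping already used in the proof of Lemma~\ref{lem:submultiplicity_fro_linf_opnorm}, with one of the two $\opnorm{\cdot}{\infty}$ factors there replaced here by an elementwise maximum, reflecting the fact that the left-hand side in the present statement is an elementwise $\infty$-norm rather than an operator $\infty$-norm. The only point requiring attention is to route the transpose correctly, so that the column sum of $\Ccal(\Bb)$ is matched with $\opnorm{\Ccal(\Bb)^T}{\infty}$ rather than with $\opnorm{\Ccal(\Bb)}{\infty}$.
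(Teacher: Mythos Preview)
Your proof is correct and follows essentially the same route as the paper's: expand the $(a,b)$ block as $\sum_c \Ab_{ac}\Bb_{cb}$, apply the triangle inequality and Frobenius sub-multiplicativity, pull out $\max_{a,c}\norm{\Ab_{ac}}_F$, recognize the remaining $\sum_c\norm{\Bb_{cb}}_F$ as a column sum of $\Ccal(\Bb)$ (hence a row sum of $\Ccal(\Bb)^T$), and maximize over $(a,b)$. The only cosmetic difference is that the paper first bounds by $\max_c\norm{\Ab_{ac}}_F$ for fixed $a$ and then maximizes over $a$ at the end, which amounts to the same thing.
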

\begin{proof}
  For a fixed $a$ and $b$,
  \begin{equation*}
  \begin{aligned}
   \Ccal(\Ab\Bb)_{ab} & = \norm{\sum_c \Ab_{ac}\Bb_{cb}}_F \\
   & \leq \sum_c \norm{\Ab_{ac}}_F\norm{\Bb_cb}_F \\
   & \leq \max_{c} \norm{\Ab_{ac}} \sum_c \norm{\Bb_{cb}}_F.
  \end{aligned}
  \end{equation*}
  Maximizing over $a$ and $b$ gives the result.
\end{proof}

\section{Additional Information About Functional Brain Networks}
\label{appendix:d}

Table~\ref{tab:avoi} contains list of the names of the brain
regions. The number before each region is used to index the node in
the connectivity models. Figures~\ref{fig:normal_adj},
\ref{fig:mci_adj} and \ref{fig:ad_adj} contain adjacency
matrices for the estimated graph structures.

\begin{figure}[h]
  \centering
  \includegraphics[width=0.65\textwidth]{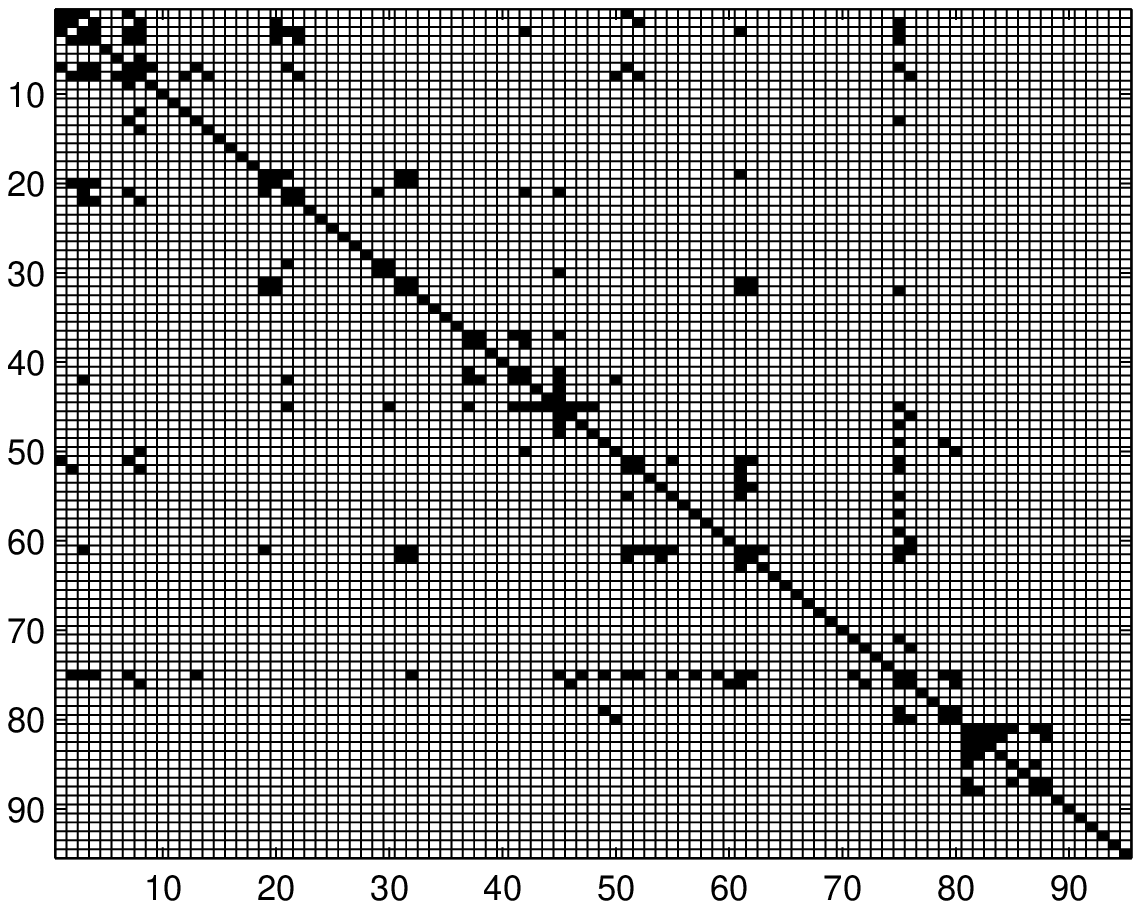}  
  \caption{Adjacency matrix for the brain connectivity network:
        healthy subjects}
  \label{fig:normal_adj}
\end{figure}

\begin{figure}[h]
  \centering
  \includegraphics[width=0.65\textwidth]{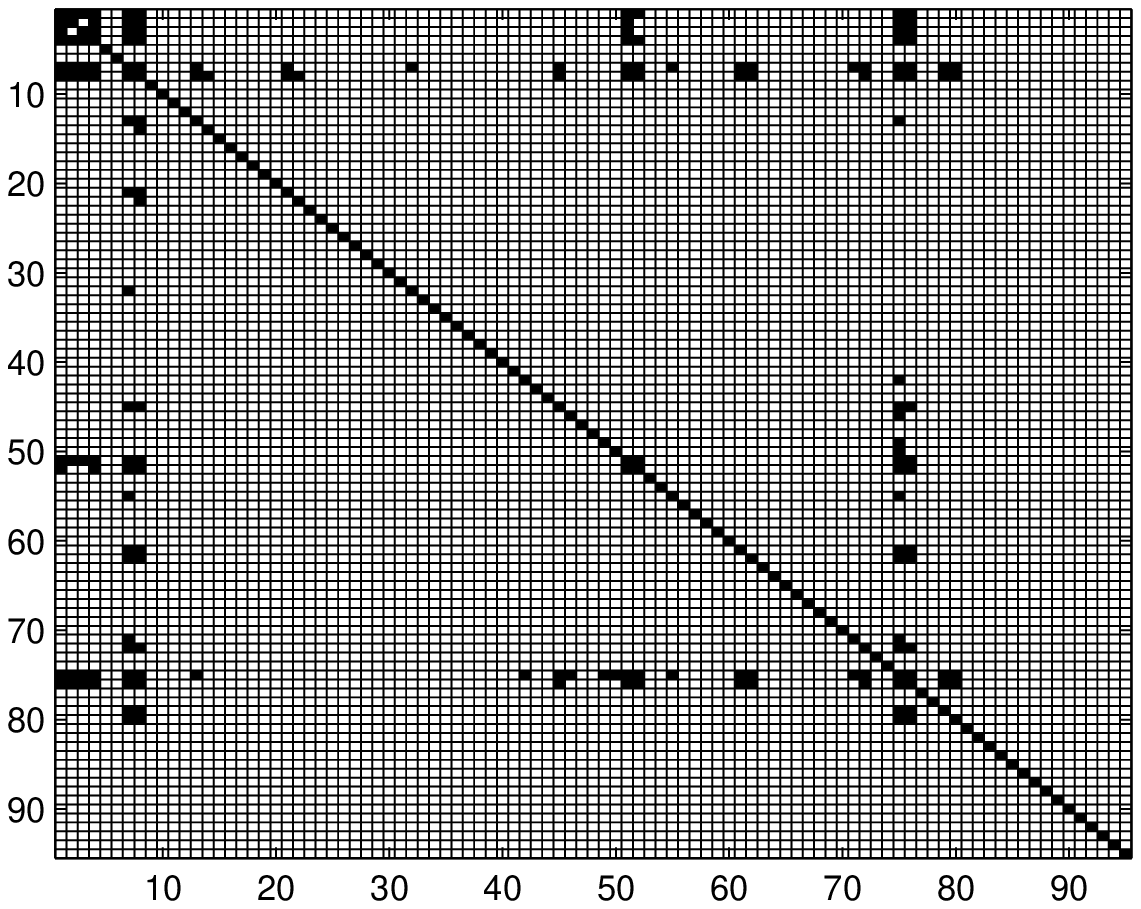}  
  \caption{Adjacency matrix for the brain connectivity network:
        Mild Cognitive Impairment}
  \label{fig:mci_adj}
\end{figure}

\begin{figure}[h]
  \centering
  \includegraphics[width=0.65\textwidth]{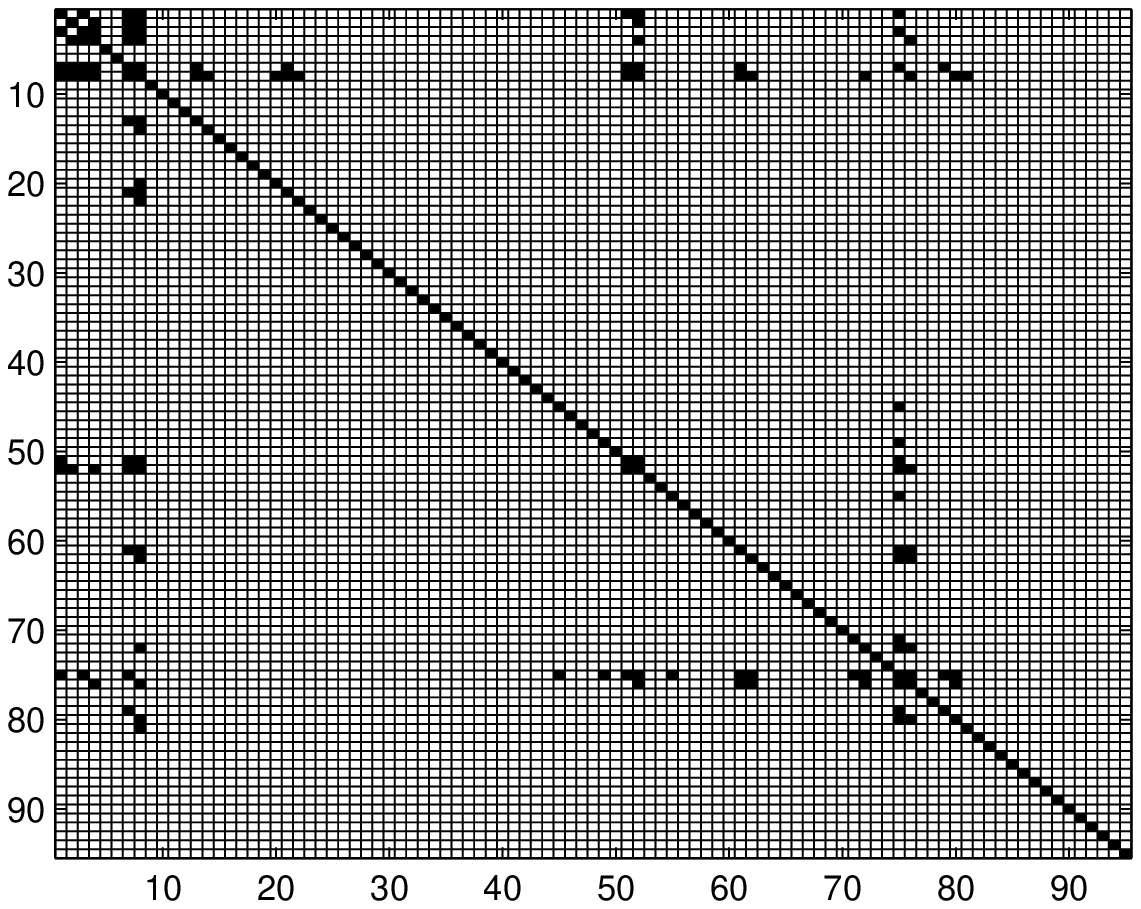}  
  \caption{Adjacency matrix for the brain connectivity network:
        Alzheimer's \& Dementia}
  \label{fig:ad_adj}
\end{figure}

\begin{table}[p]

 \caption{Names of the brain regions. “L” means that the brain region 
is located at the left hemisphere; “R” means right hemisphere.}
{\footnotesize
 \begin{tabular}{cc@{\hskip 2cm}cc}
$1$ & \verb|Precentral_L| & $49$  & \verb|Fusiform_L| \\ 
$2$ & \verb|Precentral_R| & $50$  & \verb|Fusiform_R| \\ 
$3$ & \verb|Frontal_Sup_L| & $51$  & \verb|Postcentral_L| \\ 
$4$ & \verb|Frontal_Sup_R| & $52$  & \verb|Postcentral_R| \\ 
$5$ & \verb|Frontal_Sup_Orb_L| & $53$  & \verb|Parietal_Sup_L| \\ 
$6$ & \verb|Frontal_Sup_Orb_R| & $54$  & \verb|Parietal_Sup_R| \\ 
$7$ & \verb|Frontal_Mid_L| & $55$  & \verb|Parietal_Inf_L| \\ 
$8$ & \verb|Frontal_Mid_R| & $56$  & \verb|Parietal_Inf_R| \\ 
$9$ & \verb|Frontal_Mid_Orb_L| & $57$  & \verb|SupraMarginal_L| \\ 
$10$ & \verb|Frontal_Mid_Orb_R| & $58$  & \verb|SupraMarginal_R| \\ 
$11$ & \verb|Frontal_Inf_Oper_L| & $59$  & \verb|Angular_L| \\ 
$12$ & \verb|Frontal_Inf_Oper_R| & $60$  & \verb|Angular_R| \\ 
$13$ & \verb|Frontal_Inf_Tri_L| & $61$  & \verb|Precuneus_L| \\ 
$14$ & \verb|Frontal_Inf_Tri_R| & $62$  & \verb|Precuneus_R| \\ 
$15$ & \verb|Frontal_Inf_Orb_L| & $63$  & \verb|Paracentral_Lobule_L| \\ 
$16$ & \verb|Frontal_Inf_Orb_R| & $64$  & \verb|Paracentral_Lobule_R| \\ 
$17$ & \verb|Rolandic_Oper_L| & $65$  & \verb|Caudate_L| \\ 
$18$ & \verb|Rolandic_Oper_R| & $66$  & \verb|Caudate_R| \\ 
$19$ & \verb|Supp_Motor_Area_L| & $67$  & \verb|Putamen_L| \\ 
$20$ & \verb|Supp_Motor_Area_R| & $68$  & \verb|Putamen_R| \\ 
$21$ & \verb|Frontal_Sup_Medial_L| & $69$  & \verb|Thalamus_L| \\ 
$22$ & \verb|Frontal_Sup_Medial_R| & $70$  & \verb|Thalamus_R| \\ 
$23$ & \verb|Frontal_Med_Orb_L| & $71$  & \verb|Temporal_Sup_L| \\ 
$24$ & \verb|Frontal_Med_Orb_R| & $72$  & \verb|Temporal_Sup_R| \\ 
$25$ & \verb|Rectus_L| & $73$  & \verb|Temporal_Pole_Sup_L| \\ 
$26$ & \verb|Rectus_R| & $74$  & \verb|Temporal_Pole_Sup_R| \\ 
$27$ & \verb|Insula_L| & $75$  & \verb|Temporal_Mid_L| \\ 
$28$ & \verb|Insula_R| & $76$  & \verb|Temporal_Mid_R| \\ 
$29$ & \verb|Cingulum_Ant_L| & $77$  & \verb|Temporal_Pole_Mid_L| \\ 
$30$ & \verb|Cingulum_Ant_R| & $78$  & \verb|Temporal_Pole_Mid_R| \\ 
$31$ & \verb|Cingulum_Mid_L| & $79$  & \verb|Temporal_Inf_L| \\ 
$32$ & \verb|Cingulum_Mid_R| & $80$  & \verb|Temporal_Inf_R| \\ 
$33$ & \verb|Hippocampus_L| & $81$  & \verb|Cerebelum_Crus1_L| \\ 
$34$ & \verb|Hippocampus_R| & $82$  & \verb|Cerebelum_Crus1_R| \\ 
$35$ & \verb|ParaHippocampal_L| & $83$  & \verb|Cerebelum_Crus2_L| \\ 
$36$ & \verb|ParaHippocampal_R| & $84$  & \verb|Cerebelum_Crus2_R| \\ 
$37$ & \verb|Calcarine_L| & $85$  & \verb|Cerebelum_4_5_L| \\ 
$38$ & \verb|Calcarine_R| & $86$  & \verb|Cerebelum_4_5_R| \\ 
$39$ & \verb|Cuneus_L| & $87$  & \verb|Cerebelum_6_L| \\ 
$40$ & \verb|Cuneus_R| & $88$  & \verb|Cerebelum_6_R| \\ 
$41$ & \verb|Lingual_L| & $89$  & \verb|Cerebelum_7b_L| \\ 
$42$ & \verb|Lingual_R| & $90$  & \verb|Cerebelum_7b_R| \\ 
$43$ & \verb|Occipital_Sup_L| & $91$  & \verb|Cerebelum_8_L| \\ 
$44$ & \verb|Occipital_Sup_R| & $92$  & \verb|Cerebelum_8_R| \\ 
$45$ & \verb|Occipital_Mid_L| & $93$  & \verb|Cerebelum_9_L| \\ 
$46$ & \verb|Occipital_Mid_R| & $94$  & \verb|Cerebelum_9_R| \\ 
$47$ & \verb|Occipital_Inf_L| & $95$  & \verb|Vermis_4_5| \\ 
$48$ & \verb|Occipital_Inf_R| & & \\ 

 \end{tabular}
}
\label{tab:avoi}
\end{table}

\end{document}